\newcommand{\assign}{\leftarrow}
\providecommand{\ignore}[1]{}
\newcommand{\ce}{\texttt{ce}\xspace}
\newcommand{\ORT}{\textbf{ORT}\xspace}
\newcommand{\Text}[1][]{\mathbf{Txt}^{#1}}
\newcommand{\Gold}{\mathbf{G}}
\newcommand{\Sd}{\mathbf{Sd}}
\newcommand{\Psd}{\mathbf{Psd}}
\newcommand{\Ex}{\mathbf{Ex}}
\newcommand{\Conv}{\mathbf{Conv}}
\newcommand{\Caut}{\mathbf{Caut}}
\newcommand{\NU}{\mathbf{NU}}
\newcommand{\SNU}{\mathbf{SNU}}
\newcommand{\SynDec}{\mathbf{SynDec}}
\newcommand{\Dec}{\mathbf{Dec}}
\newcommand{\SDec}{\mathbf{SDec}}
\newcommand{\WMon}{\mathbf{WMon}}
\newcommand{\Mon}{\mathbf{Mon}}
\newcommand{\SMon}{\mathbf{SMon}}
\newcommand{\natnum}{\mathbb{N}}
\newcommand{\seq}{{\mathbb{S}\mathrm{eq}}}
\newcommand{\set}[2]{\{#1 \; | \; #2\}}
\newcommand{\pad}{\mathrm{pad}}
\newcommand{\dom}{\mathrm{dom}}
\newcommand{\range}{\mathrm{range}}
\newcommand{\content}{\mathrm{content}}
\newcommand{\ind}{\mathrm{ind}}
\newcommand{\converges}{\mathclose{\hbox{$\downarrow$}}}
\newcommand{\diverges}{\mathclose{\hbox{$\uparrow$}}}
\newcommand{\sqr}[2]{{\vcenter{\hrule height.#2pt
        \hbox{\vrule width.#2pt height#1pt \kern#1pt
           \vrule width.#2pt}
        \hrule height.#2pt}}}
\newcommand{\Qed}[1]{{\nobreak\hfil\penalty50
  \hskip1em\hbox{}\nobreak\hfil$\sqr{8}{8}$\enspace\sc #1
  \parfillskip=0pt \finalhyphendemerits=0 \par}
  \bigskip}
\newcommand{\QedFor}[1]{\Qed{(for~#1)}}
\newcommand{\QedForClaim}[1][]{\QedFor{~Claim#1}}
\renewenvironment{proof}{
\noindent
\noindent\emph{Proof.}\enspace}{\Qed{}}
\newcommand{\CalC}{\mathcal{C}}
\newcommand{\CalE}{\mathcal{E}}
\newcommand{\CalL}{\mathcal{L}}
\newcommand{\CalP}{\mathcal{P}}
\newcommand{\CalR}{\mathcal{R}}
\newcommand{\FrakP}{\mathfrak{P}}
\newcommand{\FrakR}{\mathfrak{R}}
\spnewtheorem{thm}{Theorem}{\bf }{\rm }
\spnewtheorem{prop}[thm]{Proposition}{\bf }{\rm }
\spnewtheorem{cor}[thm]{Corollary}{\bf }{\rm }
\spnewtheorem{lem}[thm]{Lemma}{\bf }{\rm }
\spnewtheorem{defn}[thm]{Definition}{\bf }{\rm }
\spnewtheorem{rem}[thm]{Remark}{\bf }{\rm }
\spnewtheorem{asm}[thm]{Assumption}{\bf }{\rm }
\spnewtheorem{notation}[thm]{Notation}{\bf }{\rm }
\spnewtheorem{ex}[thm]{Example}{\bf }{\rm }
\spnewtheorem{qu}[thm]{Question}{\bf }{\rm }
\spnewtheorem{tobedone}[thm]{To Be Done}{\bf }{\rm }
\newcounter{claimCounter}[thm]
\renewenvironment{claim}{
\vspace{3mm}\noindent\refstepcounter{claimCounter}\emph{Claim~\theclaimCounter.}\enspace}{}
\newenvironment{claimProof}{
\noindent
\emph{Proof of Claim~\theclaimCounter.}\enspace}{\QedForClaim[~\theclaimCounter]}
\def\makeinnocent#1{\catcode`#1=12 }
\def\csarg#1#2{\expandafter#1\csname#2\endcsname}
\def\ThrowAwayComment#1{\begingroup
    \def\CurrentComment{#1}    \let\do\makeinnocent \dospecials
    \makeinnocent\^^L    \endlinechar`\^^M \catcode`\^^M=12 \xComment}
{\catcode`\^^M=12 \endlinechar=-1  \gdef\xComment#1^^M{\def\test{#1}
      \csarg\ifx{PlainEnd\CurrentComment Test}\test
          \let\next\endgroup
      \else \csarg\ifx{LaLaEnd\CurrentComment Test}\test
            \edef\next{\endgroup\noexpand\end{\CurrentComment}}
      \else \let\next\xComment
      \fi \fi \next}
}
{\escapechar=-1\relax
	\csarg\xdef{PlainEndproofTest}{\string\\endproof}	\csarg\xdef{LaLaEndproofTest}{\string\\end\string\{proof\string\}}}
\def \proof {\ifthenelse{\boolean{useproof}}{
\noindent
\noindent\emph{Proof.}\enspace
}{\ThrowAwayComment{proof}}}
\def \endproof {\ifthenelse{\boolean{useproof}}{\Qed{}}{}}
\title{A Map of Update Constraints\\ in Inductive Inference}
\author{Timo K\"otzing \and Raphaela Palenta}
\institute{
Friedrich-Schiller-Universit{\"a}t Jena, Germany\\
\email{\{timo.koetzing,raphaela-julia.palenta\}@uni-jena.de}
}
\begin{document}

\newcommand{\SemRestr}{\mathrm{Sem}}
\newcommand{\McRestr}{\mathrm{Mc}}

\newcommand{\id}{\mathrm{id}}
\newcommand{\findWitness}{\texttt{findWitness}}

\newcommand{\emptysequence}{\lambda}

\maketitle

\setboolean{useproof}{true}

\begin{abstract}
We investigate how different learning restrictions reduce learning power and how the different restrictions relate to one another. We give a complete map for nine different restrictions both for the cases of complete information learning and set-driven learning. This completes the picture for these well-studied \emph{delayable} learning restrictions. A further insight is gained by different characterizations of \emph{conservative} learning in terms of variants of \emph{cautious} learning.

Our analyses greatly benefit from general theorems we give, for example showing that learners with exclusively delayable restrictions can always be assumed total.
\end{abstract}

\section{Introduction}
\setboolean{useproof}{true}

\label{sec:Introduction}

This paper is set in the framework of \emph{inductive inference}, a branch of (algorithmic) learning theory. This branch analyzes the problem of algorithmically learning a description for a formal language (a computably enumerable subset of the set of natural numbers) when presented successively all and only the elements of that language. For example, a learner $h$ might be presented more and more even numbers. After each new number, $h$ outputs a description for a language as its conjecture. The learner $h$ might decide to output a program for the set of all multiples of $4$, as long as all numbers presented are divisible by~$4$. Later, when $h$ sees an even number not divisible by $4$, it might change this guess to a program for the set of all multiples of~$2$.

Many criteria for deciding whether a learner $h$ is \emph{successful} on a language~$L$ have been proposed in the literature. Gold, in his seminal paper \cite{Gol:j:67}, gave a first,
simple learning criterion, \emph{$\Text\Gold\Ex$-learning}\footnote{$\Text$ stands for learning from a \emph{text} of positive examples; $\Gold$ stands for Gold, who introduced this model, and is used to to indicate full-information learning; $\Ex$ stands for \emph{explanatory}.}, where a learner is \emph{successful} iff, on every \emph{text} for $L$ (listing of all and only the elements of $L$) it eventually stops changing its conjectures, and its final conjecture is a correct description for the input sequence.
Trivially, each single, describable language $L$ has a suitable constant function as a $\Text\Gold\Ex$-learner (this learner constantly outputs a description for $L$). Thus, we are interested in analyzing for which \emph{classes of languages} $\CalL$ there is a \emph{single learner} $h$ learning \emph{each} member of $\CalL$. This framework is also sometimes known as \emph{language learning in the limit} and has been studied extensively, using a wide range of learning criteria similar to $\Text\Gold\Ex$-learning (see, for example, the textbook \cite{Jai-Osh-Roy-Sha:b:99:stl2}).

A wealth of learning criteria can be derived from $\Text\Gold\Ex$-learning by adding restrictions on the intermediate conjectures and how they should relate to each other and the data. For example, one could require that a conjecture which is consistent with the data must not be changed; this is known as \emph{conservative} learning and known to restrict what classes of languages can be learned (\cite{Ang:j:80:lang-pos-data}, we use $\Conv$ to denote the restriction of conservative learning). Additionally to conservative learning, the following learning restrictions are considered in this paper (see Section~\ref{sec:LearningCriteria} for a formal definition of learning criteria including these learning restrictions).

In \emph{cautious} learning ($\Caut$, \cite{Osh-Sto-Wei:j:82:strategies}) the learner is not allowed to ever give a conjecture for a strict subset of a previously conjectured set. In \emph{non-U-shaped} learning ($\NU$, \cite{Bal-Cas-Mer-Ste-Wie:j:08}) a learner may never \emph{semantically} abandon a correct conjecture; in \emph{strongly non-U-shaped} learning ($\SNU$, \cite{Cas-Moe:j:11:optLan}) not even syntactic changes are allowed after giving a correct conjecture.

In \emph{decisive} learning ($\Dec$, \cite{Osh-Sto-Wei:j:82:strategies}), a learner may never (semantically) return to a \emph{semantically} abandoned conjecture; in \emph{strongly decisive} learning ($\SDec$, \cite{Koe:c:14:stacs}) the learner may not even (semantically) return to \emph{syntactically} abandoned conjectures. Finally, a number of monotonicity requirements are studied (\cite{Jan:j:91,Wie:c:91,Lan-Zeu:c:93}): in \emph{strongly monotone} learning ($\SMon$) the conjectured sets may only grow; in \emph{monotone} learning ($\Mon$) only incorrect data may be removed; and in \emph{weakly monotone} learning ($\WMon$) the conjectured set may only grow while it is consistent.

The main question is now whether and how these different restrictions reduce learning power. For example, non-U-shaped learning is known not to restrict the learning power \cite{Bal-Cas-Mer-Ste-Wie:j:08}, and the same for strongly non-U-shaped learning \cite{Cas-Moe:j:11:optLan}; on the other hand, decisive learning \emph{is} restrictive \cite{Bal-Cas-Mer-Ste-Wie:j:08}. The relations of the different monotone learning restriction were given in \cite{Lan-Zeu:c:93}. Conservativeness is long known to restrict learning power \cite{Ang:j:80:lang-pos-data}, but also known to be equivalent to weakly monotone learning \cite{Kin-Ste:j:95:mon,Jai-Sha:j:98}. 

Cautious learning was shown to be a restriction but not when added to conservativeness in \cite{Osh-Sto-Wei:j:82:strategies,Osh-Sto-Wei:b:86:stl}, similarly the relationship between decisive and conservative learning was given. In Exercise 4.5.4B of \cite{Osh-Sto-Wei:b:86:stl} it is claimed (without proof) that cautious learners cannot be made conservative; we claim the opposite in Theorem~\ref{thm:CautVarConv}.

This list of previously known results leaves a number of relations between the learning criteria open, even when adding trivial inclusion results (we call an inclusion trivial iff it follows straight from the definition of the restriction without considering the learning model, for example strongly decisive learning is included in decisive learning; formally, trivial inclusion is inclusion on the level of learning restrictions as predicates, see Section~\ref{sec:LearningCriteria}).
With this paper we now give the complete picture of these learning restrictions. The result is shown as a map in Figure~\ref{fig:GoldRelations}. A solid black line indicates a trivial inclusion (the lower criterion is included in the higher); a dashed black line indicates inclusion (which is not trivial). A gray box around criteria indicates equality of 
(learning of) these criteria.

\begin{figure}[ht]
\begin{center}
\includegraphics[viewport = 4.5cm 14cm 19cm 24cm, clip]{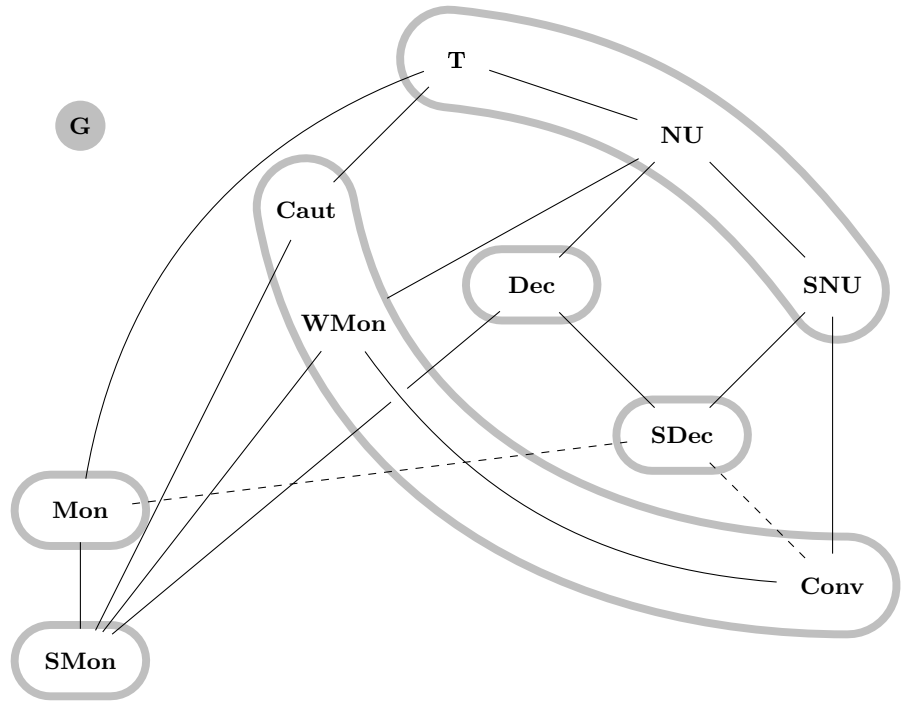}
\end{center}
\caption{Relation of criteria.}\label{fig:GoldRelations}
\end{figure}

A different way of depicting the same results is given in Figure~\ref{fig:partialorderGold} (where solid lines indicate any kind of inclusion). Results involving monotone learning can be found in Section~\ref{sec:Monotone}, all others in Section~\ref{sec:Caution}.

\begin{figure}[h]
\begin{center}
\begin{tikzpicture}[above,sloped,shorten <=2mm,, shorten >=2mm]

\begin{scope}[every node/.style={rectangle, rounded corners,minimum size=4mm,draw=lightgray,line width=2pt}]

\node[shape=circle,fill=lightgray] at (-5,-1) {$\Gold$};

\node (nothing) at (0,0)  {\textbf{T} \hspace{4mm} $\NU$ \hspace{4mm} $\SNU$};
\node (dec)  at (0,-1.5)   		{$\Dec$};
\node (sdec) at (0,-3)   {$\SDec$};
\node (mon) at (2.5,-4.5)	 {$\Mon$}; 
\node (wmon) at (-2.5,-4.5)    {$\Caut$ \hspace{4mm} $\WMon$ \hspace{4mm} $\Conv$};
\node (smon) at (0,-6) 	  {$\SMon$};

\draw (nothing) -- (dec);
\draw (dec) -- (sdec);
\draw (sdec) -- (wmon);
\draw (sdec) -- (mon); 
\draw (mon) -- (smon);
\draw (wmon) -- (smon);

\end{scope}

\end{tikzpicture}
\end{center}
\caption{Partial order of delayable learning restrictions in Gold-style learning.}\label{fig:partialorderGold}
\end{figure}
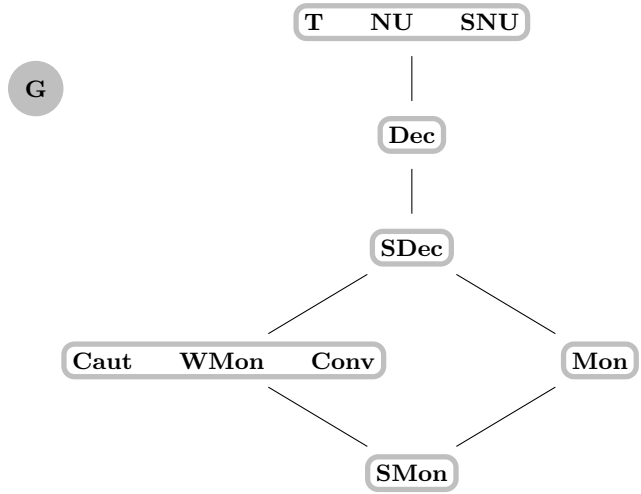

For the important restriction of conservative learning we give the characterization of being equivalent to cautious learning. Furthermore, we show that even two weak versions of cautiousness are equivalent to conservative learning. Recall that cautiousness forbids to return to a strict subset of a previously conjectured set. If we now weaken this restriction to forbid to return to \emph{finite} subsets of a previously conjectured set we get a restriction still equivalent to conservative learning. If we forbid to go down to a correct conjecture, effectively forbidding to ever conjecture a superset of the target language, we also obtain a restriction equivalent to conservative learning. On the other hand, if we weaken it so as to only forbid going to \emph{infinite} subsets of previously conjectured sets, we obtain a restriction equivalent to no restriction. These results can be found in Section~\ref{sec:Caution}.

In \emph{set-driven} learning \cite{Wex-Cul:b:80} the learner does not get the full information about what data has been presented in what order and multiplicity; instead, the learner only gets the set of data presented so far. For this learning model it is known that, surprisingly, conservative learning is no restriction \cite{Kin-Ste:j:95:mon}! We complete the picture for set driven learning by showing that set-driven learners can always be assumed conservative, strongly decisive and cautious, and by showing that the hierarchy of monotone and strongly monotone learning also holds for set-driven learning. The situation is depicted in Figure~\ref{fig:hierarchySetdriven}. These results can be found in Section~\ref{sec:SetDriven}.

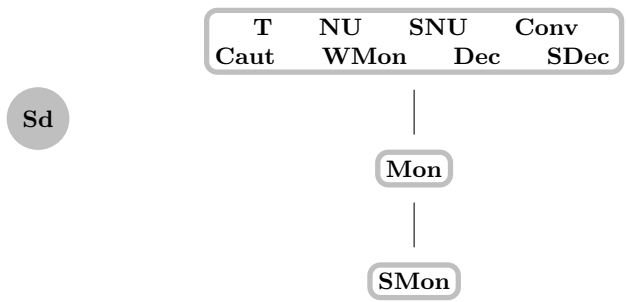
\begin{figure}[h]
\begin{center}
\begin{tikzpicture}[above,sloped,shorten <=2mm,, shorten >=2mm]

\begin{scope}[every node/.style={rectangle, rounded corners,minimum size=4mm,draw=lightgray,line width=2pt}]

\node[shape=circle,fill=lightgray] at (-5,-1) {$\Sd$};

\node (nothing) at (0,0)  {\parbox{5.3cm}{\hspace{4mm} \textbf{T} \hspace{4mm} $\NU$ \hspace{4mm} $\SNU$ \hspace{4mm} $\Conv$ \\ $\Caut$ \hspace{4mm} $\WMon$ \hspace{4mm} $\Dec$ \hspace{4mm} $\SDec$}};
\node (mon) at (0,-1.5)		  {$\Mon$}; 
\node (smon) at (0,-3)	  {$\SMon$};

\draw (nothing) -- (mon);
\draw (mon) -- (smon);

\end{scope}

\end{tikzpicture}
\end{center}
\caption{Hierarchy of delayable learning restrictions in set-driven learning}\label{fig:hierarchySetdriven}
\end{figure}

\subsection{Techniques}

A major emphasis of this paper is on the techniques used to get our results. These techniques include specific techniques for specific problems, as well as general theorems which are applicable in many different settings. The general techniques are given in Section~\ref{sec:techniques}, one main general result is as follows. It is well-known that any $\Text\Gold\Ex$-learner $h$ learning a language $L$ has a \emph{locking sequence}, a sequence $\sigma$ of data from $L$ such that, for any further data from $L$, the conjecture does not change and is correct. However, there might be texts such that no initial sequence of the text is a locking sequence. We call a learner such that any text for a target language contains a locking sequence \emph{strongly locking}, a property which is very handy to have in many proofs. Fulk \cite{Ful:j:90:prudence} showed that, without loss of generality, a $\Text\Gold\Ex$-learner can be assumed strongly locking, as well as having many other useful properties (we call this the \emph{Fulk normal form}, see Definition~\ref{defn:FulkNormalForm}). For many learning criteria considered in this paper it might be too much to hope for that all of them allow for learning by a learner in Fulk normal form. However, we show in Corollary~\ref{cor:SinkLocking} that we can always assume our learners to be strongly locking, total, and what we call \emph{syntactically decisive}, never \emph{syntactically} returning to syntactically abandoned hypotheses.

The main technique we use to show that something is decisively learnable, for example in Theorem~\ref{thm:NatnumSDec}, is what we call \emph{poisoning} of conjectures. In the proof of Theorem~\ref{thm:NatnumSDec} we show that a class of languages is decisively learnable by simulating a given monotone learner $h$, but changing conjectures as follows. Given a conjecture $e$ made by $h$, if there is no mind change in the future with data from conjecture $e$, the new conjecture is equivalent to $e$; otherwise it is suitably changed, \emph{poisoned}, to make sure that the resulting learner is decisive. This technique was also used in \cite{Cas-Koe:c:10:colt} to show strongly non-U-shaped learnability.

Finally, for showing classes of languages to be not (strongly) decisively learnable, we adapt a technique known in computability theory as a ``priority argument'' (note, though, that we do not deal with oracle computations). We use this technique to reprove that decisiveness is a restriction to $\Text\Gold\Ex$-learning (as shown in \cite{Bal-Cas-Mer-Ste-Wie:j:08}), and then use a variation of the proof to show that strongly decisive learning is a restriction to decisive learning.

\section{Mathematical Preliminaries}
\setboolean{useproof}{true}

\setboolean{useproof}{true}

\label{sec:MathPrelim}

Unintroduced notation follows \cite{Rog:b:87}, a textbook on computability theory. 

$\natnum$ denotes the set of natural numbers, $\{0,1,2,\ldots\}$. The symbols $\subseteq$, $\subset$, $\supseteq$, $\supset$ respectively denote the subset, proper subset, superset and proper superset relation between sets; $\setminus$ denotes set difference. 
$\emptyset$ and $\emptysequence$ denote the empty set and the empty sequence, respectively. The quantifier $\forall^\infty x$ means ``for all but finitely many $x$''. With $\dom$ and $\range$ we denote, respectively, domain and range of a given function. 

Whenever we consider tuples of natural numbers as input to a function, it is understood that the general coding function $\langle \cdot, \cdot \rangle$ is used to code the tuples into a single natural number. We similarly fix a coding for finite sets and sequences, so that we can use those as input as well. For finite sequences, we suppose that for any $\sigma \subseteq \tau$ we have that the code number of $\sigma$ is at most the code number of $\tau$. We let $\seq$ denote the set of all (finite) sequences, and $\seq_{\leq t}$ the (finite) set of all sequences of length at most $t$ using only elements $\leq t$.

If a function $f$ is not defined for some argument $x$, then we denote this fact by $f(x)\diverges$, and we say that $f$ on $x$ \emph{diverges}; the opposite is denoted by $f(x)\converges$, and we say that $f$ on $x$ \emph{converges}. If $f$ on $x$ converges to $p$, then we denote this fact by $f(x)\converges = p$. We let $\FrakP$ denote the set of all partial functions $\natnum \rightarrow \natnum$ and $\FrakR$ the set of all total such functions.

$\CalP$ and $\CalR$ denote, respectively, the set of all partial computable and the set of all total computable functions (mapping $\natnum \rightarrow \natnum$).

We let $\varphi$ be any fixed acceptable programming system for $\CalP$ (an acceptable programming system could, for example, be based on a natural programming language such as C or Java, or on Turing machines). Further, we let $\varphi_p$ denote the partial computable function computed by the $\varphi$-program with code number $p$.
A set $L \subseteq \natnum$ is \emph{computably enumerable (\ce)} iff it is the domain of a computable function. Let $\CalE$ denote the set of all \ce\ sets. We let $W$ be the mapping such that $\forall e: W(e) = \dom(\varphi_e)$. For each $e$, we write $W_e$ instead of $W(e)$. $W$ is, then, a mapping from $\natnum$ \emph{onto} $\CalE$. We say that $e$ is an index, or program, (in $W$) for $W_e$.

We let $\Phi$ be a Blum complexity measure associated with $\varphi$ (for example, for each $e$ and $x$, $\Phi_e(x)$ could denote the number of steps that program $e$ takes on input $x$ before terminating). For all $e$ and $t$ we let $W_e^t = \set{x \leq t}{\Phi_e(x) \leq t}$ (note that a complete description for the finite set $W_e^t$ is computable from $e$ and~$t$).
The symbol $\#$ is pronounced \emph{pause} and is used to symbolize ``no new input data'' in a text. For each (possibly infinite) sequence $q$ with its range contained in $\natnum \cup \{\#\}$, let $\content(q) = (\range(q) \setminus \{\#\}$). By using an appropriate coding, we assume that $?$ and $\#$ can be handled by computable functions.
For any function $T$ and all $i$, we use $T[i]$ to denote the sequence $T(0)$, \ldots, $T(i-1)$ (the empty sequence if $i=0$ and undefined, if any of these values is undefined).

\subsection{Learning Criteria}
\setboolean{useproof}{true}

\label{sec:LearningCriteria}

In this section we formally introduce our setting of learning in the limit and associated learning criteria. We follow \cite{Koe:th:09} in its ``building-blocks'' approach for defining learning criteria.

A \emph{learner} is a partial computable function $h \in \CalP$. 
A \emph{language} is a \ce\ set $L \subseteq \natnum$. Any total function $T: \natnum \rightarrow \natnum \cup \{\#\}$ is called a \emph{text}. For any given language $L$, a \emph{text for $L$} is a text $T$ such that $\content(T) = L$. Initial parts of this kind of text is what learners usually get as information. 

An \emph{interaction operator} is an operator $\beta$ taking as arguments a function $h$ (the learner) and a text $T$, and that outputs a function $p$. We call $p$ the \emph{learning sequence} (or \emph{sequence of hypotheses}) of $h$ given $T$. Intuitively, $\beta$ defines how a learner can interact with a given text to produce a sequence of conjectures.

We define the interaction operators $\Gold$, $\Psd$ (partially set-driven learning, \cite{Sch:th:84}) and $\Sd$ (set-driven learning, \cite{Wex-Cul:b:80}) as follows. For all learners $h$, texts $T$ and all $i$,
\begin{eqnarray*}
\Gold(h,T)(i) & = & h(T[i]);\\
\Psd(h,T)(i)  & = & h(\content(T[i]),i);\\
\Sd(h,T)(i)   & = & h(\content(T[i])).
\end{eqnarray*}
Thus, in set-driven learning, the learner has access to the set of all previous data, but not to the sequence as in $\Gold$-learning. In partially set-driven learning, the learner has the set of data and the current iteration number.

Successful learning requires the learner to observe certain restrictions, for example convergence to a correct index. These restrictions are formalized in our next definition.

A \emph{learning restriction} is a predicate $\delta$ on a learning sequence and a text. 
We give the important example of explanatory learning ($\Ex$, \cite{Gol:j:67}) defined such that, for all sequences of hypotheses $p$ and all texts $T$,
\begin{align*}
\Ex(p,T) \Leftrightarrow & \; p \mbox{ total }\wedge [\exists n_0 \forall n \geq n_0: p(n) = p(n_0) \wedge W_{p(n_0)} = \content(T)].
\end{align*}
Furthermore, we formally define the restrictions discussed in Section~\ref{sec:Introduction} in Figure~\ref{fig:DefinitionsOfLearningRestrictions} (where we implicitly require the learning sequence $p$ to be total, as in $\Ex$-learning; note that this is a technicality without major importance).

\begin{figure}[h]
\begin{align*}
\Conv(p,T) \Leftrightarrow & \; [\forall i: \content(T[i+1]) 
  \subseteq W_{p(i)} \Rightarrow p(i) = p(i+1)];\\
\Caut(p,T) \Leftrightarrow & \; [\forall i,j: W_{p(i)} 
  \subset W_{p(j)} \Rightarrow i < j];\\
\NU(p,T) \Leftrightarrow & \; [\forall i,j,k: i \leq j \leq k \; \wedge \; 
  W_{p(i)} = W_{p(k)} = \content(T) \Rightarrow W_{p(j)} = W_{p(i)}];\\
\Dec(p,T) \Leftrightarrow & \; [\forall i,j,k: i \leq j \leq k \; \wedge \; 
  W_{p(i)} = W_{p(k)} \Rightarrow W_{p(j)} = W_{p(i)}];\\
\SNU(p,T) \Leftrightarrow & \; [\forall i,j,k: i \leq j \leq k \; \wedge \; 
  W_{p(i)} = W_{p(k)} = \content(T) \Rightarrow p(j) = p(i)];\\
\SDec(p,T) \Leftrightarrow & \; [\forall i,j,k: i \leq j \leq k \; \wedge \; 
  W_{p(i)} = W_{p(k)} \Rightarrow p(j) = p(i)];\\
\SMon(p,T) \Leftrightarrow & \; [\forall i,j: i< j \Rightarrow W_{p(i)} 
   \subseteq W_{p(j)}];\\
\Mon(p,T) \Leftrightarrow & \; [\forall i,j: i< j \Rightarrow W_{p(i)}
  \cap\content(T) \subseteq W_{p(j)}\cap\content(T)];\\
\WMon(p,T) \Leftrightarrow & \; [\forall i,j: i < j \wedge \content(T[j]) 
 \subseteq W_{p(i)} \Rightarrow W_{p(i)} \subseteq W_{p(j)}].
\end{align*}\vspace{-7mm}
\caption{Definitions of learning restrictions.}
\label{fig:DefinitionsOfLearningRestrictions}\end{figure}
A variant on decisiveness is \emph{syntactic decisiveness}, $\SynDec$, a technically useful property defined as follows.
$$
\SynDec(p,T) \Leftrightarrow  [\forall i,j,k: i \leq j \leq k \; \wedge \; p(i) = p(k) \Rightarrow p(j) = p(i)].
$$
We combine any two sequence acceptance criteria $\delta$ and $\delta'$ by intersecting them; we denote this by juxtaposition (for example, all the restrictions given in Figure~\ref{fig:DefinitionsOfLearningRestrictions} are meant to be always used together with $\Ex$). With $\mathbf{T}$ we denote the always true sequence acceptance criterion (no restriction on learning).

A \emph{learning criterion} is a tuple $(\CalC,\beta,\delta)$, where $\CalC$ is a set of learners (the admissible learners), $\beta$ is an interaction operator and $\delta$ is a learning restriction; we usually write $\CalC\Text\beta\delta$ to denote the learning criterion, omitting $\CalC$ in case of $\CalC = \CalP$. We say that a learner $h \in \CalC$ \emph{$\CalC\Text\beta\delta$-learns} a language $L$ iff, for all texts $T$ for $L$, $\delta(\beta(h,T),T)$. The set of languages $\CalC\Text\beta\delta$-learned by $h \in \CalC$ is denoted by $\CalC\Text\beta\delta(h)$.
We write $[\CalC\Text\beta\delta]$ to denote the set of all $\CalC\Text\beta\delta$-learnable classes (learnable by some learner in $\CalC$). 

\section{Delayable Learning Restrictions}
\setboolean{useproof}{true}

\label{sec:techniques}

In this section we present technically useful results which show that learners can always be assumed to be in some normal form. We will later always assume our learners to be in the normal form established by Corollary~\ref{cor:SinkLocking}, the main result of this section.

\setboolean{useproof}{true}

We start with the definition of \emph{delayable}. Intuitively, a learning criterion $\delta$ is delayable iff the output of a hypothesis can be arbitrarily (but not indefinitely) delayed.

\begin{defn}\label{defn:Delayable}
Let $\vec{R}$ be the set of all non-decreasing $r: \natnum \rightarrow \natnum$ with infinite limit inferior, i.e.\ for all $m$ we have $\forall^\infty n: r(n) \geq m$.

A learning restriction $\delta$ is \emph{delayable} iff, for all texts $T$ and $T'$ with $\content(T) = \content(T')$, all $p$ and all $r \in \vec{R}$,
if $(p,T) \in \delta$ and $\forall n: \content(T[r(n)]) \subseteq \content(T'[n])$, then  $(p \circ r,T') \in \delta$. Intuitively, as long as the learner has at least as much data as was used for a given conjecture, then the conjecture is permissible. Note that this condition holds for $T = T'$ if $\forall n: r(n) \leq n$.
\end{defn}

Note that the intersection of two delayable learning criteria is again delayable and that \emph{all} learning restrictions considered in this paper are delayable.

\setboolean{useproof}{true}

As the name suggests, we can apply \emph{delaying tricks} (tricks which delay updates of the conjecture) in order to achieve fast computation times in each iteration (but of course in the limit we still spend an infinite amount of time). This gives us equally powerful but total learners, as shown in the next theorem. While it is well-known that, for many learning criteria, the learner can be assumed total, this theorem explicitly formalizes conditions under which totality can be assumed (note that there are also natural learning criteria where totality cannot be assumed, such as consistent learning \cite{Jai-Osh-Roy-Sha:b:99:stl2}).

\begin{thm}\label{thm:Total}
For any delayable learning restriction $\delta$, we have [$\Text\Gold\delta$] = [$\CalR\Text\Gold\delta$].
\end{thm}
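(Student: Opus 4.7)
The inclusion $[\CalR\Text\Gold\delta] \subseteq [\Text\Gold\delta]$ is immediate from $\CalR \subseteq \CalP$. The substance is the reverse inclusion. I start with an arbitrary partial learner $h \in \CalP$ that $\Text\Gold\delta$-learns some class $\CalL$, and build a total learner $h' \in \CalR$ for the same class by a standard delaying trick: on input $\sigma$ of length $n$, $h'$ simulates $h$ on each of the prefixes $\sigma[0], \sigma[1], \ldots, \sigma[n]$ for $n$ steps, identifies the largest $i$ such that $h$ has already converged on \emph{every} prefix $\sigma[0], \ldots, \sigma[i]$ within $n$ steps, and returns $h(\sigma[i])$. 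If no such $i$ exists (which can only happen if $h(\emptysequence)$ has not yet converged in $n$ steps), it returns a default value such as $0$. By construction $h' \in \CalR$.

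The verification proceeds for a fixed text $T$ of a language $L \in \CalL$. Let $p = \Gold(h,T)$; since $(p,T) \in \delta$ and the restrictions considered implicitly require $p$ to be total, $h(T[i])\converges$ for every $i$. Let $r(n)$ denote the index chosen by $h'$ on input $T[n]$, so that $\Gold(h',T)(n) = p(r(n))$. I would check three properties of $r$: (a) $r(n) \leq n$, directly from the construction; (b) $r$ is non-decreasing, because any convergence witnessed in $n$ steps is still witnessed in $n+1$ steps, so the ``initial segment of convergences'' can only grow; (c) $\liminf_n r(n) = \infty$, because for each fixed $m$ the totality of $p$ gives a stage $N$ by which $h$ has converged on all of $T[0], \ldots, T[m]$, and so $r(n) \geq m$ for all $n \geq \max(N,m)$. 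Together these yield $r \in \vec{R}$.

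The final step is to invoke delayability with $T' = T$: from $r \in \vec{R}$ and $r(n) \leq n$ we get $\content(T[r(n)]) \subseteq \content(T[n])$, hence $(p\circ r, T) \in \delta$, i.e.\ $(\Gold(h',T),T) \in \delta$. Since $T$ was an arbitrary text for $L$ and $L$ an arbitrary member of $\CalL$, $h'$ witnesses $\CalL \in [\CalR\Text\Gold\delta]$.

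I expect no single step to be a serious obstacle; the mild subtlety is the bookkeeping that makes the chosen index sequence $r$ genuinely non-decreasing along a text. This is why it is important to select the largest $i$ such that $h$ has converged on \emph{all} earlier prefixes, rather than simply the largest $i$ with $h(\sigma[i])\converges$: the former is monotone under enlargement of the simulation budget, the latter need not be. Everything else is routine, and the argument uses only delayability of $\delta$, never any specific feature of a particular restriction.
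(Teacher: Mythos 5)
Your construction is essentially the paper's: delay $h$ by evaluating it only on a suitably long prefix whose computation has been witnessed to halt, observe that the resulting index function $r$ is non-decreasing, bounded by $n$, and has infinite limit inferior, and then invoke delayability with $T'=T$. The one genuine flaw is the default value $0$ output before $h(\emptysequence)$ has been seen to converge: on a text $T$ for $L\in\CalL$ the learning sequence of $h'$ is then $0,\dots,0,p(r(s_0)),p(r(s_0+1)),\dots$, which is in general \emph{not} of the form $p\circ r$ for any $r\in\vec{R}$, so delayability (which is stated only for sequences of exactly that form) cannot be applied as you do in your final step; for a concrete $\delta$ such as $\Conv$ the spurious initial conjecture $0$ could itself violate the restriction. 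The paper avoids this by putting $\emptysequence$ unconditionally into the set of candidate prefixes and using that $h(\emptysequence)\converges$ (the learning sequence is required to be total, so $h$ converges on every prefix of a text for a learned language; the case $\CalL=\emptyset$ is trivial), so that $h'(\sigma)=h(\emptysequence)$ rather than an arbitrary default when no halting has yet been witnessed --- you already note that $h(T[i])\converges$ for all $i$, so this is a one-line repair. Finally, your closing worry is unfounded: the set of prefixes of $T[n]$ on which $h$ is seen to halt within $n$ steps only grows with $n$, so taking simply the longest such prefix (as the paper does) already yields a non-decreasing $r$; requiring convergence on all earlier prefixes as well is harmless but not needed.
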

\begin{proof}
Let $h$ be a $\Text\Gold\delta$-learner and $e$ such that $\varphi_e = h$. We define a function $M$ such that, for all $\sigma$,
$$M(\sigma) = \{\sigma' \subseteq \sigma\ |\ \Phi_e(\sigma') \leq |\sigma|\} \cup \{\emptysequence\}.$$ We let $h'$ be the learner such that, for all $\sigma$,
$$h'(\sigma) = h(\max(M(\sigma)).$$
As $h$ is required to have only total learning sequences, we have that $h(\emptysequence)\converges$; thus, $h'$ is total computable using that $M$ is total computable.
Let $\CalL = \Text\Gold\delta(h)$, $L \in \CalL$ and let $T$ be a text for $L$. 
Let $r(n) = |\max(M(T[n]))|$. Then we have, for all $n$, $h'(T[n]) = h(T[r(n)])$. Thus, if we show that $r \in \vec{R}$ we get that $h'$ $\Text\Gold\delta$-learns $L$ from $T$ using $\delta$ delayable. From the definition of $M$ we get that $r$ is non-decreasing and, for all $n$, $r(n) \leq n$. For any given $m$ there are $n,n'$ with $n' \geq n \geq m$ such that $\Phi_e(T[n]) \leq n'$. Thus, we have $r(n') \geq m$ and, as $r$ is non-decreasing, we get $\forall^\infty n : r(n) \geq m$ as desired. 
\end{proof}

\setboolean{useproof}{true}

Next we define another useful property, which can always be assumed for delayable learning restrictions.

\begin{defn}\label{defn:StronglyLocking}
A \emph{locking sequence for a learner $h$ on a language $L$} is any finite sequence $\sigma$ of elements from $L$ such that $h(\sigma)$ is a correct hypothesis for $L$ and, for sequences $\tau$ with elements from $L$, $h(\sigma \diamond \tau) = h(\sigma)$\cite{Blu-Blu:j:75}. It is well known that every learner $h$ learning a language $L$ has a locking sequence on $L$.
We say that a learning criterion $I$ \emph{allows for strongly locking learning} iff, for each $I$-learnable class of languages $\CalL$ there is a learner $h$ such that $h$ $I$-learns $\CalL$ and, for each $L \in \CalL$ and any text $T$ for $L$, there is an $n$ such that $T[n]$ is a locking sequence of $h$ on $L$  (we call such a learner $h$ \emph{strongly locking}).
\end{defn}

\setboolean{useproof}{true}

With this definition we can give the following theorem.

\begin{thm}\label{thm:DelayStronglyLocking}
Let $\delta$ be a delayable learning criterion. Then $\CalR\Text\Gold\delta\Ex$ allows for strongly locking learning.
\end{thm}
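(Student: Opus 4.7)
The plan is to modify a given learner $h \in \CalR$ that $\Text\Gold\delta\Ex$-learns $\CalL$ into a strongly locking one by the standard ``look for a local locking witness within the available budget'' trick. Concretely, I would define $h'(\sigma) = h(\sigma_\sigma)$, where $\sigma_\sigma$ is the shortest prefix $\sigma' \preceq \sigma$ such that for every $\tau$ with $\sigma' \preceq \tau$, $|\tau| \leq |\sigma|$ and $\content(\tau) \subseteq \content(\sigma)$ we have $h(\tau) = h(\sigma')$. Such a shortest $\sigma'$ always exists because $\sigma' = \sigma$ trivially qualifies, and totality of $h$ makes this a finite effective search, so $h' \in \CalR$.

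I would then establish strong locking and $\Ex$-correctness simultaneously. Fix a text $T$ for $L \in \CalL$ and let $k^*$ be the length of the shortest prefix of $T$ that is a locking sequence of $h$ on $L$ (which exists since $h$ $\Ex$-learns $L$). For each $j < k^*$, $T[j]$ is not a locking sequence, so there is an extension $\tau_j$ of $T[j]$ with $\content(\tau_j) \subseteq L$ and $h(\tau_j) \ne h(T[j])$. Pick $n_0 \geq k^*$ large enough that every $\tau_j$ satisfies $|\tau_j| \leq n_0$ and $\content(\tau_j) \subseteq \content(T[n_0])$. Then for any $\tau$ with $\content(\tau) \subseteq L$, on the input $T[n_0] \diamond \tau$ each prefix $T[j]$ with $j < k^*$ is invalidated by $\tau_j$, while $T[k^*]$ remains valid by its locking-sequence property. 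Hence $\sigma_{T[n_0] \diamond \tau} = T[k^*]$ and $h'(T[n_0] \diamond \tau) = h(T[k^*])$, which is a correct index for $L$. In particular $T[n_0]$ is a locking sequence of $h'$ on $L$, giving both strong locking and $\Ex$.

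The main obstacle is preserving the restriction $\delta$ itself. Let $p_h(n) = h(T[n])$ and $r(n) = |\sigma_{T[n]}|$, so $p_{h'}(n) = p_h(r(n))$. Then $r$ is non-decreasing with $r(n) \leq n$ and is eventually constant equal to $k^*$; hence $r \notin \vec{R}$ and delayability cannot be invoked on the nose. My fix is to replace $r$ by $\tilde{r}$ that agrees with $r$ on $[0,n_0)$ and equals the identity on $[n_0, \infty)$. Because $r(n_0 - 1) \leq k^* \leq n_0$, the function $\tilde{r}$ is still non-decreasing and bounded by the identity, and now tends to infinity, so $\tilde{r} \in \vec{R}$. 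Moreover $p_h \circ \tilde{r} = p_{h'}$: on $[0, n_0)$ by construction, and on $[n_0, \infty)$ because $\Ex$ forces $p_h$ to stabilize by step $k^* \leq n_0$, whence $p_h(n) = p_h(k^*) = p_h(r(n)) = p_{h'}(n)$ for all such $n$. Applying delayability of $\delta$ with $T' = T$ then yields $(p_{h'}, T) \in \delta$, completing the proof.

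The subtle point is precisely this $\tilde{r}$-patching: the naive composition freezes the learner and falls outside $\vec{R}$, but the $\Ex$-part of the criterion stabilizes $p_h$ before $r$ freezes, giving us the slack to extend $r$ past $n_0$ without changing $p_h \circ r$---exactly the property needed to fit delayability.
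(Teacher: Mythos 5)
Your construction searches for a stable hypothesis only among \emph{prefixes of the input}, and the correctness argument hinges on the existence of $k^*$, ``the length of the shortest prefix of $T$ that is a locking sequence of $h$ on $L$'', which you justify by ``$h$ $\Ex$-learns $L$''. This object need not exist: the Blum--Blum lemma gives a locking sequence somewhere in $\seq(L)$, but not as an initial segment of the particular text $T$. A general $\Ex$-learner can converge on $T$ while every prefix $T[j]$ still admits some mind-changing extension $\tau_j \in \seq(L)$ (these $\tau_j$ simply never occur in $T$ at the right positions) --- this is exactly the situation that makes the strongly-locking normal form nontrivial, and the paper points it out explicitly. In that case your $\sigma_{T[n]}$ drifts through longer and longer prefixes; $h'$ still converges on $T$ and the $\delta$-part survives via your $r \in \vec{R}$, but no prefix of $T$ is a locking sequence of $h'$ either, since for suitable $\tau \in \seq(L)$ the selected prefix of $T[n] \diamond \tau$ gets pushed past $T[n]$ into $\tau$ and the output changes. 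So the property actually being proved fails for your $h'$ precisely in the hard case.

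The missing idea is that one must \emph{splice the invalidating witnesses into the sequence fed to $h$} rather than merely delay along $T$. The paper's function $f$ does this: whenever the current candidate $f(\sigma)$ is seen not to be locking, witnessed by some $\tau$, the candidate is replaced by $f(\sigma) \diamond \tau \diamond \sigma$. If this never stabilizes, $f(T)$ is a text for $L$ on which $h$ changes its mind infinitely often, contradicting $\Ex$-learning; hence $f(T)$ converges to a genuine locking sequence, which yields strong locking on \emph{every} text. The price is that $f(T[n])$ is no longer a prefix of $T$, so delayability must be applied in its full generality with a second text $T' = f(T[n_0]) \diamond T \neq T$; your simplification to $T' = T$ is available only under the unjustified assumption that $T$ itself contains a locking prefix. (Your $\tilde r$-patching of the frozen tail is fine as far as it goes, but it does not touch this core problem.)
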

\begin{proof}
Let $\CalL$ and $h \in \CalR$ be such that $h$ $\CalR\Text\Gold\delta\Ex$-learns $\CalL$. We define a set $M(\rho, \sigma)$, for all $\rho$ and $\sigma$ such that
$$M(\rho,\sigma)  =  \set{\tau}{|\tau| \leq |\sigma| \wedge \content(\tau) \subseteq \content(\sigma) \wedge h(\rho \diamond \tau) \neq h(\rho)}.$$
Thus, $M$ contains sequences with elements from $\content(\sigma)$ such that $h$ makes a mind change on $\sigma$ extended with such a sequence. Additionally, we define a function $f$ recursively such that, for all $\sigma, x$ and $T$,
\begin{eqnarray*}
f(\emptyset) & = & \emptyset; \\
f(\sigma \diamond x) & = &
\begin{cases} 
f(\sigma), 															&\mbox{if }M(f(\sigma),\sigma \diamond x) = \emptyset;\\
f(\sigma) \diamond \min(M(f(\sigma),\sigma \diamond x)) \diamond \sigma, 		&\mbox{otherwise;}
\end{cases}\\
f(T) & = & \lim\limits_{n \rightarrow \infty}{f(T[n])}.
\end{eqnarray*}
Intuitively, $f$ searches for longer and longer sequences which are \emph{not} locking sequences. We let $h'$ be the learner such that, for all $\sigma$,
$$h'(\sigma)  =  h(f(\sigma)).$$
Note that $f$ is total (as $h$ is total), and thus $h'$ is total.

Let $L \in \CalL$ and $T$ be a text for $L$. We will show now that $f(T)$ converges to a finite sequence.
\begin{claim}
We have that $f(T)$ is finite.
\end{claim}
\begin{claimProof}
By way of contradiction, suppose that $f(T)$ is infinite, and let $T' = f(T)$. As $f(T)$ is infinite we get, for every $n$, an $m > n$ such that $f(T[m]) \neq f(T[n])$. Then we have 
$$\content(T[n]) \subseteq \content(f(T[m])).$$ 
As this holds for every $n$, we get $\content(T) \subseteq \content(f(T))$. From the construction of $f$ we know that $\content(f(T)) \subseteq \content(T)$. Thus, $f(T)$ is a text for $L$. From the construction of $M$ we get that $h$ does not $\Text\Gold\Ex$-learns $L$ from $T'$ as $h$ changes infinitely often its mind, a contradiction.
\end{claimProof}
Next, we will show that $h'$ converges on $T$ and $h'$ is strongly locking. As $f(T)$ is finite, there is $n_0$ such that, for all $n \geq n_0$,
\begin{align*}
f(T[n]) = f(T[n_0]).
\end{align*}

As $f(T)$ converges to $f(T[n_0])$, we get from the construction of $M$ that $f(T[n_0])$ is a locking sequence of $h$ on $L$. Therefore we get that, for all $\tau \in \seq(L)$, 
$$f(T[n_0]) = f(T[n_0] \diamond \tau)$$ and therefore
$$h'(T[n_0]) = h'(T[n_0] \diamond \tau).$$
Thus, $h'$ is strongly locking and converges on $T$.

To show that $h'$ fulfills the $\delta$-restriction, we let $T' = f(T[n_0]) \diamond T$ be a text for $L$ starting with $f(T[n_0])$. Let $r$ be such that
$$r(n) = \begin{cases}
	|f(T[n])|, & \text{if } n \leq n_0; \\
	r(n_0) + n - n_0, & \text{otherwise.}
\end{cases}$$We now show
$$h(T'[r(n)]) = h'(T[n]).$$

\textit{Case 1:} $n \leq n_0$. Then we get 
\begin{align*}
h(T'[r(n)]) &= h(T'[|f(T[n])|]) \\
	&= h(f(T[n])) &\text{as $T' = f(T[n_0]) \diamond T$} \\
	&= h'(T[n]).
\end{align*}

\textit{Case 2:} $n > n_0$. Then we get
\begin{align*}
h(T'[r(n)]) &= h(T'[r(n_0) + n - n_0])\\
	& = h(T'[|f(T[n_0])| + n - n_0]) \\ 
	&= h(f(T[n_0])\diamond T[n-n_0]) &\text{as $T' = f(T[n_0]) \diamond T$}\\
	&= h(f(T[n_0])) &\text{\hspace{-10mm}as $f(T[n_0])$ is a locking sequence of $h$} \\
	&= h'(T[n]).
\end{align*}
 Thus, all that remains to be shown is that $r \in \vec{R}$. Obviously, $r$ is non-decreasing. Especially, we have that $r$ is strongly monotone increasing for all $n > n_0$. Thus we have, for all $m$, $\forall^\infty n : r(n) \geq m$.
Finally we show that $\content(T'[r(n)]) \subseteq \content(T[n])$. From the construction of $f$ we have, for all $n \leq n_0$, $\content(T'[|f(T[n])|]) \subseteq \content(T[n])$. From the construction of $r$ and $T'$ we get that, for all $n > n_0$, $T'(r(n)) = T(n)$. Thus we get, for all $n$, $\content(T'[r(n)]) \subseteq \content(T[n])$.

\end{proof}

\setboolean{useproof}{true}

Next we define semantic and pseudo-semantic restrictions introduced in~\cite{Koe:c:14:stacs}. Intuitively, semantic restrictions allow for replacing hypotheses by equivalent ones; pseudo-sematic restrictions allow the same, as long as no new mind changes are introduced.

\begin{defn}\label{defn:SemanticRestriction}
For all total functions $p \in \FrakP$, we let
\begin{eqnarray*}
\SemRestr(p) & = & \set{ p' \in \FrakP}{\forall i: W_{p(i)} = W_{p'(i)}};\\
\McRestr(p) & = & \set{ p' \in \FrakP}{\forall i: p'(i) \neq p'(i+1) \Rightarrow p(i) \neq p(i+1)}.
\end{eqnarray*}

A sequence acceptance criterion $\delta$ is said to be a \emph{semantic restriction} iff, for all $(p,q) \in \delta$ and $p' \in \SemRestr(p)$, $(p',q) \in \delta$.

A sequence acceptance criterion $\delta$ is said to be a \emph{pseudo-semantic restriction} iff, for all $(p,q) \in \delta$ and $p' \in \SemRestr(p) \cap \McRestr(p)$, $(p',q) \in \delta$.

\end{defn}

We note that the intersection of two (pseudo-) semantic learning restrictions is again (pseudo-) semantic. All learning restrictions considered in this paper are pseudo-semantic, and all except $\Conv$, $\SNU$, $\SDec$ and $\Ex$ are semantic.

\setboolean{useproof}{true}

The next lemma shows that, for every pseudo-semantic learning restriction, learning can be done syntactically decisively.

\begin{lem}\label{lem:SynDec}
Let $\delta$ be a pseudo-semantic learning criterion. Then we have
$$[\CalR\Text\Gold\delta] = [\CalR\Text\Gold\SynDec\delta].$$
\end{lem}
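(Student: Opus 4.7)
The inclusion $[\CalR\Text\Gold\SynDec\delta] \subseteq [\CalR\Text\Gold\delta]$ is immediate since $\SynDec\delta$ is a stronger restriction than $\delta$. So the plan is to establish the reverse inclusion: given a total learner $h$ that $\CalR\Text\Gold\delta$-learns a class $\CalL$, I construct a total learner $h'$ that $\CalR\Text\Gold\SynDec\delta$-learns $\CalL$.

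The key tool is a standard padding function: by the $s$-$m$-$n$ theorem there is a computable injective $\pad: \natnum \times \natnum \to \natnum$ with $W_{\pad(e,k)} = W_e$ for all $e,k$. The idea is to replay $h$'s output, but to relabel each maximal run of a constant hypothesis with a fresh padded index tagged by the start-time of that run. Formally, given $\sigma$ of length $n$, compute $h(\sigma[0]), h(\sigma[1]), \ldots, h(\sigma[n])$, partition $\{0,\ldots,n\}$ into maximal blocks on which $h$'s output is constant, let $t$ be the starting index of the block containing $n$, and set $h'(\sigma) = \pad(h(\sigma),t)$. Since $h$ is total computable, so is $h'$.

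Let $T$ be any text, and write $p = \Gold(h,T)$, $p' = \Gold(h',T)$. By construction $W_{p'(i)} = W_{p(i)}$, so $p' \in \SemRestr(p)$; and $p'(i) \neq p'(i+1)$ happens exactly when $h$ changes its output from step $i$ to $i{+}1$, so $p' \in \McRestr(p)$ as well. Therefore if $(p,T)\in\delta$, then by pseudo-semanticity of $\delta$ we have $(p',T)\in\delta$, and $h'$ still $\Text\Gold\delta$-learns $\CalL$. It remains to check that $p'$ is syntactically decisive. Suppose $p'(i) = p'(k)$ for some $i \leq k$. By the definition of $h'$, $p'(i) = \pad(h(T[i]), t_i)$ and $p'(k) = \pad(h(T[k]), t_k)$, where $t_i, t_k$ are the start times of the blocks containing $i$ and $k$ respectively. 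Injectivity of $\pad$ forces $t_i = t_k$, so $i$ and $k$ lie in the same maximal constant block of $h \circ T$; hence every $j$ with $i \leq j \leq k$ lies in this same block, and thus $p'(j) = \pad(h(T[j]), t_i) = p'(i)$, giving $\SynDec(p',T)$.

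The main conceptual step is the block-plus-padding trick; the main thing to be careful about is that the padding is tagged by the \emph{start of the block} and not, say, by the current step $n$ — otherwise we would assign different indices inside a single constant run, which would both destroy $p' \in \McRestr(p)$ and prevent $h'$ from settling. No other obstacles arise, since totality and pseudo-semanticity together handle all the remaining conditions.
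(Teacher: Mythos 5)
Your proof is correct and takes essentially the same approach as the paper: the paper's learner outputs $\pad(h(\sigma),\sigma)$ at each mind change of $h$ and otherwise repeats its previous output, which is exactly your block-plus-padding construction with the block tagged by the initial segment at which the run starts rather than by its numeric start index. The paper leaves the verification as "straightforward to check"; your detailed check of $\SemRestr$, $\McRestr$, and $\SynDec$ is the intended one.
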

\begin{proof}
Let a $\Text\Gold\delta$-learner $h \in \CalR$ be given. We define a learner $h' \in \CalR$ such that, for all $\sigma$, 
$$h'(\sigma) = \begin{cases}
	\pad(h(\sigma),\sigma), & \text{if } \sigma = \emptyset \text{ or } h(\sigma) \neq h(\sigma^-); \\
	h'(\sigma^-), & \text{otherwise.}
\end{cases}$$
The correctness of this construction is straightforward to check.
\end{proof}

\setboolean{useproof}{true}

As $\SynDec$ is a delayable learning criterion, we get the following corollary by taking Theorems~\ref{thm:Total} and~\ref{thm:DelayStronglyLocking} and Lemma~\ref{lem:SynDec} together. We will always assume our learners to be in this normal form in this paper.

\begin{cor}\label{cor:SinkLocking}
Let $\delta$ be pseudo-semantic and delayable. Then $\Text\Gold\delta\Ex$ allows for strongly locking learning  by a syntactically decisive total learner.
\end{cor}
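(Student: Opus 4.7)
The plan is to chain the three preceding results --- Theorem~\ref{thm:Total}, Theorem~\ref{thm:DelayStronglyLocking}, and Lemma~\ref{lem:SynDec} --- in an order that makes each transformation preserve the properties established by the prior step. First I would note two preliminary facts already observed in the paper: the intersection of two delayable restrictions is delayable, and the intersection of two pseudo-semantic restrictions is pseudo-semantic. Applied to $\delta\Ex$, and using that $\Ex$ is pseudo-semantic (with the stabilization index surviving because $\McRestr(p)$ forbids new syntactic mind changes), this gives that $\delta\Ex$ is both delayable and pseudo-semantic.

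Starting from an arbitrary $\Text\Gold\delta\Ex$-learner $h$ for a class $\CalL$, I would first invoke Theorem~\ref{thm:Total} with the combined restriction $\delta\Ex$ to replace $h$ by a total learner $h_1 \in \CalR$ that still $\Text\Gold\delta\Ex$-learns $\CalL$. Then I would feed $h_1$ into Theorem~\ref{thm:DelayStronglyLocking} to obtain a total, strongly locking learner $h_2$ that still $\CalR\Text\Gold\delta\Ex$-learns $\CalL$ (totality is preserved because the construction in that theorem outputs $h_2(\sigma) = h_1(f(\sigma))$ with both $h_1$ and $f$ total).

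Finally, using that $\delta\Ex$ is pseudo-semantic, I would apply Lemma~\ref{lem:SynDec} to $h_2$ to obtain a syntactically decisive total learner $h_3$ that $\Text\Gold\SynDec\delta\Ex$-learns $\CalL$. Since Lemma~\ref{lem:SynDec} is applied last, I must check that its construction preserves strong locking: given a locking sequence $T[n_0]$ of $h_2$ on some $L\in\CalL$, for any $\tau$ with elements from $L$ we have $h_2(T[n_0]\diamond \tau) = h_2(T[n_0])$, so the recursive clause of $h_3$ keeps $h_3(T[n_0]\diamond\tau) = h_3(T[n_0])$. Moreover $W_{h_3(T[n_0])} = W_{h_2(T[n_0])} = L$, because the padding step of the construction is semantics-preserving. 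Hence $T[n_0]$ remains a locking sequence for $h_3$ on $L$, and totality is immediate.

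The main obstacle is really just careful bookkeeping rather than any new idea: one has to be sure that each of the three constructions is stable under the properties already gained, and in particular that Lemma~\ref{lem:SynDec}'s padding step does not destroy strong locking. The order (total, then strongly locking, then syntactically decisive) is forced, since Theorem~\ref{thm:DelayStronglyLocking} requires a total input and Lemma~\ref{lem:SynDec} interacts with syntactic mind changes of its input.
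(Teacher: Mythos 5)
Your proposal is correct and is essentially the paper's own argument, which simply combines Theorem~\ref{thm:Total}, Theorem~\ref{thm:DelayStronglyLocking} and Lemma~\ref{lem:SynDec}; your verification that the padding construction of Lemma~\ref{lem:SynDec} preserves strong locking is a valid way to close the bookkeeping. The only (immaterial) difference is that the paper's remark that $\SynDec$ is itself delayable suggests applying Lemma~\ref{lem:SynDec} before Theorem~\ref{thm:DelayStronglyLocking} and then invoking that theorem for the combined restriction $\SynDec\delta$, so the order you call ``forced'' is really just one of two workable orders.
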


\setboolean{useproof}{true}

Fulk showed that any $\Text\Gold\Ex$-learner can be (effectively) turned into an equivalent learner with many useful properties, including strongly locking learning \cite{Ful:j:90:prudence}. One of the properties is called \emph{order-independence}, meaning that on any two texts for a target language the learner converges to the same hypothesis. Another property is called \emph{rearrangement-independence}, where a learner $h$ is rearrangement-independent if there is a function $f$ such that, for all sequences $\sigma$, $h(\sigma) = f(\content(\sigma),|\sigma|)$ (intuitively, rearrangement independence is equivalent to the existence of a partially set-driven learner for the same language). We define the collection of all the properties which Fulk showed a learner can have to be the \emph{Fulk normal form} as follows.

\begin{defn}\label{defn:FulkNormalForm}
We say a $\Text\Gold\Ex$-learner $h$ is in \emph{Fulk normal form} if $(1) - (5)$ hold.
\begin{enumerate}
\item $h$ is order-independent.
\item $h$ is rearrangement-independent.
\item If $h$ $\Text\Gold\Ex$-learns a language $L$ from some text, then $h$ $\Text\Gold\Ex$-learns~$L$.
\item If there is a locking sequence of $h$ for some $L$, then $h$ $\Text\Gold\Ex$-learns $L$.
\item For all $\CalL \in \Text\Gold\Ex(h)$,  $h$ is strongly locking on $\CalL$.
\end{enumerate}
\end{defn}

The following theorem is somewhat weaker than what Fulk states himself.
\begin{thm}[{\cite[Theorem 13]{Ful:j:90:prudence}}]\label{thm:FulkNormalForm}
Every $\Text\Gold\Ex$-learnable set of languages has a $\Text\Gold\Ex$-learner in Fulk normal form.
\end{thm}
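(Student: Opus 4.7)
The plan is to start with a learner already close to Fulk normal form --- namely, a total, strongly locking $\Text\Gold\Ex$-learner $h$ for the given class $\CalL$, which exists by Corollary~\ref{cor:SinkLocking} applied to the delayable, pseudo-semantic restriction $\Ex$. The key observation is that properties~(1)--(4) of Definition~\ref{defn:FulkNormalForm} all follow once the new learner $h'$ is \emph{rearrangement-independent}, so the construction will ensure that $h'(\sigma)$ depends only on the pair $(\content(\sigma), |\sigma|)$ through a total computable $f$.

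I would define $f(D,n)$ as follows: enumerate in a fixed canonical order all sequences $\tau$ with $\content(\tau) \subseteq D$ and $|\tau| \leq n$, and call $\tau$ \emph{tentatively locking at $(D,n)$} if $h(\tau \diamond \tau') = h(\tau)$ for every $\tau'$ with $\content(\tau') \subseteq D$ and $|\tau'| \leq n$. Let $f(D,n) = h(\tau^*)$ for the first such $\tau^*$ (and a fixed default index if none exists), and set $h'(\sigma) = f(\content(\sigma), |\sigma|)$. Then $h'$ is total and computable and is rearrangement-independent by construction, which immediately gives~(2); order-independence~(1) follows. Both (3) and (4) reduce to~(1): if $h'$ converges to a correct index on some text for $L$, then order-independence forces the same correct limit on every text for $L$.

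For~(5) and correctness on $\CalL$, fix $L \in \CalL$ and a text $T$ for $L$. Strong locking of $h$ supplies a genuine locking sequence $\rho$ of $h$ on $L$. For all sufficiently large $n$, both $\rho$ and every candidate earlier than $\rho$ in the canonical order lie inside the search range at stage $(\content(T[n]), |T[n]|)$. Any candidate $\tau$ that remains tentatively locking at \emph{every} large stage must be a genuine locking sequence of $h$ on $L$, because the invariance condition eventually quantifies over every sequence in $\seq(L)$; hence $h(\tau)$ is a correct index for $L$. It follows that $h'(T[n])$ stabilizes on a correct index, and some initial segment $T[n_0]$ becomes a locking sequence of $h'$, establishing~(5).

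The main obstacle is precisely this stabilisation argument: a short $\tau$ earlier than $\rho$ in the canonical order may look locking at small $n$ and fail later, so the identity of the chosen $\tau^*$ can shift finitely often as $n$ grows, and one has to verify that such shifts cease in the limit and that the eventual choice yields a correct hypothesis for $L$. Once this is in hand, all five properties of Fulk normal form follow simultaneously from the single rearrangement-independent construction driven by the bounded locking-sequence search.
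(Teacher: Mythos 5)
The paper does not prove this theorem at all --- it is imported verbatim from Fulk's paper (his Theorem 13), so there is no in-paper argument to compare yours against; your attempt has to stand on its own. Its overall shape (reduce everything to a rearrangement-independent search for a canonically least ``stabilizing'' segment, then argue that the search converges) is the right kind of idea, and your treatment of property (5) and of correctness on $\CalL$ can indeed be pushed through: for $L \in \CalL$ the finitely many candidates preceding the least genuine stabilizing sequence $\tau_0$ each acquire a permanent killing witness inside the search range, so the choice stabilizes to $\tau_0$ and $h(\tau_0)$ is correct because $h$ learns $L$ from any text beginning with $\tau_0$.

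The genuine gap is the sentence ``order-independence~(1) follows.'' Rearrangement-independence does \emph{not} imply order-independence: $h'(T[n])$ depends on the pair $(\content(T[n]),n)$, and two texts for the same $L$ produce different such pairs at every finite stage, so the limits need not agree. Concretely, your construction is only safe for languages $L$ on which $h$ has a stabilizing sequence (a $\tau_0 \in \seq(L)$ with $h(\tau_0 \diamond \tau') = h(\tau_0)$ for all $\tau' \in \seq(L)$); there the chosen candidate converges to the same $\tau_0$ on every text, as you argue. But for an $L$ with \emph{no} stabilizing sequence, every candidate is eventually killed, the chosen $\tau^*_n$ drifts to infinity, and the value $h(\tau^*_n)$ may accidentally converge along one text for $L$ while diverging, or converging to a different index, along another --- which order of candidates gets killed when depends on the enumeration order of $L$ in the text. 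This breaks property (1), and with it your reductions of (3) and (4), which you derive from (1). The standard repair is to make the output injectively encode the identity of the chosen candidate (e.g.\ output $\pad(h(\tau^*),\tau^*)$), so that $h'$ converges on a text for $L$ if and only if the candidate choice converges, and the latter is equivalent to the text-independent condition that $h$ has a stabilizing sequence over $L$; with that change (1), (3) and (4) do follow by your intended reductions, and (5) survives because padding does not disturb the stabilization argument on $\CalL$.
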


\section{Full-Information Learning}
\setboolean{useproof}{true}

\label{sec:Caution}

In this section we consider various versions of cautious learning and show that all of our variants are either no restriction to learning, or equivalent to conservative learning as is shown in Figure~\ref{fig:GoldCautiousV}.

Additionally, we will show that every cautious $\Text\Gold\Ex$-learnable language is conservative $\Text\Gold\Ex$-learnable which implies that $[\Text\Gold\Conv\Ex]$, $[\Text\Gold\WMon\Ex]$ and $[\Text\Gold\Caut\Ex]$ are equivalent. Last, we will separate these three learning criteria from strongly decisive $\Text\Gold\Ex$-learning and show that $[\Text\Gold\SDec\Ex]$ is a proper superset.

\setboolean{useproof}{true}

\begin{thm}\label{thm:ConvInSDec}
We have that any conservative learner can be assumed cautious and strongly decisive, i.e.
$$[\textbf{TxtGConvEx}] = [\textbf{TxtGConvSDecCautEx}].$$
\end{thm}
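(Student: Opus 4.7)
By Corollary~\ref{cor:SinkLocking}, applicable since $\Conv$ is delayable and pseudo-semantic, I may assume the conservative learner $h$ is total, strongly locking, and syntactically decisive. The main property of conservative learners I exploit is the \emph{non-overshoot} principle: on any text $T$ for $L \in \CalL$, $h$ never conjectures an index $e$ with $L \subsetneq W_e$. Indeed, if it did at some step $i$, then $\content(T[j]) \subseteq L \subsetneq W_e$ for every $j \geq i$, and iterated conservativeness would freeze $h$ at the incorrect $e$, contradicting $\Ex$-convergence on $T$.

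I would then construct $h'$ using the \emph{poisoning} technique mentioned in the introduction. For each $\sigma$, let $h'(\sigma)$ be a padded self-referential index (via implicit use of ORT, or equivalently, Kleene's recursion theorem) for a program $P_\sigma$ that enumerates $W_{h(\sigma)}$ and, in parallel, dovetails over extensions $\sigma \diamond \tau$ searching for the first $\tau$ on which $h(\sigma \diamond \tau)$ differs syntactically from $h(\sigma)$. Upon finding such a $\tau$, $P_\sigma$ additionally enumerates a \emph{canary} element drawn from $\content(\sigma \diamond \tau) \setminus W_{h(\sigma)}$; such an element must exist, because a conservative $h$ can change its mind only when the content has escaped its current conjecture. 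The padding is coupled to the syntactic state of $h$ so that $h'$ changes its padded index precisely when $h$ does. Verification then proceeds in three parts: (i) \emph{$\Ex$-correctness}: once $\sigma$ extends $h$'s locking sequence on a text for $L$, $h$ never again changes its syntactic mind on extensions of $\sigma$, the canary is never added, and $W_{P_\sigma} = W_{h(\sigma)} = L$; (ii) \emph{$\Caut$ and $\SDec$}: distinct padding delivers the syntactic distinctness required for $\SDec$, and the canary together with the non-overshoot principle ensures that $W_{P_{\sigma_j}}$ is never a proper subset of $W_{P_{\sigma_i}}$ for $i < j$ (roughly, either both indices are post-locking and equal to $L$, or the earlier $W_{P_{\sigma_i}}$ fails to contain all of $L$ because its underlying $W_{h(\sigma_i)}$ does not, while the canary cannot rescue this for every missing point simultaneously); (iii) \emph{$\Conv$}: when $\content(\sigma \diamond x) \subseteq W_{h'(\sigma)}$, the coupling of padding to $h$'s syntactic output, combined with $h$'s own conservativeness, yields $h'(\sigma \diamond x) = h'(\sigma)$.

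The main obstacle I anticipate is the simultaneous preservation of conservativeness, cautiousness, and strong decisiveness. The delicate tension is between poisoning (which adds elements to $W_{P_\sigma}$, threatening consistency and hence $\Conv$) and the need to semantically separate $P_\sigma$ from future conjectures for $\Caut$ and $\SDec$. The canary choice is the subtle point, since it must both cause $W_{P_\sigma}$ to differ from every future $W_{P_{\sigma'}}$ yet not cause $W_{P_\sigma}$ itself to strictly overshoot the eventual target (which would then enable a downstream cautiousness violation). Committing to a canary lazily --- only once a future syntactic mind change of $h$ is actually witnessed --- and drawing it from witness content outside $W_{h(\sigma)}$ together with the non-overshoot principle is the mechanism for threading this needle.
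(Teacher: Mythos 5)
There is a genuine gap in the central mechanism of your construction: the dovetailed search inside $P_\sigma$ for a mind change of $h$ on extensions $\sigma\diamond\tau$. You do not say which $\tau$ are searched, and the two possible readings each break a different part of the argument. If the search ranges over arbitrary extensions, then your verification step (i) fails: a locking sequence $\sigma$ for $L$ only guarantees $h(\sigma\diamond\tau)=h(\sigma)$ for $\tau$ with content from $L$; as soon as $\CalL$ contains a second language there will be extensions with data outside $L$ on which $h$ changes its mind, $P_\sigma$ will find one, and the canary (an element of $\content(\sigma\diamond\tau)\setminus W_{h(\sigma)}$, hence outside $L$) gets enumerated into the \emph{final} conjecture, so $h'$ converges to an index for a proper superset of $L$ and $\Ex$ fails. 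If instead you restrict the search to extensions with content from $W_{h(\sigma)}$ (the standard fix, used in the paper's proof of Theorem~\ref{thm:CautVarConv}), then the canary as you specify it cannot exist, since then $\content(\sigma\diamond\tau)\setminus W_{h(\sigma)}=\content(\sigma)\setminus W_{h(\sigma)}$ and the mind-change witness contributes no new element outside $W_{h(\sigma)}$. A further soft spot: $\Conv$ is a property of learning sequences on texts for languages in $\CalL$ only, so the justification ``a conservative $h$ can change its mind only when the content has escaped its current conjecture'' is not available for the first $\tau$ found by a blind dovetail over sequences that need not be prefixes of any text for a learned language.

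For comparison, the paper's proof needs neither self-reference nor poisoning; it is a pure delaying trick. Set $M(\sigma)=\set{\tau}{\tau\subseteq\sigma \wedge \forall x\in\content(\tau):\Phi_{h(\tau)}(x)\leq|\sigma|}$ and $h'(\sigma)=h(\max(M(\sigma)))$, i.e., only release a conjecture of $h$ once it has been verified consistent with the data it was based on. Then every mind change of $h'$ on a text for $L\in\CalL$ is witnessed by a fresh datum $T(\ell)$ lying outside the abandoned conjecture (by conservativeness of $h$) and inside every subsequent conjecture (by the enforced consistency); this single witness simultaneously yields $\Caut$ and $\SDec$, and $\Conv$ is inherited because $h'$ only ever outputs delayed conjectures of $h$. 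Your non-overshoot observation about conservative learners is correct, but it is not what is needed here.
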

\begin{proof}
Let $h \in \CalR$ and $\mathcal{L}$ be such that $h$ \textbf{TxtGConvEx}-learns $\mathcal{L}$. We define, for all $\sigma$, a set $M(\sigma)$ as follows
$$M(\sigma) = \{\tau\ |\ \tau \subseteq \sigma\ \land\ \forall x \in \content(\tau) : \Phi_{h(\tau)}(x) \leq \left|\sigma\right| \}.$$
We let 
$$\forall \sigma : h'(\sigma) = h(\max(M(\sigma))).$$
Let $T$ be a text for a language $L \in \mathcal{L}$. We first show that $h'$ \textbf{TxtGEx}-learns $L$ from the text $T$. As $h$ \textbf{TxtGConvEx}-learns $L$, there are $n$ and $e$ such that $\forall n' \geq n : h(T[n]) = h(T[n']) = e$ and $W_e = L$. Thus, there is $m \geq n$ such that $\forall x \in \content(T[n]) : \Phi_{h(T[n])}(x) \leq m$ and therefore $\forall m' \geq m : h'(T[m])=h'(T[m'])=e$. \par 
Next we show that $h'$ is strongly decisive and conservative; for that we show that, with every mind change, there is a new element of the target included in the conjecture which is currently not included but is included in all future conjectures; it is easy to see that this property implies both caution and strong decisiveness. 
Let $i$ and $i'$ be such that $\max(M(T[i'])) = T[i]$. This implies that
 $$\content(T[i]) \subseteq W_{h'(T[i'])}.$$
Let $j' > i'$ such that $h'(T[i']) \neq h'(T[j'])$. Then there is $j > i$ such that $\max(M(T[j'])) = T[j]$ and therefore 
$$\content(T[j]) \subseteq W_{h'(T[j'])}.$$
Note that in the following diagram $j$ could also be between $i$ and $i'$.
\begin{center}
\begin{tikzpicture}[scale = 0.9]

\node (left) at (-1,0)[]{};
\node (right) at (12,0)[]{};
\node (labelil) at (4,-1)[]{$h'(T[i']) = h(T[i])$};
\node (labeljl) at (10,-1)[]{$h'(T[j']) = h(T[j])$};
\node at (4,-1.5) {$\content(T[i]) \subseteq W_{h(T[i])}$};
\node at (10,-1.5) {$\content(T[j]) \subseteq W_{h(T[j])}$};

\draw[] (left) -- (right);
\draw[] (1,1pt) -- (1,-1pt) node[anchor=south]{$i$} node[anchor=north]{mind change $h$};
\draw[] (4,1pt) -- (4,-1pt) node[anchor=south]{$i'$} node[anchor=north]{mind change $h'$};
\draw[] (7,1pt) -- (7,-1pt) node[anchor=south]{$j$} node[anchor=north]{mind change $h$};
\draw[] (10,1pt) -- (10,-1pt) node[anchor=south]{$j'$} node[anchor=north]{mind change $h'$};
\draw[thick,decorate,decoration={brace,amplitude=12pt}] (4,0.5) -- (10,0.5) node[midway, above,yshift=12pt,]{no mind change $h'$};

\end{tikzpicture}
\end{center}
As $h$ is conservative and $\content(T[i]) \subseteq W_{h(T[i])}$, there exists $\ell$ such that $i < \ell < j$ and $T(\ell) \notin W_{h(T[i])}$. Then we have $\forall n \geq j' : T(\ell) \in W_{h'(T[n])}$ as $T(\ell) \in W_{h'(T[j'])}$. 

Obviously $h'$ is conservative as it only outputs (delayed) hypotheses of $h$ (and maybe skip some) and $h$ is conservative.
\end{proof}

\setboolean{useproof}{true}

In the following we consider three new learning restrictions. The learning restriction $\Caut_\mathbf{Fin}$ means that the learner never returns a hypothesis for a finite set that is a proper subset of a previous hypothesis. $\Caut_\infty$ is the same restriction for infinite hypotheses. With $\Caut_\mathbf{Tar}$ the learner is not allowed to ever output a hypothesis that is a proper superset of the target language that is learned. 

\begin{defn}\label{defn:CautVariations}
\begin{align*}
\Caut_{\mathbf{Fin}}(p,T) &\Leftrightarrow [\forall i < j: W_{p(j)} \subset W_{p(i)} \Rightarrow W_{p(j)} \text{ is infinite}]\\
\Caut_{\infty}(p,T) &\Leftrightarrow [\forall i < j: W_{p(j)} \subset W_{p(i)} \Rightarrow W_{p(j)} \text{ is finite}]\\
\Caut_{\mathbf{Tar}}(p,T) &\Leftrightarrow [\forall i: \neg( \content(T) \subset W_{p(i)})] 
\end{align*}
\end{defn}

\setboolean{useproof}{true}

\pgfdeclarelayer{background}
\pgfdeclarelayer{foreground}
\pgfsetlayers{background,main,foreground}

\begin{figure}[ht]
\begin{center}
\begin{tikzpicture}[above,sloped,shorten <=2mm,, shorten >=2mm]

\begin{scope}[every node/.style={minimum size=4mm}]

\node (nothing) at (1,0) {\textbf{T}};
\node (caut) at (1,-3.5)    	{$\Caut$};

\node (cautinf) at (-1.5,-1) {$\Caut_\infty$};
\node (cauttar) at (1,-2) {$\Caut_\textbf{Tar}$};
\node (cautfin) at (3.5,-2) {$\Caut_\textbf{Fin}$};

\draw (caut) -- (cautfin);
\draw (caut) -- (cauttar);
\draw (caut) -- (cautinf);
\draw (cautinf) -- (nothing);
\draw (cauttar) -- (nothing);
\draw (cautfin) -- (nothing);

\draw[draw=lightgray, line width = 2pt,line cap=round] (-3,-1.2) -- (5,-1.2);

\end{scope}

\end{tikzpicture}
\end{center}
\caption{Relation of different variants of cautious learning. A black line indicates inclusion (bottom to top); all and only the black lines meeting the gray line are proper inclusions.}\label{fig:GoldCautiousV}
\end{figure}
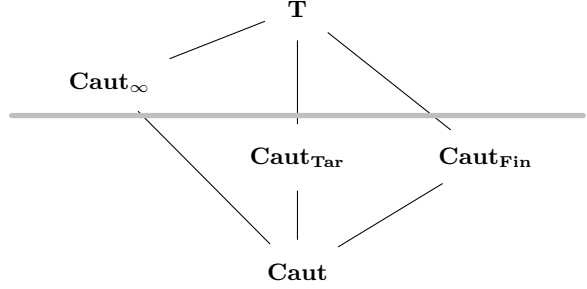

\setboolean{useproof}{true}

The proof of the following theorem is essentially the same as given in \cite{Osh-Sto-Wei:b:86:stl} to show that cautious learning is a proper restriction of $\Text\Gold\Ex$-learning, we now extend it to strongly decisive learning. Note that a different extension was given in \cite{Bal-Cas-Mer-Ste-Wie:j:08} (with an elegant proof exploiting the undecidability of the halting problem), pertaining to \emph{behaviorally correct} learning. The proof in \cite{Bal-Cas-Mer-Ste-Wie:j:08} as well as our proof would also carry over to the combination of these two extensions.

\begin{thm}\label{thm:ConvWMonInT}
There is a class of languages that is $\Text\Gold\SDec\Mon\Ex$-learnable, but not $\Text\Gold\Caut\Ex$-learnable.
\end{thm}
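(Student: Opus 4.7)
The plan is to adapt the Osherson--Stob--Weinstein construction that $\Caut$ is a proper restriction of $\Text\Gold\Ex$, showing that the same separating class is actually $\Text\Gold\SDec\Mon\Ex$-learnable. The class $\CalL$ consists of one fixed infinite language $L^*$ and an effective family $\{D_e\}_{e \in \natnum}$ of finite languages each a proper subset of $L^*$, constructed by an ORT-style diagonalization so that, for every total strongly-locking cautious candidate learner, the content of its canonical locking prefix on $L^*$ is contained in some $D_e \in \CalL$. Since $\Caut$ is semantic and delayable, Corollary~\ref{cor:SinkLocking} lets us restrict attention to total strongly-locking cautious candidates without loss of generality.

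For the lower bound, suppose $h$ cautiously $\Ex$-learns $\CalL$; by Corollary~\ref{cor:SinkLocking}, assume $h$ is total and strongly locking. On a canonical text for $L^*$, $h$ has a locking prefix $\sigma$ with $h(\sigma)$ an index for $L^*$, and by construction some $D_e \in \CalL$ satisfies $\text{content}(\sigma) \subseteq D_e \subsetneq L^*$. On any text $T'$ for $D_e$ beginning with $\sigma$, the value $h(\sigma)$ is an index for $L^*$; for $\Ex$-learning of $D_e$ the learner must later output an index for $D_e \subsetneq L^*$, which violates $\Caut$. Hence $h$ fails on $D_e$, contradicting the assumption.

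For the upper bound, construct an explicit learner $h^*$ that by default outputs a fixed index $n^*$ for $L^*$ and switches to an index $d_e$ for $D_e$ only when it becomes effectively certain that the target is $D_e$ --- using a long stability window together with a distinguishing feature of $D_e$ built into the ORT construction (for instance, an encoding of $e$ within $D_e$ itself). On a text for $D_e$, the content eventually equals $D_e$ and stays there, so $h^*$ eventually commits to $D_e$; on a text for $L^*$, the commitment criterion never fires and $h^*$ outputs $n^*$ forever. Monotonicity is immediate: on a text for $L^*$ the conjecture is constantly $L^*$; on a text for $D_e$ the only mind change goes $n^* \to d_e$ and both satisfy $W \cap D_e = D_e$. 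Strong decisiveness holds because there is at most one syntactic mind change per text and it is never reversed. Crucially $h^*$ is \emph{not} strongly locking on $L^*$ (the commitment criterion can take arbitrarily long to fail), so the lower-bound argument does not apply to it --- which is possible because $\SDec$ fails the defining condition of pseudo-semantic restriction (Definition~\ref{defn:SemanticRestriction}), and Corollary~\ref{cor:SinkLocking} therefore does not force $h^*$ into strongly-locking form.

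The hard part is orchestrating the ORT construction so that the $D_e$'s simultaneously (i) cover every cautious total strongly-locking learner's possible locking-prefix contents, so that the cautious trap fires, and (ii) are effectively distinguishable by the commitment criterion from transient finite prefixes appearing on a text for $L^*$. This is the same subtle combinatorial step as in OSW's original argument; once it is set up, the $\SDec\Mon$ extension follows by verifying that the above $h^*$ respects both restrictions.
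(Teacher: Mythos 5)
There is a genuine gap here, and it lies in the architecture of your class rather than in the details of the ORT bookkeeping. You fix a single infinite language $L^*$ and want finite sets $D_e \subset L^*$ such that (i) the content of a locking prefix on $L^*$ of \emph{every} successful cautious candidate is contained in some $D_e$, while (ii) $\{L^*\}\cup\{D_e \mid e\in\natnum\}$ is still $\Text\Gold\SDec\Mon\Ex$-learnable by your $h^*$. These two demands conflict. For (i): as the candidate varies, its minimal locking-prefix content on $L^*$ can be an arbitrarily large finite subset of $L^*$ (a candidate may simply wait for a prescribed finite $F\subset L^*$ before ever conjecturing $L^*$), so trapping every candidate forces the family $\{D_e\}$ to be cofinal in the finite subsets of $L^*$. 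But then the class contains an infinite language together with a cofinal family of its finite sublanguages and is not $\Text\Gold\Ex$-learnable at all, by the standard Gold/locking-sequence argument; in particular your $h^*$ cannot exist. Concretely, on a text for $L^*$ the switch $n^*\to d_e$ already violates $\Mon$ (since $W_{n^*}\cap L^* = L^* \not\subseteq D_e = W_{d_e}\cap L^*$), so $h^*$ must \emph{never} switch on such a text; yet it cannot distinguish a text for $D_e$ from a text for $L^*$ that shows only elements of $D_e$ for a long time, and the ``distinguishing feature of $D_e$'' you invoke cannot live inside $D_e$ without either lying outside $L^*$ (breaking $D_e\subset L^*$, which the cautious trap needs) or being a possible transient prefix of a text for $L^*$. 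Restricting (i) to candidates that actually learn the class does not rescue this, because which candidates qualify depends on which $D_e$ you include --- a circularity that ORT by itself does not break.

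The paper's proof avoids the shared $L^*$ entirely. It sets $\CalL = \Text\Psd\SDec\Mon\Ex(h)$ for the fixed learner $h(D,t)=\varphi_{\max(D)}(t)$, so that every language the diagonalization produces is in $\CalL$ automatically, and runs one ORT construction \emph{per cautious candidate} $h'$: either $h'$ never conjectures a proper superset of the data along the constructed text, in which case a candidate-specific \emph{infinite} language enters $\CalL$ that $h'$ cannot identify, or $h'$ overgeneralizes once, in which case a \emph{single} candidate-specific finite language enters $\CalL$ and cautiousness traps $h'$ above it. No infinite language in $\CalL$ ever coexists with a cofinal family of its own finite subsets, which is exactly what keeps the positive direction alive. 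Two smaller corrections: the paper explicitly states that $\SDec$ \emph{is} pseudo-semantic (it is merely not semantic), so your appeal to Definition~\ref{defn:SemanticRestriction} is mistaken and Corollary~\ref{cor:SinkLocking} would apply to $\SDec\Mon$-learners; and the reason the lower-bound trap does not threaten your $h^*$ is simply that $h^*$ is not cautious, not that it fails to be strongly locking.
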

\begin{proof}
Let $h$ be a $\Psd$-learner as follows,
$$\forall D, t : h(D,t) = \varphi_{\max(D)}(t),$$
and $\CalL = \Text\Psd\SDec\Mon\Ex(h)$. Suppose $\CalL$ is $\Text\Gold\Caut\Ex$-learnable through learner $h' \in \CalR$. We define, for all $\sigma$ and $t$, the total computable predicate $Q(\sigma, t)$ as 
$$Q(\sigma,t) \Leftrightarrow \content(\sigma) \subset W_{h'(\sigma)}^t.$$

We let $\ind$ such that, for every set $D$, $W_{\ind(D)} = D$. Using $\ORT$ we define $p$ and $e \in \CalR$ strongly monotone increasing such that for all $n$ and $t$,
\begin{align*}
W_p &= \range(e);\\
\varphi_{e(n)} &= \begin{cases} \ind(\content(e[n+1])), & \text{if } Q(e[n+1],t); \\
p, & \text{otherwise.} \end{cases}
\end{align*}
\textit{Case 1:} For all $n$ and $t$, $Q(e[n+1],t)$ does not hold. Then we have $\varphi_{e(n)}(t) = p$ for all $n,t$. Thus $W_p \in \CalL$ as for any $D \subseteq W_p$, $h(D,t) = \varphi_{\max(D)}(t) = p$. But $h'$ does not $\Text\Gold\Caut\Ex$-learns $W_p$ from text $e$ as for all $n$ and $t$, $\content(e[n])$ is not a proper subset of $W_{h'(e[n])}$ in $t$ steps although $W_p$ is infinite.

\textit{Case 2:} There are $n$ and $t$ such that $Q(e[n+1],t)$ holds. Then we have $\content(e[n+1]) \in \CalL$ as we will show now. Let $T$ be a text for $\content(e[n+1])$. As $e$ is monotone increasing we have that $e(n)$ is the maximal element in $\content(e[n+1])$. Additionally, for all $t' \geq t$, we have  $\varphi_{e(n)}(t') = \varphi_{e(n)}(t) = \ind(\content(e[n+1]))$. As $h$ makes only one mind change the strongly decisive and monotone conditions hold. Thus, there is $n_0$ such that, for all $n \geq n_0$, $h(\content(T[n]),n) = h(\content(T[n_0]),n_0) = \ind(\content(e[n+1]))$, i.e.\ $\content(e[n+1]) \in \CalL$.

The learner $h'$ does not $\Text\Gold\Caut\Ex$-learn $\content(e[n+1])$ as we know from the predicate $Q$ that $\content(e[n+1]) \subset W_{h'(\content(e[n+1]))}$ and the cautious learner $h'$ must not change to a proper subset of a previous hypothesis. 
\end{proof}

\setboolean{useproof}{true}

The following theorem contradicts a theorem given as an exercise in \cite{Osh-Sto-Wei:b:86:stl} (Exercise 4.5.4B).

\begin{thm}\label{thm:CautVarConv}
For $\delta \in \{\Caut, \Caut_\mathbf{Tar}, \Caut_\mathbf{Fin}\}$ we have
$$[\Text\Gold\delta\Ex] = [\Text\Gold\Conv\Ex].$$
\end{thm}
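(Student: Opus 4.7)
The plan is to close the chain
$[\Text\Gold\Conv\Ex] \subseteq [\Text\Gold\Caut\Ex] \subseteq [\Text\Gold\Caut_{\mathbf{Fin}}\Ex] \subseteq [\Text\Gold\Conv\Ex]$
and separately
$[\Text\Gold\Conv\Ex] \subseteq [\Text\Gold\Caut\Ex] \subseteq [\Text\Gold\Caut_{\mathbf{Tar}}\Ex] \subseteq [\Text\Gold\Conv\Ex]$,
from which the three equalities follow by sandwiching. Two of the inclusions are essentially free. The step $[\Text\Gold\Conv\Ex] \subseteq [\Text\Gold\Caut\Ex]$ is Theorem~\ref{thm:ConvInSDec}. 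The step $[\Text\Gold\Caut\Ex] \subseteq [\Text\Gold\Caut_{\mathbf{Fin}}\Ex]$ holds already at the level of sequence acceptance criteria: $\Caut$ outright forbids any later proper-subset conjecture, so the $\Caut_{\mathbf{Fin}}$-implication is vacuous. For $[\Text\Gold\Caut\Ex] \subseteq [\Text\Gold\Caut_{\mathbf{Tar}}\Ex]$, I would argue that every $\Caut\Ex$-learner $h$ of a language $L$ is automatically $\Caut_{\mathbf{Tar}}$ on each text $T$ for $L$: a conjecture $p(i)$ with $\content(T) \subsetneq W_{p(i)}$ would, by $\Ex$-convergence to an index for $L$ at some later step $j$, produce $W_{p(j)} = \content(T) \subsetneq W_{p(i)}$ with $i < j$, contradicting $\Caut$.

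The substance of the theorem lies in $[\Text\Gold\Caut_{\mathbf{Fin}}\Ex] \subseteq [\Text\Gold\Conv\Ex]$ and $[\Text\Gold\Caut_{\mathbf{Tar}}\Ex] \subseteq [\Text\Gold\Conv\Ex]$. For each, let $h$ be a given $\delta$-learner; by Corollary~\ref{cor:SinkLocking} I assume $h$ is total, syntactically decisive and strongly locking. The construction of a conservative $h'$ maintains, on input $\sigma$, an endorsed prefix $\tau_\sigma \subseteq \sigma$ and outputs $\pad(h(\tau_\sigma),\tau_\sigma)$. The endorsement is updated stickily: $\tau_\sigma = \tau_{\sigma^-}$ whenever the bounded check $\content(\sigma) \subseteq W_{h(\tau_{\sigma^-})}^{|\sigma|}$ still succeeds, and otherwise $\tau_\sigma$ advances to a new prefix of $\sigma$ chosen so that its $h$-hypothesis covers $\content(\sigma)$. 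Padding as in Lemma~\ref{lem:SynDec} keeps the syntactic index stable whenever no semantic choice is made, and strong locking of $h$ guarantees that $h'$ eventually endorses a prefix $\tau$ with $W_{h(\tau)} = L$.

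The main obstacle is that $\Conv$ is a semantic restriction (phrased in terms of the actual $W_{p(i)}$) while the update rule can only inspect the bounded approximation $W_e^{|\sigma|}$: a naive advance rule may switch at a step where $\content(\sigma) \subseteq W_{h(\tau_{\sigma^-})}$ actually holds but the bounded check fails for timing reasons, violating $\Conv$. This is where the $\delta$-restriction must be used essentially. For $\Caut_{\mathbf{Tar}}$, I would exploit that any $h(\tau)$ whose set covers $\content(T)$ in the limit must, by $\Caut_{\mathbf{Tar}}$, equal $L$ exactly; together with strong locking this lets me argue that once the endorsed hypothesis is large enough no actual inconsistency ever arises, and the advance rule can be tuned (by preferring, e.g., the shortest candidate prefix and re-verifying past data) so that every switch corresponds to genuine inconsistency with some specific element. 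For $\Caut_{\mathbf{Fin}}$, the corresponding leverage is that $h$ never retreats to a finite strict subset, which rules out exactly those spurious timing failures that would otherwise drive $h'$ to downgrade to a smaller conjecture. The delicate part I expect to occupy most of the proof is specifying the advance rule for $\tau_\sigma$ so that these two sources of structure (bounded checks plus $\delta$-structure of $h$'s conjectures) together exclude every conservative-violating update.
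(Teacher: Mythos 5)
Your decomposition into inclusions matches the paper's, and the easy directions are fine: Theorem~\ref{thm:ConvInSDec} gives $[\Text\Gold\Conv\Ex]\subseteq[\Text\Gold\Caut\Ex]$, the inclusion into $\Caut_\mathbf{Fin}$ is vacuous at the predicate level, and your $\Ex$-based argument for the inclusion into $\Caut_\mathbf{Tar}$ is correct. The problem is the hard direction $[\Text\Gold\delta\Ex]\subseteq[\Text\Gold\Conv\Ex]$. You correctly name the obstacle --- the update rule can only run a bounded check $\content(\sigma)\subseteq W_{h(\tau)}^{|\sigma|}$, so a switch may occur even though $\content(\sigma)\subseteq W_{h(\tau)}$ holds in the limit, violating $\Conv$ --- but your proposed remedy (``tuning the advance rule'') cannot close it. Containment in $W_{h(\tau)}$ is only semi-decidable, so any learner that outputs an index semantically equal to $W_{h(\tau)}$ and must decide in finite time whether to switch will either sometimes switch while still consistent (breaking $\Conv$) or sometimes withhold a necessary switch (breaking convergence). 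Neither ``preferring the shortest candidate prefix'' nor ``re-verifying past data'' changes this.

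The idea you are missing is that the paper changes the \emph{semantic content} of the output conjecture rather than the switching rule. Instead of $\pad(h(\tau),\tau)$ it outputs $p(\tau)$, where $W_{p(\tau)}$ enumerates $W_{h(\tau)}^t$ for increasing $t$ only as long as no short $\rho$ over $W_{h(\tau)}^t$ makes $h(\tau\diamond\rho)\neq h(\tau)$; once such a $\rho$ appears the enumeration is cut off, leaving a finite set. The switch trigger is $Q(\tau,\sigma)\Leftrightarrow h(\tau)\neq h(\sigma)\wedge\content(\sigma)\not\subseteq W_{h(\tau)}^{|\sigma|-1}$. A switch is then conservative with respect to the conjecture \emph{actually output}: either the triggering data never enters $W_{h(\tau)}$, or as soon as it does it witnesses a mind change of $h$ that truncates $W_{p(\tau)}$ before that data is enumerated; in both cases $\content(\sigma)\not\subseteq W_{p(\tau)}$. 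The price is that the limiting conjecture could a priori be a finite truncation or correspond to a proper superset of $L$, and this is exactly where $\delta\in\{\Caut,\Caut_\mathbf{Tar},\Caut_\mathbf{Fin}\}$ together with syntactic decisiveness is spent, in a case analysis you do not supply: one shows $L\subseteq W_{h(\sigma)}=W_{p(\sigma)}$ for the limiting $\sigma$ and then $W_{h(\sigma)}=L$ (for $\Caut_\mathbf{Fin}$, a proper infinite superset would force a mind change on data from $L$ and hence a finite truncation, a contradiction). Without the self-truncating conjectures your construction has no mechanism guaranteeing conservativeness, so the core of the proof is still open.
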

\begin{proof}
We get the inclusion [\textbf{TxtGConvEx}] $\subseteq$ [\textbf{TxtGCautEx}] as a direct consequence from Theorem~\ref{thm:ConvInSDec}. Obviously we have $[\Text\Gold\Caut\Ex] \subseteq [\Text\Gold\Caut_\mathbf{Tar}\Ex]$ and $[\Text\Gold\Caut\Ex] \subseteq [\Text\Gold\Caut_\mathbf{Fin}\Ex]$. Thus, it suffices to show $[\Text\Gold\delta\Ex] \subseteq [\Text\Gold\Conv\Ex]$.

Let $\mathcal{L}$ be $\Text\Gold\delta\Ex$-learnable by a syntactically decisive learner $h \in \mathcal{R}$ (see Corollary~\ref{cor:SinkLocking}).
Using the S-m-n Theorem we get a function $p \in \mathcal{R}$ such that 
$$\forall \sigma: W_{p(\sigma)} = \bigcup_{t \in \mathbb{N}} 
\begin{cases}
W_{h(\sigma)}^t,			&\mbox{if }\forall \rho \in (W_{h(\sigma)}^t)^*, |\sigma \diamond \rho| \leq t: h(\sigma \diamond \rho) = h(\sigma);\\
\emptyset,						&\mbox{otherwise.}
\end{cases}$$
We let $Q$ be the following computable predicate.
$$
Q(\hat{\sigma},\sigma) \Leftrightarrow  h(\hat{\sigma}) \neq h(\sigma)\ \land\ \content(\sigma) \not\subseteq W_{h(\hat{\sigma})}^{|\sigma|-1}.
$$
For given sequences $\sigma$ and $\tau$ we say $\tau \preceq \sigma$ if
$$
\content(\tau) \subseteq \content(\sigma)\ \land\ |\tau| \leq |\sigma|.
$$
This means that, for every $\sigma$, the set of all $\tau$ such that $\tau \preceq \sigma$ is finite and computable. We define a learner $h'$ such that $h'(\sigma) = p(\hat{\sigma})$ where $\hat{\sigma} \preceq \sigma$ using recursion. For a given sequence $\sigma \neq \emptyset$ let $\hat{\sigma}$ be such that $h'(\sigma^-)=p(\hat{\sigma})$.
$$
\forall \sigma: h'(\sigma) = 
\begin{cases}
p(\emptyset),									&\mbox{if }\sigma = \emptyset;\\
p(\tau \diamond \sigma),			&\mbox{else, if }\exists \tau, \hat{\sigma} \subseteq \tau \preceq \sigma: Q(\hat{\sigma},\tau);\footnotemark\\
h'(\sigma^-),		&\mbox{otherwise.}
\end{cases}\footnotetext{We choose the least such $\tau$, if existent.}
$$
This means $h'$ only changes its hypothesis if $Q$ ensures that $h$ made a mind change and that the previous hypothesis does not contain something of the new input data. We first show that $h'$ is conservative. Let $\sigma$ and $\hat{\sigma}$ be such that $h'(\sigma^-) = p(\hat{\sigma})$ and let $\tau \preceq \sigma$ be such that $Q(\hat{\sigma},\tau)$. Then we have, for all $t \geq |\tau|$ with $\content(\tau) \subseteq W_{h(\sigma)}^t$, 
\begin{align*}
\neg [&\forall \rho \in (W_{h(\hat{\sigma})}^t)^*, |\hat{\sigma} \diamond \rho| \leq t: h(\hat{\sigma} \diamond \rho) = h(\hat{\sigma})], \text{ which is equivalent to} \\
&\exists \rho  \in (W_{h(\hat{\sigma})}^t)^*, |\hat{\sigma} \diamond \rho| \leq t: h(\hat{\sigma} \diamond \rho) \neq h(\hat{\sigma});
\end{align*}
as there is $\rho$ such that $\hat{\sigma} \diamond \rho = \tau$. Therefore, we get $\content(\tau) \nsubseteq W_{p(\hat{\sigma})}$, as $W_{h(\hat{\sigma})}^t$ is monotone non-decreasing in $t$. Thus, $h'$ is conservative.

Second, we will show that $h'$ converges on any text $T$ for a language $L \in \mathcal{L}$. Let $L \in \mathcal{L}$ and $T$ be a text for $L$. Thus, $h$ converges on $T$. Suppose $h'$ does not converge on $T$. Let $(p(\sigma_i))_{i \in \mathbb{N}}$ the corresponding sequence of hypotheses. Then $T' = \bigcup_{i \in \mathbb{N}} \sigma_i$ is a text for $L$ as for every $i \in \mathbb{N}$, $T(i) \in \content(\sigma_{i+1})$. As $h'$ infinitely often changes its mind, we have that, for infinitely many $\sigma_i$, there is, for each $i$, $\tau_i$ such that $\sigma_i \subseteq \tau_i \subseteq \sigma_{i+1}$ with $Q(\sigma_i,\tau_i)$ holds. As $Q(\sigma_i,\tau_i)$ means that $h(\sigma_i) \neq h(\tau_i)$, $h$ diverges on $T'$, a contradiction. 

Third we will show that $h'$ converges to a correct hypothesis. Let $\sigma$ be such that $h'$ converges to $p(\sigma)$ on $T$. In the following we consider two cases for this $\sigma$. \par
\textit{Case 1:} If $\sigma$ is a locking sequence of $h$ on $L$ we have, for all $\tau \in \seq(L)$, $h(\sigma \diamond \tau) = h(\sigma)$ and especially for all $\rho \in (W_{h(\sigma)}^t)^*$ with $|\sigma \diamond \rho| \leq t$,  $h(\sigma \diamond \rho) = h(\sigma)$. Thus, $W_{p(\sigma)} = W_{h(\sigma)} = L$. 

\textit{Case 2:} Suppose $\sigma$ is not a locking sequence. As $\content(T) = L$ and $h'$ converges, we have for all $n$ and $\tau$ with $\sigma \subseteq \tau \preceq T[n]$, $\neg Q(\sigma, \tau)$. This means that, for all $\tau$ with elements of $L$ and $\sigma \subseteq \tau,$ $\neg Q(\sigma,\tau)$, i.e. 
\begin{equation}\label{eq:NotQSigmaTau}
\forall \tau \in \seq(L) : h(\sigma) = h(\tau)\ \lor\ \content(\tau) \subseteq W_{h(\sigma)}^{|\tau|-1}. 
\end{equation}
We now show $L \subseteq W_{h(\sigma)}$. If we have, for all $\tau \in \seq(L)$, $h(\sigma) = h(\tau)$, we get this directly from Equation~(\ref{eq:NotQSigmaTau}). Otherwise, let $\tau$ be such that $h(\sigma) \neq h(\sigma \diamond \tau)$. Let $x \in L$. Thus, $h(\sigma) \neq h(\sigma \diamond \tau \diamond x)$, as $h$ is syntactically decisive. From $\neg Q(\sigma, \sigma \diamond \tau \diamond x)$ we can conclude that $\content(\sigma \diamond \tau \diamond x) \subseteq W_{h(\sigma)}^{|\sigma \diamond \tau \diamond x|}$. Therefore we have, for all $x \in L$, $x \in W_{h(\sigma)}$ and thus $\content(T)  = L \subseteq W_{h(\sigma)}$. 

Additionally we will show now that $W_{h(\sigma)} = W_{p(\sigma)}$. Obviously we have $W_{p(\sigma)} \subseteq W_{h(\sigma)}.$ To show that $W_{h(\sigma)} \subseteq W_{p(\sigma)}$ suppose there is $x \in W_{h(\sigma)}$ such that $x \notin W_{p(\sigma)}$. Then there is a minimal $t$ such that $x \in W_{h(\sigma)}^t$ but there is $\rho \in  (W_{h(\sigma)}^t)^*, |\sigma \diamond \rho| \leq t$ such that $h(\sigma \diamond \rho) \neq h(\sigma)$ and therefore $h(\sigma \diamond \rho \diamond x) \neq h(\sigma \diamond \rho).$ As we have $\neg Q(\sigma, \sigma \diamond \rho \diamond x)$ which is equivalent to $h(\sigma) = h(\sigma \diamond \rho \diamond x)\ \lor\ \content(\sigma \diamond \rho \diamond x) \subseteq W_{h(\sigma)}^{|\sigma \diamond \rho \diamond x| - 1}$ and we supposed that $h(\sigma \diamond \rho \diamond x) \neq h(\sigma)$ it follows that $\content(\sigma \diamond \rho \diamond x) \subseteq W_{h(\sigma)}^{|\sigma \diamond \rho \diamond x| - 1}.$ This is a contradiction as $|\sigma \diamond \rho \diamond x| -1 \leq t.$ Thus, for all $x \in L$ we have $x \in W_{p(\sigma)}$ and from $L \subseteq W_{h(\sigma)}$ we get $W_{h(\sigma)} \subseteq W_{p(\sigma)}$. 

(a) $\delta = \Caut.$ We have that the learner must not change to a proper subset of a previous hypothesis and this means that $W_{h(\sigma)} = L$. \par
(b) $\delta = \Caut_\mathbf{Tar}.$ The learner $h$ never returns a hypothesis which is a proper superset of the language that is learned. Thus $W_{h(\sigma)} = L$. \par 
(c) $\delta = \Caut_\mathbf{Fin}.$ As $h$ must not change to a finite subset of a previous hypothesis, we suppose that $W_{h(\sigma)} \supset L$ and both $W_{h(\sigma)}$ and $L$ are infinite. This means there is a sequence $\rho \in \seq(L) \subseteq \seq(W_{h(\sigma)})$ such that $h(\sigma) \neq h(\sigma \diamond \rho)$. Thus, $W_{p(\sigma)}$ is finite, a contradiction to $W_{h(\sigma)}$ being infinite. Therefore we have $W_{h(\sigma)} = L$.
\end{proof}

\setboolean{useproof}{true}

From the definitions of the learning criteria we have $[\Text\Gold\Conv\Ex] \subseteq [\Text\Gold\WMon\Ex]$. Using Theorem~\ref{thm:CautVarConv} and the equivalence of weakly monotone and conservative learning (using $\Gold$) \cite{Kin-Ste:j:95:mon,Jai-Sha:j:98}, we get the following.

\begin{cor}\label{cor:ConvWMonCaut}
We have
$$[\Text\Gold\Conv\Ex] = [\Text\Gold\WMon\Ex] = [\Text\Gold\Caut\Ex].$$
\end{cor}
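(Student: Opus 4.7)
The plan is essentially to chain two equalities that are already in hand. First, instantiate Theorem~\ref{thm:CautVarConv} with $\delta = \Caut$; this immediately gives $[\Text\Gold\Caut\Ex] = [\Text\Gold\Conv\Ex]$, so the heavy lifting for the cautious/conservative half is already done. Second, invoke the known equivalence $[\Text\Gold\Conv\Ex] = [\Text\Gold\WMon\Ex]$ established in \cite{Kin-Ste:j:95:mon,Jai-Sha:j:98}. Transitivity of set equality then yields the three-way equality in the statement.

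For completeness I would remark on the one trivial direction before citing the nontrivial one: $[\Text\Gold\Conv\Ex] \subseteq [\Text\Gold\WMon\Ex]$ follows straight from the definitions, since if $h$ is conservative and $\content(T[j]) \subseteq W_{p(i)}$ for some $j > i$, then by induction on the intermediate indices (each consistency check is preserved, because the conjecture does not change) we obtain $p(i) = p(j)$, hence trivially $W_{p(i)} \subseteq W_{p(j)}$. The opposite inclusion $[\Text\Gold\WMon\Ex] \subseteq [\Text\Gold\Conv\Ex]$ is the content of the cited papers and is not re-proved here.

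There is no real obstacle: the only work has been packaged into Theorem~\ref{thm:CautVarConv} and the cited literature. If one insisted on reproving the weakly-monotone-to-conservative direction from scratch, the difficulty would lie in showing how to patch a weakly monotone learner into a conservative one — the idea being to keep the current hypothesis until the data strictly exceed it, exploiting weak monotonicity to know the conjectures form a chain during consistency streaks — but this is already in the literature and need not be repeated here. Thus the corollary is a one-line consequence and I would present it that way.
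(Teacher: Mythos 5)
Your proposal is correct and follows exactly the paper's own argument: Theorem~\ref{thm:CautVarConv} with $\delta = \Caut$ gives $[\Text\Gold\Caut\Ex] = [\Text\Gold\Conv\Ex]$, the trivial inclusion $[\Text\Gold\Conv\Ex] \subseteq [\Text\Gold\WMon\Ex]$ follows from the definitions, and the converse is the cited equivalence of weakly monotone and conservative learning from the literature.
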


\setboolean{useproof}{true}

Using Corollary~\ref{cor:ConvWMonCaut} and Theorem~\ref{thm:ConvInSDec} we get that weakly monotone $\Text\Gold\Ex$-learning is included in strongly decisive $\Text\Gold\Ex$-learning. Theorem~\ref{thm:ConvWMonInT} shows that this inclusion is proper.

\begin{cor}\label{cor:WMonInSDec}
We have $$[\Text\Gold\WMon\Ex] \subset [\Text\Gold\SDec\Ex].$$
\end{cor}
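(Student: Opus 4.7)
The plan is to prove the inclusion and the separation separately, both by directly chaining earlier results; essentially everything has already been set up, so this should amount to a short bookkeeping argument.

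For the inclusion $[\Text\Gold\WMon\Ex] \subseteq [\Text\Gold\SDec\Ex]$, I would start from Corollary~\ref{cor:ConvWMonCaut}, which gives $[\Text\Gold\WMon\Ex] = [\Text\Gold\Conv\Ex]$. Then Theorem~\ref{thm:ConvInSDec} states that any conservative learner can in fact be taken to be simultaneously cautious and strongly decisive, so $[\Text\Gold\Conv\Ex] = [\Text\Gold\Conv\SDec\Caut\Ex] \subseteq [\Text\Gold\SDec\Ex]$. Stringing these together gives $[\Text\Gold\WMon\Ex] \subseteq [\Text\Gold\SDec\Ex]$.

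For the strictness of the inclusion, I would appeal to Theorem~\ref{thm:ConvWMonInT}, which furnishes a class of languages that is $\Text\Gold\SDec\Mon\Ex$-learnable but not $\Text\Gold\Caut\Ex$-learnable. Since $\SDec\Mon\Ex$ is, by definition, a specialization of $\SDec\Ex$ (adding a monotonicity constraint only shrinks the admissible sequences of hypotheses), this class is a fortiori $\Text\Gold\SDec\Ex$-learnable. On the other hand, by Corollary~\ref{cor:ConvWMonCaut}, $[\Text\Gold\WMon\Ex] = [\Text\Gold\Caut\Ex]$, so the class is not $\Text\Gold\WMon\Ex$-learnable either. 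This witnesses properness.

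I do not anticipate any real obstacle: both parts are pure assembly from the cited results. The only thing to be mildly careful about is making the trivial direction from $\SDec\Mon\Ex$ to $\SDec\Ex$ explicit, and noting that Corollary~\ref{cor:ConvWMonCaut} is used in both directions (once to rewrite $\WMon$ as $\Conv$ for the inclusion, once to rewrite $\WMon$ as $\Caut$ for the separation).
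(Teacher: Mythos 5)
Your proposal is correct and matches the paper's argument exactly: the inclusion follows from Corollary~\ref{cor:ConvWMonCaut} combined with Theorem~\ref{thm:ConvInSDec}, and properness from the class in Theorem~\ref{thm:ConvWMonInT} together with $[\Text\Gold\WMon\Ex] = [\Text\Gold\Caut\Ex]$. Nothing is missing.
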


\setboolean{useproof}{true}

The next theorem is the last theorem of this section and shows that forbidding to go down to strict \emph{infinite} subsets of previously conjectures sets is no restriction.

\begin{thm}\label{thm:CautInfT}
We have 
$$[\Text\Gold\Caut_\infty\Ex] = [\Text\Gold\Ex].$$
\end{thm}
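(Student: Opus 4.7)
The inclusion $[\Text\Gold\Caut_\infty\Ex]\subseteq[\Text\Gold\Ex]$ is immediate, so the plan is to show the reverse inclusion. Given $\CalL\in[\Text\Gold\Ex]$, I would apply Corollary~\ref{cor:SinkLocking} together with Theorem~\ref{thm:FulkNormalForm} to fix a total, strongly locking, syntactically decisive, rearrangement-independent learner $h$ for $\CalL$. The observation driving the construction is that $\Caut_\infty$ imposes no obligation on transitions ending in a \emph{finite} hypothesis: only later conjectures with infinite $W$-set are forbidden from being proper subsets of earlier ones. Hence I aim to build $h'$ whose intermediate conjectures describe finite sets (trivially harmless under $\Caut_\infty$) and whose only infinite conjecture, in the limit, is a correct index for $L$.

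The construction uses the \emph{poisoning} technique emphasized in Section~\ref{sec:techniques}. For each $\sigma$, using the S-m-n theorem I would define a program $p(\sigma)$ that enumerates $\content(\sigma)$ immediately and dovetails the enumeration of $W_{h(\sigma)}$, permanently halting the latter as soon as an ``instability witness'' appears, namely some $\tau$ with $|\tau|\leq t$, $\content(\tau)\subseteq W_{h(\sigma)}^t\cup\content(\sigma)$, and $h(\sigma\diamond\tau)\neq h(\sigma)$. Then $W_{p(\sigma)}=W_{h(\sigma)}\cup\content(\sigma)$ precisely when $\sigma$ is a \emph{quasi-locking sequence} for $h$; otherwise $W_{p(\sigma)}$ is finite. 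I would then set $h'(\sigma)=p(\hat\sigma)$ where $\hat\sigma\subseteq\sigma$ is the shortest prefix that at stage $|\sigma|$ jointly satisfies (i) the stage-$|\sigma|$ approximation of the quasi-locking test, (ii) $\content(\sigma)\subseteq W_{h(\hat\sigma)}^{|\sigma|}\cup\content(\hat\sigma)$, and (iii) $h(\hat\sigma)=h(\sigma)$, with $\hat\sigma=\sigma$ as a fallback.

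To verify convergence on a text $T$ for $L$, I would argue that since (i)--(iii) tighten with $|\sigma|$ and $h$ is strongly locking (providing a true locking sequence for $L$ somewhere in $T$), the candidates $\hat{T[n]}$ stabilize for large $n$ at the shortest prefix $\sigma^*\subseteq T$ jointly meeting (i)--(iii) in the limit. The key lemma to prove is that any such $\sigma^*$ must satisfy $W_{h(\sigma^*)}=L$: otherwise, the quasi-locking property of $\sigma^*$ combined with $L\subseteq W_{h(\sigma^*)}\cup\content(\sigma^*)$ would force $h$ to converge on $T$ to an incorrect hypothesis, contradicting $h$ being a $\Text\Gold\Ex$-learner for $\CalL$. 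From this, $W_{p(\sigma^*)}=L$ and hence $h'$ converges on $T$ to a correct index.

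For $\Caut_\infty$, I would combine two companion facts: (a) $W_{p(\tau)}$ is infinite only when $\tau$ is genuinely quasi-locking, and (b) for any genuinely quasi-locking prefix $\tau$ of $T$, $W_{h(\tau)}\not\supsetneq L$ and $W_{h(\tau)}\supseteq\content(\tau)$, both by a text-surgery argument against $h$'s correctness on $T$. The hard part, I expect, is excluding strict descending pairs $W_{p(\tau_j)}\subsetneq W_{p(\tau_i)}$ with $i<j$ among \emph{intermediate} infinite conjectures coming from distinct quasi-locking prefixes: here the syntactic decisiveness of $h$ must be leveraged crucially, so that together with condition (iii) any two infinite intermediate $\hat{T[n]}$'s fall within a common syntactic hypothesis block of $h$, forcing their $W$-values to coincide and ruling out strict nesting.
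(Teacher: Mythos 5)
Your overall strategy is the paper's own: poison every conjecture so that it enumerates only a finite set unless the underlying learner $h$ is (quasi-)locked on it, so that the only infinite conjectures ever output are ones on which $h$ never again changes its mind on data drawn from that very conjecture. Your function $p$ is essentially identical to the paper's, and your convergence argument (strong locking yields a stabilizing prefix, and a quasi-locking prefix whose conjecture covers $\content(T)$ must already be correct) is sound. The paper's learner is simpler --- it just outputs $p(\sigma)$ at every mind change of $h$ --- but your more elaborate prefix selection does not break anything.

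The genuine gap is exactly where you flag it: verifying $\Caut_{\infty}$ for two distinct quasi-locking prefixes $\tau_i=\hat{T[i]}$ and $\tau_j=\hat{T[j]}$ with $i<j$ and $W_{p(\tau_j)}\subset W_{p(\tau_i)}$ infinite. The mechanism you propose --- that syntactic decisiveness of $h$ forces the two conjectures into ``a common syntactic hypothesis block'' so that their $W$-values coincide --- does not work: syntactic decisiveness only forbids returning to an abandoned hypothesis; it does not prevent $h$ from changing its mind between stages $i$ and $j$, so $h(\tau_i)\neq h(\tau_j)$ is entirely possible (and indeed the paper's proof of this theorem never uses syntactic decisiveness). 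The correct closing step is the poisoning itself. Since $\tau_i\subseteq T[i]\subseteq T[j]$, write $T[j]=\tau_i\diamond\rho$; your condition (ii) at stage $j$ gives $\content(T[j])\subseteq W_{p(\tau_j)}\subseteq W_{p(\tau_i)}=W_{h(\tau_i)}\cup\content(\tau_i)$, so $\rho$ draws all its data from the earlier conjecture. If $h(\tau_i\diamond\rho)=h(\tau_j)\neq h(\tau_i)$, then $\rho$ is precisely an instability witness for $\tau_i$, whence $W_{p(\tau_i)}$ is finite --- contradicting that it contains the infinite set $W_{p(\tau_j)}$. If instead $h(\tau_j)=h(\tau_i)$, then $W_{p(\tau_i)}=W_{h(\tau_j)}\cup\content(\tau_i)\subseteq W_{h(\tau_j)}\cup\content(\tau_j)=W_{p(\tau_j)}$ (using $\content(\tau_i)\subseteq\content(T[j])$), contradicting strict descent. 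This is exactly the paper's argument (there phrased as: the data $\tau$ leading to the later hypothesis is contained in $W_{p(T[i])}$, forcing $W_{p(T[i])}$ to be finite); once you replace the appeal to syntactic decisiveness by it, the proof is complete.
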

\begin{proof}
Obviously we have $[\Text\Gold\Caut_{\infty}\Ex] \subseteq [\Text\Gold\Ex]$. Thus, we have to show that $[\Text\Gold\Ex] \subseteq [\Text\Gold\Caut_\mathbf{\infty}\Ex]$.
Let $\CalL$ be a set of languages and $h$ be a learner such that $h$ $\Text\Gold\Ex$-learns $\CalL$ and $h$ is strongly locking on $\CalL$ (see Corollary~\ref{cor:SinkLocking}). We define, for all $\sigma$ and $t$, the set $M_{\sigma}^t$ such that
$$M_{\sigma}^t = \{\tau\ |\ \tau \in \seq(W_{h(\sigma)}^t \cup \content(\sigma))\ \land\ |\tau \diamond \sigma| \leq t\}.$$
Using the S-m-n Theorem we get a function $p \in \mathcal{R}$ such that
$$\forall \sigma: W_{p(\sigma)} = \content(\sigma) \bigcup_{t \in \mathbb{N}} 
\begin{cases}
W_{h(\sigma)}^t,			&\mbox{if }\forall \rho \in M_{\sigma}^t: h(\sigma \diamond \rho) = h(\sigma);\\
\emptyset,						&\mbox{otherwise.}
\end{cases}$$
We define a learner $h'$ as
$$\forall \sigma : h'(\sigma) = \begin{cases} p(\sigma), & \text{if } h(\sigma) \neq h(\sigma^-); \\ h'(\sigma^-), & \text{otherwise.} \end{cases}$$
We will show now that the learner $h'$ $\Text\Gold\Caut_{\infty}\Ex$-learns $\CalL$. Let an $L \in \CalL$ and a text $T$ for $L$ be given. As $h$ is strongly locking there is $n_0$ such that for all $\tau \in \seq(L)$, $h(T[n_0] \diamond \tau) = h(T[n_0])$ and $W_{h(T[n_0])} = L$. Thus we have, for all $n \geq n_0$, $h'(T[n]) = h'(T[n_0])$ and $W_{h'(T[n_0])} = W_{p(T[n_0])} = W_{h(T[n_0])} = L$. To show that the learning restriction $\Caut_{\infty}$ holds, we assume that there are $i < j$ such that $W_{h'(T[j])} \subset W_{h'(T[i])}$ and $W_{h'(T[j])}$ is infinite.
W.l.o.g. $j$ is the first time that $h'$ returns the hypothesis $W_{h'(T[j])}$. Let $\tau$ be such that $T[i] \diamond \tau = T[j]$. From the definition of the function $p$ we get that $\content(T[j]) \subseteq W_{h'(T[j])} \subseteq W_{h'(T[i])}$. Thus, $\content(\tau) \subseteq W_{h'(T[i])} = W_{p(T[i])}$ and therefore $W_{p(T[i])}$ is finite, a contradiction to the assumption that $W_{h'(T[j])}$ is infinite. 
\end{proof}

\section{Decisiveness}
\setboolean{useproof}{true}

\label{sec:Decisiveness}

In this section the goal is to show that decisive and strongly decisive learning separate (see Theorem~\ref{thm:StronglyDecisiveLearning}). For this proof we adapt a technique known in computability theory as a ``priority argument'' (note, though, that we are not dealing with oracle computations). In order to illustrate the proof with a simpler version, we first reprove that decisiveness is a restriction to $\Text\Gold\Ex$-learning (as shown in \cite{Bal-Cas-Mer-Ste-Wie:j:08}).

\setboolean{useproof}{true}

For both proofs we need the following lemma, a variant of which is given in \cite{Bal-Cas-Mer-Ste-Wie:j:08} for the case of decisive learning; it is easy to see that the proof from \cite{Bal-Cas-Mer-Ste-Wie:j:08} also works for the cases we consider here.

\begin{lem}\label{lem:NotNatnum}
Let $\CalL$ be such that $\natnum \not\in \CalL$ and, for each finite set $D$, there are only finitely many $L \in \CalL$ with $D \not\subseteq L$. Let $\delta \in \{\Dec,\SDec\}$. Then, if $\CalL$ is $\Text\Gold\delta\Ex$-learnable, it is so learnable by a learner which never outputs an index for $\natnum$.
\end{lem}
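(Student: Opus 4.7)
The plan is to transform the given $\Text\Gold\delta\Ex$-learner $h$ into an equivalent $h'$ that never syntactically outputs an index for $\natnum$. By Corollary~\ref{cor:SinkLocking} I may assume $h$ is total, strongly locking, and syntactically decisive; and I may assume $\CalL$ is infinite, as the finite case is handled directly by hard-coding canonical indices of the finitely many members of $\CalL$. Under this assumption the finiteness condition guarantees that every finite initial segment $\content(\sigma)$ is contained in all but finitely many members of $\CalL$.

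The central construction, via S-m-n, is a ``poisoned'' companion index $p(\sigma)$ mirroring the recipe in the proof of Theorem~\ref{thm:CautVarConv}: at each stage $t$, $W_{p(\sigma)}$ enumerates $W_{h(\sigma)}^t$ precisely when no $\rho \in \seq_{\leq t}$ with $\content(\rho) \subseteq W_{h(\sigma)}^t$ and $|\sigma \diamond \rho| \leq t$ causes $h$ to change its mind, and enumerates nothing otherwise. Thus $W_{p(\sigma)} = W_{h(\sigma)}$ exactly when $\sigma$ is a true locking sequence of $h$ (and $W_{p(\sigma)}$ is finite otherwise). The key claim is that $W_{p(\sigma)} \neq \natnum$ whenever $\content(\sigma) \subseteq L$ for some $L \in \CalL$: equality would force $\sigma$ to lock $h$ on an index for $\natnum$, and extending $\sigma$ to a text for any such $L \neq \natnum$ (which exists by the finiteness condition and $\CalL$ infinite) would have $h$ converge to an index for $\natnum \neq L$, contradicting that $h$ learns $L$. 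To also forbid $\natnum$ on spurious sequences, I would pass from $p$ to $q$ by additionally withholding $y_\sigma := \min(\natnum \setminus \content(\sigma))$, so that $W_{q(\sigma)} = W_{p(\sigma)} \setminus \{y_\sigma\}$; since $y_\sigma \in \natnum \setminus W_{q(\sigma)}$, this gives $W_{q(\sigma)} \neq \natnum$ for every $\sigma$.

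Finally, I would define $h'(\sigma) = q(\sigma)$ at mind-changes of $h$ and $h'(\sigma) = h'(\sigma^-)$ otherwise, so that $\SynDec$ of $h$ yields syntactic stabilization of $h'$ along any text; on a text $T$ for $L \in \CalL$, strong locking of $h$ and the monotonicity of $y_{T[n]}$ (stabilizing to $\min(\natnum \setminus L) \notin L$) together ensure the limit hypothesis of $h'$ has language $L$. The main obstacle is then checking that $h'$ inherits $\delta$: for $\SDec$ this is immediate from $\SynDec$ of $h$ and the triggering rule; for $\Dec$ the delicate point is that the single-element withholding could in principle identify two semantically distinct $h$-hypotheses (differing only at the withheld $y_\sigma$) into one $h'$-hypothesis in a stretch where $h$'s own $\Dec$-constraint does not supply intermediate agreement, and the resolution is to reduce any such hypothetical $h'$-violation to a failure of $h$ on some $L \in \CalL$, using the central claim above together with the monotonicity of $y_{T[n]}$ along $T$.
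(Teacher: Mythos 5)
The paper itself does not prove this lemma; it only cites \cite{Bal-Cas-Mer-Ste-Wie:j:08}, so your attempt has to stand on its own. Your central claim is correct and is indeed the heart of the matter: if $W_{p(\sigma)}=\natnum$ then $\sigma$ locks $h$ onto an index for $\natnum$, and since (for $\CalL$ infinite) \emph{every} finite set is contained in some $L\in\CalL$, extending $\sigma$ to a text for such an $L$ contradicts that $h$ learns $L$. Note that this already gives $W_{p(\sigma)}\neq\natnum$ for \emph{all} $\sigma$, so there are no ``spurious sequences'' and the extra withholding of $y_\sigma$ is unnecessary. Unfortunately it is also harmful: your $h'$ freezes at the last mind change of $h$, so its final conjecture on a text $T$ for $L$ is $q(\sigma_0)$ with $\sigma_0=T[n_0]$ fixed, and $y_{\sigma_0}=\min(\natnum\setminus\content(T[n_0]))$ is computed from the finitely many data seen at that point; generically $y_{\sigma_0}\in L$, so $W_{q(\sigma_0)}=L\setminus\{y_{\sigma_0}\}\neq L$ and $h'$ fails to learn. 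The ``monotone stabilization of $y_{T[n]}$ to $\min(\natnum\setminus L)$'' happens only in the limit, which $h'$ never reaches because it stops updating.

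There are two further gaps. First, for $\delta=\Dec$ the prefix $\sigma_0$ at $h$'s last mind change need not be a \emph{syntactic} locking sequence: decisiveness together with syntactic decisiveness only forces all $h(\sigma_0\diamond\rho)$, $\rho\in\seq(L)$, to be semantically correct, not syntactically equal to $h(\sigma_0)$. Your $p$ poisons on any syntactic mind change over $W_{h(\sigma_0)}$, so $W_{p(\sigma_0)}$ may be a finite proper subset of $L$ and convergence to a correct index again fails. (For $\SDec$ this can be repaired: rearranging $T$ as $\sigma_0\diamond\rho\diamond\cdots$ and applying $\SDec$ shows $\sigma_0$ really is a locking sequence.) Second, even granting correct convergence, you never verify that $h'$ satisfies $\delta$: two distinct poisoned conjectures $p(\sigma_i)$, $p(\sigma_k)$ are finite sets of the form $W^{t}_{h(\sigma_i)}$ and can coincide semantically with a different conjecture in between, which by itself violates $\Dec$ (and $\SDec$). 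This is exactly why the analogous construction in Theorem~\ref{thm:NatnumSDec} pads the poisoned sets with the pairwise-distinguishing sets $N(\sigma)=L_{|\sigma|}\cup\content(\sigma)$; some such disambiguation device is needed here too, and your final paragraph's plan to ``reduce any $h'$-violation to a failure of $h$'' does not address it.
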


\setboolean{useproof}{true}

Now we get to the theorem regarding decisiveness. Its proof is an adaptation of the proof given in \cite{Bal-Cas-Mer-Ste-Wie:j:08}, rephrased as a priority argument. This rephrased version will be modified later to prove the separation of decisive and strongly decisive learning.

\begin{thm}[\cite{Bal-Cas-Mer-Ste-Wie:j:08}]\label{thm:DecisiveLearning}
We have
$$
[\Text\Gold\Dec\Ex] \subset [\Text\Gold\Ex].
$$
\end{thm}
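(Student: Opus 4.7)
The plan is to construct a class $\CalL$, via a priority argument exploiting the Operator Recursion Theorem, that is $\Text\Gold\Ex$-learnable but not $\Text\Gold\Dec\Ex$-learnable. By Theorem~\ref{thm:Total} we may enumerate all total computable candidate decisive learners as $h_0, h_1, \ldots$, and we will shape $\CalL$ to satisfy the hypotheses of Lemma~\ref{lem:NotNatnum} so that each $h_i$ may be assumed never to output an index for $\natnum$.

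The isolated diagonalization against a single $h = h_i$ rests on the following tension. Suppose $h$ $\Text\Gold\Dec\Ex$-learns an infinite ``large'' target $A$, as well as every set in a family $B_0 \subsetneq B_1 \subsetneq \cdots$ of finite initial segments along a canonical enumeration $F$ of $A$, where $B_k = \content(F[s_k])$. Since $h$ learns $A$ on $F$ there are arbitrarily large $k$ with $W_{h(F[s_k])} = A$; since $h$ also learns $B_k$ on the text $F[s_k] \diamond \#^\infty$ there is $N_k$ with $W_{h(F[s_k] \diamond \#^{N_k})} = B_k$. The spliced text $F[s_k] \diamond \#^{N_k} \diamond F[s_k+1] \diamond F[s_k+2] \diamond \cdots$ is still a text for $A$; on it $h$'s semantic conjectures pass from $A$, to $B_k \subsetneq A$, and back to $A$, contradicting $\Dec$.

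To weave a single $\CalL$ that simultaneously defeats every $h_i$, we use ORT to reserve for each $i$ a follower consisting of one infinite candidate target $A_i$ and a family of finite subsets $B_{i,k}$, all of whose programs we control. A finite-injury priority argument then activates the strategies in order of $i$: the requirement $R_i$ fires at the first stage where $h_i$ semantically commits to a candidate target on the monotone text for $A_i$, at which point we lock in the $B_{i,k}$ needed for the splice above. When $R_i$ fires it may invalidate tentative definitions issued by lower-priority strategies $R_j$ with $j > i$, which then restart with fresh ORT slots; standard bookkeeping ensures each $R_i$ acts only finitely often and is eventually permanently satisfied. The resulting $\CalL$ is $\Text\Gold\Ex$-learnable because ORT supplies a computable labeling of its members from which a (non-decisive) learner can, in the limit, identify the correct index $(i,k)$ of a finite $B_{i,k}$ or else the infinite target $A_i$.

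The main obstacle is the combinatorial coordination: the family of all $A_i$ and $B_{i,k}$ must collectively satisfy the ``every finite $D$ is missing from only finitely many members'' hypothesis of Lemma~\ref{lem:NotNatnum}, while each individual strategy needs an unbounded supply of finite subsets of a fixed target. Meeting both demands requires selecting the $A_i$ as sufficiently large cofinite sets with carefully controlled complements and the $B_{i,k}$ as fast-growing initial segments, and it may require pruning from $\CalL$ those $B_{i,k}$ that would otherwise spoil the growth condition -- precisely the kind of accounting that the priority structure is designed to manage. Once this accounting is in place the diagonalization and learnability arguments proceed as sketched, and the proper inclusion $[\Text\Gold\Dec\Ex] \subset [\Text\Gold\Ex]$ follows.
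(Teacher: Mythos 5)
Your overall architecture --- a priority construction combined with Lemma~\ref{lem:NotNatnum} and a U-shape between a finite set and an infinite superset --- matches the paper's, but the isolated diagonalization you build everything on does not work, and the obstacle you defer to ``accounting'' is fatal to this formulation. A class containing an infinite set $A$ together with an increasing chain $B_0 \subset B_1 \subset \cdots$ of finite sets with $\bigcup_k B_k = A$ (which is what ``$B_k = \content(F[s_k])$'' for unboundedly many $s_k$ gives, and which you need so that some $B_k$ lies beyond the unknown convergence point of $h$ on $F$) is not $\Text\Gold\Ex$-learnable at all: a locking sequence $\sigma$ of any learner for $A$ satisfies $\content(\sigma) \subseteq B_k$ for all large $k$, and on the text $\sigma$ followed by an enumeration of such a $B_k$ the learner stays locked on its conjecture for $A \neq B_k$. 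So your per-requirement configuration cannot sit inside a class witnessing $[\Text\Gold\Ex] \setminus [\Text\Gold\Dec\Ex]$; the hypothesis from which your splice derives a contradiction is already unsatisfiable for $\Ex$ alone and says nothing about decisiveness. The same configuration independently violates the hypothesis of Lemma~\ref{lem:NotNatnum}: for $x \in \natnum \setminus A_i$, the finite set $\{x\}$ is missing from all infinitely many $B_{i,k}$. And pruning down to finitely many $B_{i,k}$ destroys the argument, because the stage at which $h_i$ first outputs a \emph{correct} index for the pre-reserved target $A_i$ is not a semi-decidable event that a finite-injury construction can react to.

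The paper avoids all of this by not fixing the infinite target in advance. It waits for the semi-decidable event that the candidate learner \emph{over-generalizes}, i.e.\ $\content(\sigma) \subset W^t_{\varphi_e(\sigma)}$; the top of the U-shape is then the learner's own conjecture $W_{\varphi_e(\sigma)}$, already output at $\sigma$, so a \emph{single} finite trap $D$ with $\content(\sigma) \subseteq D \subset W_{\varphi_e(\sigma)}$ placed into $\CalL$ forces the descent and re-ascent on a suitable text for $W_{\varphi_e(\sigma)}$. That trap $D^t_{e,\sigma}$ is moreover chosen to grow in $\id$ along with $W^t_{\varphi_e(\sigma)}$, so that each ID hosts only finitely many members of $\CalL$ and Lemma~\ref{lem:NotNatnum} becomes applicable; the complementary case, in which $\varphi_e$ never verifiably over-generalizes on a segment, is handled by making $h$ learn a co-singleton $\natnum \setminus \{s\}$, which such a learner can never conjecture. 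Your sketch would have to be rebuilt around this ``catch the learner in the act of over-generalizing'' pivot; as written, the quantifiers are in the wrong place.
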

\begin{proof} For this proof we will employ a technique from computability theory known as \emph{priority argument}. For this technique, one has a set of \emph{requirements} (we will have one for each $e \in \natnum$) and a \emph{priority} on requirements (we will prioritize smaller $e$ over larger). One then tries to fulfill requirements one after the other in an iterative manner (fulfilling the unfulfilled requirement of highest priority without violating requirements of higher priority) so that, in the limit, the entire infinite list of requirements will be fulfilled.

We apply this technique in order to construct a learner $h \in \CalP$ (and a corresponding set of learned sets $\CalL = \Text\Gold\Ex(h)$). Thus, we will give requirements which will depend on the $h$ to be constructed. In particular, we will use a list of requirement $(R_e)_{e \in \natnum}$, where lower $e$ have higher priority. For each $e$, $R_e$ will correspond to the fact that learner $\varphi_e$ is not a suitable decisive learner for $\CalL$. We proceed with the formal argument.

For each $e$, let Requirement $R_e$ be the disjunction of the following three predicates depending on the $h$ to be constructed.
\begin{enumerate}[(i)]
	\item $\exists x$: $\forall \sigma \in \seq(\natnum \setminus \{x\}): \varphi_e(\sigma)\diverges \vee W_{\varphi_e(\sigma)} \neq \natnum \setminus \{x\}$ and $h$ learns $\natnum \setminus \{x\}$.
	\item $\exists \sigma \in \seq{}: \content(\sigma) \subset W_{\varphi_e(\sigma)}$ and $h$ learns $W_{\varphi_e(\sigma)}$ and some $D$ with $\content(\sigma) \subseteq D \subset W_{\varphi_e(\sigma)}$.
	\item $\exists \sigma \in \seq: W_{\varphi_e(\sigma)} = \natnum$.
\end{enumerate}
If all $(R_e)_{e \in \natnum}$ hold, then every learner which never outputs an index for $\natnum$ fails to learn $\CalL$ decisively as follows. For each learner $\varphi_e$ which never outputs an index for $\natnum$, either (i) of $R_e$ holds, implying that some co-singleton is learned by $h$ but not by $\varphi_e$. Or (ii) holds, then there is a $\sigma$ on which $\varphi_e$ generalizes, but will later have to abandon this correct conjecture $p = \varphi_e(\sigma)$ in order to learn some finite set $D$; as, after the change to a hypothesis for $D$, the text can still be extended to a text for $W_p$, the learner is not decisive.\footnote{One might wonder why the U-shape can be achieved on a language which is to be learned: after all, those can be avoided, according to the theorem that non-U-shaped learning is not a restriction to $\Text\Gold\Ex$ \cite{Bal-Cas-Mer-Ste-Wie:j:08}. However, the price for avoiding it is to output a conjecture for $\natnum$.}

Thus, all that remains is to construct $h$ in a way that all of $(R_e)_{e \in \natnum}$ are fulfilled. In order to coordinate the different requirements when constructing $h$ on different inputs, we will divide the set of all possible input sequences into infinitely many segments, of which every requirement can ``claim'' up to two at any point of the algorithm defining $h$; the chosen segments can change over the course of the construction, and requirements of higher priority might ``take away'' segments from requirements with lower priority (but not vice versa). We follow~\cite{Bal-Cas-Mer-Ste-Wie:j:08} with the division of segments: For any set $A \subset \natnum$ we let $\id(A) = \min(\natnum \setminus A)$ be the \emph{ID of $A$}; for ease of notation, for each finite sequence $\sigma$, we let $\id(\sigma) = \id(\content(\sigma))$. For each $s$, the $s$th segment contains all $\sigma$ with $\id(\sigma) = s$. We note that $\id$ is \emph{monotone}, i.e.
\begin{equation}\label{eq:IDMonotone}
\forall A,B \subset \natnum: A \subseteq B \Rightarrow \id(A) \leq \id(B).
\end{equation}
The first way of ensuring some requirement $R_e$ is via (i); as this part itself is not decidable, we will check a ``bounded'' version thereof. We define, for all $e,t,s$,
$$
P_{e,t}(s)  \Leftrightarrow (\forall \sigma \in \seq_{\leq t} \mid \id(\sigma) = s) \; \Phi_e(\sigma) > t \vee \content(\sigma) \not\subset W_{\varphi_e(\sigma)}^t.
$$
For any $e$, if we can find an $s$ such that, for all $t$, we have $P_{e,t}(s)$, then it suffices to make $h$ learn $\natnum \setminus \{s\}$ in order to fulfill $R_e$ via part (i); this requires control over segment $s$ in defining $h$. 

Note that, if we ever cannot take control over some segment because some requirement with higher priority is already in control, then we will try out different $s$ (only finitely many are blocked).

If we ever find a $t$ such that $\neg P_{e,t}(s)$, then we can work on fulfilling $R_e$ via (ii), as we directly get a $\sigma$ where $\varphi_e$ over the content generalizes. In order to fulfill $R_e$ via (ii) we have to choose a finite set $D$ with $\content(\sigma) \subseteq D \subset W_{\varphi_e(\sigma)}$. We will then take control over the segments corresponding to $\id(D)$ and $\id(W_{\varphi_e(\sigma)}^t)$ (for growing $t$), \emph{but not necessarily over segment $s$}, and thus establish $R_e$ via (ii). Note that, again, the segments we desire might be blocked; but only finitely many are blocked, and we require control over $\id(D)$ and $\id(W_{\varphi_e(\sigma)}^t)$, both of which are at least $s$ (this follows from $\id$ being monotone, see Equation~(\ref{eq:IDMonotone}), and from $\content(\sigma) \subseteq D \subset W_{\varphi_e(\sigma)}^t$); thus, we can always find an $s$ for which we can either follow our strategy for (i) or for (ii) as just described.

It is tempting to choose simply $D = \content(\sigma)$, this fulfills all desired properties. The main danger now comes from the possibility of $\varphi_e(\sigma)$ being an index for $\natnum$: this will imply that, for growing $t$, $y = \id(W_{\varphi_e(\sigma)}^t)$ will also be growing indefinitely. Of course, there is no problem with satisfying $R_e$, it now holds via (iii); but as soon as at least two requirements will take control over segments $y$ for indefinitely growing $y$, they might start blocking each other (more precisely, the requirement of higher priority will block the one of lower priority). We now need to know something about our later analysis: we will want to make sure that every requirement $R_e$ either (a) converges in which segments to control or (b) for all $n$, there is a time $t$ in the definition of $h$ after which $R_e$ will never have control over any segment corresponding to IDs $\leq n$; in fact, we will show this later by induction (see Claim~\ref{claim:InductionProof}). Any requirement which takes control over segments $y$ for indefinitely growing $y$ might be blocked infinitely often, and thus forced to try out different $s$ for fulfilling $R_e$, including returning to $s$ that were abandoned previously because of (back then) being blocked by a requirement of higher priority. Thus, such a requirement would fulfill neither (a) nor (b) from above. We will avoid this problem by \emph{not} choosing $D = \content(\sigma)$, but instead choosing a $D$ which grows in ID along with the corresponding $W_{\varphi_e(\sigma)}^t$. The idea is to start with $D = \content(\sigma)$ and then, as $W_{\varphi_e(\sigma)}^t$ grows, add more elements. For this we make some definitions as follows.

For a finite sequence $\sigma$ we let $\id'(\sigma)$ be the least element not in $\content(\sigma)$ which is larger than all elements of $\content(\sigma)$. For any finite sequence $\sigma$ and $e,t \geq 0$ we let $D^t_{e,\sigma}$ be such that
$$
D^t_{e,\sigma} = 
\begin{cases}
\content(\sigma),			&\mbox{if }\id(W_{\varphi_e(\sigma)}^t) \leq \id'(\sigma);\\
\{0,\ldots, \id(W_{\varphi_e(\sigma)}^t)-2\},			&\mbox{otherwise.}
\end{cases}
$$
For all $e,t$ and $\sigma$ with $\content(\sigma) \subset W_{\varphi_e(\sigma)}$ we have
\begin{equation}\label{eq:defDTSigma}
\content(\sigma) \subseteq D^t_{e,\sigma} \subset W_{\varphi_e(\sigma)}.
\end{equation}
Thus, we will use the sets $D^t_{e,\sigma}$ to satisfy (ii) of $R_e$ (in place of $D$).

We now have all parts that are required to start giving the construction for $h$. In that construction we will make use of a subroutine which takes as inputs a set $B$ of blocked indices, a requirement $e$ and a time bound $t$, and which finds triples $(x,y,\sigma)$ with $x,y \not\in B$ such that
\begin{equation}\label{eq:defnTWitness}
P_{e,t}(x) \mbox{ or } \big[\content(\sigma) \subset W_{\varphi_e(\sigma)}^t \wedge \id(D^t_{e,\sigma}) = x \wedge \id(W_{\varphi_e(\sigma)}^t) = y\big].
\end{equation}
We call $(x,y,\sigma)$ fulfilling Equation~(\ref{eq:defnTWitness}) for given $t$ and $e$ a \emph{$t$-witness for $R_e$}. The subroutine is called \findWitness\ and is given in Algorithm~\ref{alg:priorityArgumentDecSubroutine}.
\begin{algorithm}

\For{$s = 0$ \To $\max(B) + 1$}{
	\uIf{$P_{e,t}(s)$ and $s \not\in B$}{
		\Return $(s,s,0)$\;
	} \ElseIf{\label{line:PCondition}$\neg P_{e,t}(s)$} {
		Let \label{line:sigmaSearch} $\sigma$ be minimal with $\id(\sigma) = s$ and $\content(\sigma) \subset W_{\varphi_e(\sigma)}^t$\;
		$x$ $\assign$ $\id(D^t_{e,\sigma})$\;
		$y$ $\assign$ $\id(W_{\varphi_e(\sigma)}^t)$\;
		\If{$x \not\in B$ and $y \not\in B$}{
			\Return $(x,y,\sigma)$\;
		}
	}
}
\Return \texttt{error}\;
\caption{\findWitness$(B,e,t)$}\label{alg:priorityArgumentDecSubroutine}
\end{algorithm}

We now formally show termination and correctness of our subroutine.
\begin{claim} Let $e,t$ and a finite set $B$ be given. The algorithm \findWitness\ on $(B,e,t)$ terminates and returns a $t$-witness $(x,y,\sigma)$ for $R_e$ such that $x,y \not\in B$.
\end{claim}
\begin{claimProof}
From the condition in line~\ref{line:PCondition} we see that the search in line~\ref{line:sigmaSearch} is necessarily successful, showing termination.
Using the monotonicity of $\id$ from Equation~(\ref{eq:IDMonotone}) on Equation~(\ref{eq:defDTSigma}) we have that the subroutine \findWitness\ cannot return \texttt{error} on any arguments $(B,e,t)$: for $s=\max(B)+1$, we either have $P_{e,t}(s)$ or the $x$ and $y$ chosen are larger than $\id(\sigma) = s > \max(B)$.
\end{claimProof}

With the subroutine given above, we now turn to the priority construction for defining $h$ detailed in Algorithm~\ref{alg:priorityArgumentDec}. This algorithm assigns witness tuples to more and more requirements, trying to make sure that they are $t$-witnesses, for larger and larger $t$. For each $e$, $w_e(t)$ will be the witness tuple associated with $R_e$ after $t$ iterations (defined for all $t \geq e$). We say that a requirement $R_e$ \emph{blocks} an ID $n$ iff $n \in \{x,y\}$ for the witness tuple $w_e(t) = (x,y,\sigma)$ currently associated with $R_e$. We say that a tuple $(x,y,\sigma)$ is \emph{$(e,t)$-legal} iff it is a $t$-witness for $R_e$ and $x$ and $y$ are not blocked by any $R_{e'}$ with $e' < e$. Clearly, it is decidable whether a triple is $(e,t)$-legal.

In order to define the learner $h$ we will need some functions giving us indices for the languages to be learned. To that end, let $p,q \in \CalR$ (using the S-m-n Theorem) be such that
\begin{eqnarray*}
\forall n: W_{q(n)} & = & \natnum \setminus \{n\};\\
\forall e,t,\sigma: W_{p(e,t,\sigma)} & = & D^t_{e,\sigma}.
\end{eqnarray*}
To increase readability, we allow assignments to values of $h$ for arguments on which $h$ was already defined previously; in this case, the new assignment has no effect.
\begin{algorithm}
\For{$t = 0$ \To $\infty$}{
	\For{$e = 0$ \To $t$}{
		\uIf{$t=0$, $w_e(t-1)$ is undefined or $w_e(t-1)$ is not $(e,t)$-legal}{
			Let $B$ be the set of IDs blocked by any $e' < e$\;
			$(x,y,\sigma)$ $\assign$ \findWitness$(B,e,t)$\;
		} \Else {
			$(x,y,\sigma)$ $\assign$ $w_e(t-1)$\;
		}
		$w_e(t)$ $\assign$ $(x,y,\sigma)$\;
		\uIf{$P_{e,t}(x)$}{
			\ForEach{$\tau \in \seq_{\leq t}$ with $\id(\tau) = x$}{
					$h(\tau)$ $\assign$ $q(x)$\;
			}
		}
		\Else{
			\ForEach{$\tau \in \seq_{\leq t}$ with $\content(\tau) = D^t_{e,\sigma}$}{
					$h(\tau)$\label{line:outputP} $\assign$ $p(e,t,\sigma)$\;
			}
			\ForEach{$\tau \in \seq_{\leq t}$ with $\id(\tau) = y$}{
					$h(\tau)$\label{line:folowE} $\assign$ $\varphi_e(\sigma)$\;
			}
		}
	}
}
\caption{Priority Construction $\Dec$}\label{alg:priorityArgumentDec}
\end{algorithm}
Regarding Algorithm~\ref{alg:priorityArgumentDec}, note that lines 3--8 make sure that we have an appropriate witness tuple. We will later show that the sequence of assigned witness tuples will converge (for learners never giving a conjecture for $\natnum$). Lines 9--11 will try to establish the requirement $R_e$ via (i), once this fails it will be established in lines 12--16 via (ii). 

After this construction of $h$, we let $\CalL = \Text\Gold\Ex(h)$ be the target to be learned. First note that the IDs blocked by different requirements are always disjoint (at the end of an iteration of $t$). As the major part of the analysis, we show the following claim by induction, showing that, for each $e$, either the triple associated with $R_e$ converges or it grows arbitrarily in both its $x$ and $y$ value (this is what we earlier had to carefully choose the $D$ for).

\begin{claim}\label{claim:InductionProof}
For all $e$ we have $R_e$ and, for all $n$, there is $t_0$ such that either
$$
\forall t \geq t_0: R_e \mbox{ does not block any ID }\leq n
$$
or
$$
\forall t \geq t_0: w_e(t) = w_e(t_0).
$$
\end{claim}
\begin{claimProof}
As our induction hypothesis, let $e$ be given such that the claim holds for all $e' < e$.

Case 1: There is $t_0$ such that $\forall t \geq t_0: w_e(t) = w_e(t_0)$.\\
Then, for all $t$, $(x,y,\sigma) = w_e(t_0)$ is a $t$-witness for $R_e$; in the case of $\forall t: P_{e,t}(x)$, we have that, for all but finitely many $\tau$ with $\id(\tau) = x$, $h(\tau) = q(x)$, and index for $\natnum \setminus \{x\}$; this implies $\natnum \setminus \{x\} \in \CalL$, which shows $R_e$. 

Otherwise we have, for all $t \geq t_0$, $D^t_{e,\sigma} = D^{t_0}_{e,\sigma}$. Furthermore we get, for all but finitely many $\tau$ with $\content(\tau) = D^{t_0}_{e,\sigma}$, $h(\tau) = p(e,t,\sigma)$, and index for $D^{t_0}_{e,\sigma}$; this implies $D^{t_0}_{e,\sigma} \in \CalL$. Consider now all those $\tau$ with $\id(\tau) = y$. If $\id(D^{t_0}_{e,\sigma}) = y$, then $h$ is already be defined on infinitely many such $\tau$, namely in case of $\content(\tau) = D^{t_0}_{e,\sigma}$. However, we have that $D^{t_0}_{e,\sigma}$ is a \emph{proper} subset of $W_{\varphi_e(\sigma)}$, which shows that, on any text for $W_{\varphi_e(\sigma)}$, $h$ will eventually only output $\varphi_e(\sigma)$, which gives $W_{\varphi_e(\sigma)} \in \CalL$ as desired and, thus, $R_e$.

Case 2: Otherwise.\\
For each ID $s$ there exists at most finitely many $\sigma$ with $\id(\sigma) = s$ and $\sigma$ is used in the witness triple for $R_e$; this follows from the choice of $\sigma$ in the subroutine \findWitness\ as a minimum, where, for larger $t$, all previously considered $\sigma$ are still considered (so that the chosen minimum might be smaller for larger $t$, but never go up, which shows convergence). A triple is only abandoned if it is not legal any more; this means it is either blocked or it is not a $t$-witness triple for some $t$. Using the induction hypothesis, the first can only happen finitely many times for any given tuple; the second implies the desired increase in both the $x$ and the $y$ value of the witness tuple. For this we also use our specific choice of $D$ as growing along with the ID of the associated $W_{\varphi_e(\sigma)}^t$ and we use that any witness tuple with a $\sigma$ with $\id(\sigma) = s$ has $x$ and $y$ value of at least $s$, due to the monotonicity of $\id$.

To show $R_e$ (we will show (3)), let $t_1$ be the maximum over all $t_0$ existing for the converging $e' < e$ by the induction hypothesis and $e$. Let $(x,y,\sigma) = w_e(t_1)$ be the $t_1$-witness triple chosen for $R_e$ in iteration $t_1$. Suppose, by way of contradiction, that $\varphi_{e}(\sigma)$ is not an index for $\natnum$; let $n = \id(W_{\varphi_{e}(\sigma)})$. Let $t_2$ be the maximum over all $t_0$ found by the induction hypothesis for all $e' < e$ with the chosen $n$. Since the triple $(x,y,\sigma)$ is $(e,t)$-legal for all $t \geq t_2$, we get a contradiction to the unbounded growth of the witness triple.

This shows that $\varphi_{e}(\sigma)$ is an index for $\natnum$, and thus we have $R_e$.
\end{claimProof}
With the last claim we now see that all requirement are satisfied. This implies that $\CalL$ cannot be $\Text\Gold\Dec\Ex$-learned by a learner never using an index for $\natnum$ as conjecture. 

We have that $\natnum \not\in \CalL$. Furthermore, for any ID $s$, there are only finitely many sets in $\CalL$ with that ID; this implies that, for every finite set $D$, there are only finitely many elements $L \in \CalL$ with $D \not\subseteq L$. Thus, using Lemma~\ref{lem:NotNatnum}, $\CalL$ is not decisively learnable at all.
\end{proof}

\setboolean{useproof}{true}

While the previous theorem showed that decisiveness poses a restriction on $\Text\Gold\Ex$-learning, the next theorem shows that the requirement of strong decisiveness is even more restrictive. The proof follows the proof of Theorem~\ref{thm:DecisiveLearning}, with some modifications.

\begin{thm}\label{thm:StronglyDecisiveLearning}
We have
$$
[\Text\Gold\SDec\Ex] \subset [\Text\Gold\Dec\Ex].
$$
\end{thm}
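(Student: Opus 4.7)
The inclusion $[\Text\Gold\SDec\Ex] \subseteq [\Text\Gold\Dec\Ex]$ follows directly from the definitions. For strictness we construct a class $\CalL \in [\Text\Gold\Dec\Ex] \setminus [\Text\Gold\SDec\Ex]$ via a priority argument patterned on the proof of Theorem~\ref{thm:DecisiveLearning}, preserving the requirement-and-segment structure but replacing clause~(ii) of $R_e$ by an $\SDec$-specific version. The observation driving the change is that $\SDec$ is violated on a single text as soon as two of its prefixes are mapped by $\varphi_e$ to syntactically distinct but semantically identical conjectures. The new clause~(ii$'$) accordingly asserts the existence of finite sequences $\sigma, \sigma'$, with $\sigma$ a proper prefix of $\sigma'$, satisfying $\content(\sigma') \subseteq W_{\varphi_e(\sigma)}$, $W_{\varphi_e(\sigma)} = W_{\varphi_e(\sigma')}$ and $\varphi_e(\sigma) \neq \varphi_e(\sigma')$, together with the requirement that $h$ learn $W_{\varphi_e(\sigma)}$. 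Any text for $W_{\varphi_e(\sigma)}$ extending $\sigma'$ then witnesses an $\SDec$ violation by $\varphi_e$ at positions $|\sigma|$ and $|\sigma'|$. Clauses~(i) and~(iii) are kept verbatim, with~(iii) again discharged at the end of the proof via Lemma~\ref{lem:NotNatnum}.

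Algorithmically, the undecidable condition $W_{\varphi_e(\sigma)} = W_{\varphi_e(\sigma')}$ is replaced by its bounded approximation $W_{\varphi_e(\sigma)}^t = W_{\varphi_e(\sigma')}^t$, which is necessary at every finite $t$ and sufficient in the limit. The subroutine \findWitness\ is adjusted to search over pairs $(\sigma, \sigma')$ of length at most $t$ satisfying the bounded check and to return a tuple of the form $(x, y, (\sigma, \sigma'))$ with $x = y = \id(W_{\varphi_e(\sigma)}^t)$; the monotonicity of~$\id$ (Equation~(\ref{eq:IDMonotone})) together with the monotonicity of $W_{\varphi_e(\sigma)}^t$ in~$t$ ensures that \findWitness\ still never returns $\mathtt{error}$.

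The critical second change is in the languages populating $\CalL$: the finite sets $D^t_{e,\sigma}$ used in Theorem~\ref{thm:DecisiveLearning} are \emph{not} included, since those were precisely what forced the $\Dec$-style U-shape. Each satisfied requirement contributes only a co-singleton (via~(i)) or a single language $W_{\varphi_e(\sigma)}$ (via~(ii$'$)). Distinct requirements attach to distinct IDs and therefore contribute distinct languages, so on every text $T$ for a language in $\CalL$ the semantic sequence produced by $h$ consists of pairwise distinct values culminating in the true target. Hence $h$ is itself $\Dec$, giving $\CalL \in [\Text\Gold\Dec\Ex]$.

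The main obstacle is the analogue of Claim~\ref{claim:InductionProof}: for each $e$, the sequence of witness tuples associated with $R_e$ must either stabilize or cause the claimed IDs to grow unboundedly, so that requirements of lower priority eventually make progress. Under clause~(ii$'$) both coordinates of the witness tuple are driven by $\id(W_{\varphi_e(\sigma)}^t)$ and therefore grow with $t$ whenever the approximations do. A current witness can become illegal for two reasons --- either the bounded equality test fails at some larger $t$, or a higher-priority requirement blocks one of its IDs --- and in each case a new search is started at a strictly larger ID (again using the monotonicity of~$\id$). The inductive dichotomy from the original proof then carries through: either the witness converges, in which case $R_e$ is directly witnessed by an $\SDec$-violating text for the associated language, or the IDs grow unboundedly, freeing any fixed ID for lower-priority requirements in the limit.
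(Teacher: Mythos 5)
There is a genuine gap, and it lies at the heart of your clause~(ii$'$): a requirement in a priority argument must be satisfiable against \emph{every} candidate learner, but (ii$'$) asks you to \emph{find} a pair $\sigma \subset \sigma'$ on which $\varphi_e$ already outputs syntactically distinct, semantically identical conjectures. Nothing about the class you are building forces $\varphi_e$ ever to do this. Consider for instance the learner with $\varphi_e(\sigma) = q(\id(\sigma))$, a canonical index for $\natnum \setminus \{\id(\sigma)\}$: it never conjectures $\natnum$ (so (iii) fails), it learns every co-singleton (so clause~(i) is unsatisfiable for it), and since its conjectured languages along any prefix chain are pairwise distinct unless the conjectures are syntactically equal, clause~(ii$'$) is unsatisfiable as well. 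For such an $e$ the requirement $R_e$ simply cannot be met, and your modified \findWitness\ has nothing legitimate to return; at best it returns spurious pairs passing the bounded test $W^t_{\varphi_e(\sigma)} = W^t_{\varphi_e(\sigma')}$ that later fail, a failure mode which, unlike the growth of $W^t_{\varphi_e(\sigma)}$ in the original proof, gives no lower bound on the IDs of the replacement witness and therefore also breaks the convergence-or-unbounded-growth dichotomy of Claim~\ref{claim:InductionProof}.

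The finite sets $D^t_{e,\sigma}$ you discarded are precisely the device that \emph{forces} the syntactic mind change. The paper keeps them, but makes the semantics of the associated conjecture conditional: $p'(e,t,\sigma)$ denotes $D^t_{e,\sigma}$ if $\varphi_e$ never changes its mind on extensions of $\sigma$ by data from $D^t_{e,\sigma}$, and denotes $W_{\varphi_e(\sigma)}$ otherwise. This yields a dichotomy that is exhaustive for every learner: if $\varphi_e$ makes no such mind change, it converges on texts for $D^t_{e,\sigma} \in \CalL$ to $\varphi_e(\sigma)$, an index for the strictly larger set $W_{\varphi_e(\sigma)}$, hence fails to learn $\CalL$ at all; if it does make such a mind change, then $W_{\varphi_e(\sigma)} \in \CalL$, and on a text for $W_{\varphi_e(\sigma)}$ beginning with $\sigma \diamond \tau$ the learner syntactically abandons the correct conjecture $\varphi_e(\sigma)$ and must later return to a semantically equal one, violating $\SDec$. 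The constructed $h$ remains decisive because $p'$ switches to an index for $W_{\varphi_e(\sigma)}$ exactly in the branch where the U-shape would otherwise occur (together with the additional per-segment poisoning via $f$). Some version of this forcing is indispensable; merely searching for an $\SDec$ violation in $\varphi_e$'s unconstrained behaviour does not produce one.
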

\begin{proof}
We use the same language and definitions as in the proof of Theorem~\ref{thm:DecisiveLearning}.
The idea of this proof is as follows. We build a set $\CalL$ with a priority construction just as in the proof of Theorem~\ref{thm:DecisiveLearning}, the only essential change being in the definition of the hypothesis $p(e,t,\sigma)$: the change from $\varphi_e(\sigma)$ to $p(e,t,\sigma)$ and back to $\varphi_e(\sigma)$ on texts for $W_{\varphi_e(\sigma)}$ is what made $\CalL$ not decisively learnable. Thus, we will change $p(e,t,\sigma)$ to be a hypothesis for $W_{\varphi_e(\sigma)}$ as well -- \emph{as soon as $\varphi_e$ changed its hypothesis on an extension of $\sigma$}, and otherwise it is a hypothesis for $D_{e,\sigma}^t$ as before. This will make $h$ decisive on texts for $W_{\varphi_e(\sigma)}$, but $\varphi_e(\sigma)$ will not be strongly decisive.

Furthermore, we will make sure that for sequences with ID $s$, only conjectures for sets with ID $s$ are used, so that indecisiveness can only possibly happen within a segment. Now the last source of $\CalL$ not being decisively learnable is as follows. When different requirements take turns with being in control over the segment, they might introduce returns to abandoned conjectures. To counteract this, we make sure that any conjecture which is ever abandoned on a segment of ID $s$ is for $\natnum \setminus \{s\}$, which will give decisiveness.

We first define an alternative $p'$ for the function $p$ from that proof with the S-m-n Theorem such that, for all $e,t,\sigma$,
$$
W_{p'(e,t,\sigma)} = 
\begin{cases}
W_{\varphi_e(\sigma)},		&\mbox{if }\exists \tau \mbox{ with }\content(\tau) \subseteq D_{e,\sigma}^t: \varphi_e(\sigma \diamond \tau) \converges \neq \varphi_e(\sigma);\\
D_{e,\sigma}^t,						&\mbox{otherwise.}
\end{cases}
$$
As we have $D_{e,\sigma}^t \subseteq W_{\varphi_e(\sigma)}$, this is a valid application of the S-m-n Theorem.
We also want to replace the output of $h$ according to line~\ref{line:folowE} of Algorithm~\ref{alg:priorityArgumentDec}. To that end, let $g \in \CalR$ be as given by the S-m-n Theorem such that, for all $e$ and $\sigma$,
$$
W_{g(e,\sigma,y)} = W_{\varphi_e(\sigma)} \setminus \{y\}.
$$

We construct now a learner $h$ again according to a priority construction, as given in Algorithm~\ref{alg:priorityArgumentSDec}. Note that lines 1--\ref{line:ElseLine} are identical with the construction from Algorithm~\ref{alg:priorityArgumentDec} and lines 3--8 again make sure that we have an appropriate witness tuple and lines 9--11 try to establish the requirement $R_e$ via (i). The main difference lies in the way that $R_e$ is established once this fails in lines 12--18 via (ii): Here we need to check for a mind change and adjust what language $h$ should learn accordingly. 

\begin{algorithm}
\For{$t = 0$ \To $\infty$}{
	\For{$e = 0$ \To $t$}{
		\uIf{$t=0$, $w_e(t-1)$ is undefined or $w_e(t-1)$ is not $(e,t)$-legal}{
			Let $B$ be the set of IDs blocked by any $e' < e$\;
			$(x,y,\sigma)$ $\assign$ \findWitness$(B,e,t)$\;
		} \Else {
			$(x,y,\sigma)$ $\assign$ $w_e(t-1)$\;
		}
		$w_e(t)$ $\assign$ $(x,y,\sigma)$\;
		\uIf{$P_{e,t}(x)$}{
			\ForEach{$\tau \in \seq_{\leq t}$ with $\id(\tau) = x$}{
					$h(\tau)$ $\assign$ $q(x)$\;
			}
		}
		\Else{\label{line:ElseLine}
			\uIf{$\exists \tau \in \seq_{\leq t}(D^t_{e,\sigma}): \varphi_e(\sigma \diamond \tau)\converges_t \neq \varphi_e(\sigma)$}{
				\ForEach{$\tau \in \seq_{\leq t}$ with $\id(\tau) = y$}{
					$h(\tau)$\label{line:folowE2} $\assign$ $g(e,\sigma,y)$\;
				}
			} \Else{
				\ForEach{$\tau \in \seq_{\leq t}$ with $\content(\tau) = D^t_{e,\sigma}$}{
					$h(\tau)$\label{line:outputP2} $\assign$ $p'(e,t,\sigma)$\;
				}

			}
		}
	}
}
\caption{Priority Construction $\SDec$}\label{alg:priorityArgumentSDec}
\end{algorithm}

It is easy to check that $h$, on any sequence $\sigma$, gives conjectures for languages of the same ID as that of $\sigma$. Thus, indecisiveness of $h$ can only occur within a segment.

Next we will modify $h$ to avoid indecisiveness from different requirements taking turns controlling the same segment.
\ignore{
 To that end, we say that a hypothesis $a$ \emph{can be followed by $b$} iff
\begin{equation}\label{eq:allowedFollowing}
\exists e,t,\sigma: [a = p'(e,t,\sigma) \wedge b = g(e,\sigma)] \vee a=b.
\end{equation}
Note that this relation between hypotheses is decidable, thanks to the range of $p'$ and $g$ being decidable (and $p'$ and $g$ being 1-1). Intuitively, we only allow mind changes from $p'(e,t,\sigma)$ to $g(e,\sigma)$, but no other.
}
With the S-m-n Theorem we let $f \in \CalR$ be such that, for all $\sigma$,
$$
W_{f(\sigma)} = 
\begin{cases}
\natnum \setminus \{\id(\sigma)\},						&\mbox{if }\exists \tau \mbox{ with }\id(\sigma) \not\in \content(\tau):
h(\sigma) \neq h(\sigma \diamond \tau);\\
W_{h(\sigma)},			&\mbox{otherwise.}
\end{cases}
$$
Let $h'$ be such that, for all $\sigma$, 
$$
h'(\sigma) = 
\begin{cases}
h'(\sigma^-),			&\mbox{if }\sigma \neq \emptyset \mbox{ and } h(\sigma) = h(\sigma^-);\\
f(\sigma),				&\mbox{otherwise.}
\end{cases}
$$
We now let $\CalL = \Text\Gold\Dec\Ex(h')$. It is easy to see that $h'$ is decisive on all texts where it always makes an output, since indecisiveness can again only happen within a segment, and $f$ \emph{poisons} any possible non-final conjectures within a segment. 

Let a strongly decisive learner $\overline{h}$ for $\CalL$ be given which never makes a conjecture for $\natnum$ (we are reasoning with Lemma~\ref{lem:NotNatnum} again). Let $e$ be such that $\varphi_e = \overline{h}$. Reasoning as in the proof of Theorem~\ref{thm:DecisiveLearning}, we see that there is a triple $(x,y,\sigma)$ such that $w_e$ converges to that triple in the construction of $h'$. If, for all $t$, $P_{e,t}(x)$, then we have that $\natnum \setminus \{x\} \in \CalL$ (on any sequences with ID $x$, $h'$ gives an output for $\natnum \setminus \{x\}$, and it converges). Assume now that there is $t_0$ such that, for all $t \geq t_0$, we have $\neg P_{e,t}(x)$.

Case 1: There is $\tau$ with $\content(\tau) \subseteq D^t_{e,\sigma}$ such that $\varphi_e(\sigma \diamond \tau) \neq \varphi_e(\sigma)$.\\
Let $T$ be a text for $L = W_{\varphi_e(\sigma)}$. Then $h'$ on $T$ converges to an index for $L$, giving $L \in \CalL$. But this shows that $\overline{h} = \varphi_e$ was not strongly decisive on any text for $L$ starting with $\sigma \diamond \tau$, a contradiction.

Case 2: Otherwise.\\
Let $T$ be a text for $L = D^t_{e,\sigma}$. Then $h'$ on $T$ converges to an index for $L$, giving $L \in \CalL$. But $\overline{h} = \varphi_e$ converges on any text for $L$ starting with $\sigma$ to $\varphi_e(\sigma)$, a contradiction to $D^t_{e,\sigma} \subset W_{\varphi_e(\sigma)}$ (so the convergence is not to a correct hypothesis).

In both cases we get the desired contradiction.
\end{proof}

\section{Set-driven Learning}
\setboolean{useproof}{true}

\label{sec:SetDriven}

In this section we give theorems regarding set-driven learning. For this we build on the result that set-driven learning can always be done conservatively \cite{Kin-Ste:j:95:mon}.

\setboolean{useproof}{true}

First we show that any conservative set-driven learner can be assumed to be cautious and syntactically decisive, an important technical lemma.

\begin{lem}\label{thm:SdSyntDec}
We have  $$[\Text\Sd\Ex] = [\Text\Sd\Conv\SynDec\Ex].$$ 
In other words, every set-driven learner can be assumed syntactically decisive.
\end{lem}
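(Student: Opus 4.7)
The inclusion $[\Text\Sd\Conv\SynDec\Ex] \subseteq [\Text\Sd\Ex]$ is immediate from the definitions. For the reverse, the plan is to invoke the theorem of Kinber and Stephan~\cite{Kin-Ste:j:95:mon} to first assume our set-driven learner $h$ is conservative, and then modify it to an $\Sd$-learner $h'$ that additionally satisfies $\SynDec$ while remaining conservative and learning the same class.

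The intended modification is a padding construction $h'(D) = \pad(h(D), c(D))$ for a total computable $c$ depending only on $D$. The aim is to design $c$ so that (a) $c(D_1) = c(D_2)$ whenever $D_1 \subseteq D_2 \subseteq W_{h(D_1)}$ and $h(D_1) = h(D_2)$, which together with the conservativeness of $h$ yields conservativeness of $h'$; and (b) on any chain $D_i \subseteq D_j \subseteq D_k$ with $h(D_i) = h(D_k) = e$ but $h(D_j) \neq e$, the codes $c(D_i)$ and $c(D_k)$ differ, giving $\SynDec$. The canonical first candidate is $c(D) =$ the lex-smallest $D^* \subseteq D$ for which $h(D') = h(D)$ holds for every $D^* \subseteq D' \subseteq D$; such a $D^*$ always exists since $D$ itself qualifies.

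Granting this $c$, the verifications would run as follows. Set-drivenness of $h'$ is immediate. For learning: on any text for $L = W_e$, once $h$ stabilizes to $e$ at some stage $n_0$, the conservativeness of $h$ makes $D_{n_0}$ a valid anchor for every later $D_n \supseteq D_{n_0}$ with $D_n \subseteq L \subseteq W_e$, and since the lex-smallest valid anchor can only decrease in code as $D_n$ grows, it stabilizes; hence $h'$ converges to a correct padded index. For $\SynDec$: if $h'(D_i) = h'(D_k)$ on a chain $D_i \subseteq D_j \subseteq D_k$, then both the $h$-value and the anchor coincide at $i$ and $k$; the anchor's defining property then forces $h(D_j) = h(D_i)$, and a parallel argument at the $j$-level gives $c(D_j) = c(D_i)$, so $h'(D_j) = h'(D_i)$.

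The main obstacle is the conservativeness step. In the naive formulation $D_2$ can admit a lex-smaller valid anchor not contained in $D_1$: for example, if $D_1 = \{5\}$ is extended by the consistent element $0$ to $D_2 = \{0,5\}$ with $h(\{0\}) = h(\{5\}) = h(\{0,5\}) = e$, then the lex-smallest anchors are $\{5\}$ for $D_1$ and $\{0\}$ for $D_2$, so $h'(D_1) \neq h'(D_2)$ even though no mind change is allowed. Overcoming this is the crux, and the plan is to refine the anchor rule so that the choice is invariant under conservative extensions: for instance, restrict to anchors $D^*$ that also satisfy a compatibility condition across growing stage-$t$ approximations to $W_{h(D)}$ (so that a spurious new anchor in $D_2$ is ruled out because $D_1$ does not witness the same compatibility), or delay the commitment to an anchor via a Theorem~\ref{thm:Total}-style construction until enough of $W_{h(D)}$ has been enumerated to single out a stable canonical representative of the whole conservative region.
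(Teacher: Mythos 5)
There is a genuine gap, and it is exactly the one you flag yourself. The reduction steps are right (the easy inclusion, invoking Kinber--Stephan to assume $h$ conservative, and aiming for a padding construction $h'(D)=\pad(h(D),c(D))$), and your $\{5\}$ versus $\{0,5\}$ example correctly shows that any ``canonical anchor set'' rule is unstable under consistent extensions and so breaks conservativeness. But you then leave the crux unresolved: the two proposed repairs (a compatibility condition on stage-$t$ approximations of $W_{h(D)}$, or a Theorem~\ref{thm:Total}-style delay) are only sketches, and the delaying idea does not transfer cleanly to set-driven learners, which see no time parameter to delay against. Moreover, even for the naive anchor, the $\SynDec$ verification is shaky: on a chain $D_i\subseteq D_j\subseteq D_k$ with coinciding anchors at $i$ and $k$, the lex-smallest anchor of the intermediate $D_j$ can be strictly smaller than that of $D_k$ (the anchor condition for $D_j$ quantifies over fewer sets), so $c(D_j)=c(D_i)$ does not follow. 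So the proof is incomplete.

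The missing idea is to pad not with an anchor \emph{set} but with a single bit plus a size stamp: let $h'(C)=\pad(h(C),0)$ if $C$ is ``internally decisive'' (for every $D\subseteq C$ with $h(D)=h(C)$, every $D'$ with $D\subseteq D'\subseteq C$ satisfies $h(D')=h(D)$), and $h'(C)=\pad(h(C),|C|+1)$ otherwise. This sidesteps the instability: if $h'$ changes its mind while $h$ does not, the internal-decisiveness condition must fail at the new set $C$, so there are $D\subseteq D'\subseteq C$ with $h(D)=h(C)\neq h(D')$; conservativeness of $h$ (adding the elements of $D'\setminus D$ one at a time) then yields some $x\in D'\subseteq C$ with $x\notin W_{h(D)}=W_{h'(\text{previous set})}$, so the change is licensed. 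For $\SynDec$, a syntactic return $h'(C)=h'(E)$ with $C\subset E$ and $h'(C)\neq h'(D)$ for some intermediate $D$ forces both pad components to be $0$ (since $|C|+1\neq|E|+1$ and neither equals $0$); internal decisiveness of $E$ applied to $C\subseteq D\subseteq E$ then gives $h(D)=h(C)$ and propagates the pad-$0$ condition to $D$, contradicting $h'(C)\neq h'(D)$. The flag records exactly the information your anchor was meant to carry, but it is monotone in the one direction where stability is needed.
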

\begin{proof}
Let a set-driven learner $h$ be given. Following \cite{Kin-Ste:j:95:mon} we can $h$ assume to be conservative. We define a learner $h'$ such that, for all finite sets $C$,
\begin{align*}
h'(C) = \begin{cases} \text{pad}(h(C),0), & \text{if } \forall D \subseteq C : h(D) = h(C) \rightarrow\\
&\;\;\; \forall D', D \subseteq D' \subseteq C : h(D') = h(D); \\
\pad(h(C),|C|+1), & \text{otherwise.} \end{cases}
\end{align*}
Let $\CalL = \Text\Sd\Conv\Ex(h)$. We will show that $h'$ is syntactically decisive and $\Text\Sd\Conv\Ex$-learns $\CalL$. Let $L \in \CalL$ be given and let $T$ be a text for $L$. First, we show that $h'$ $\Text\Ex$-learns $L$ from $T$. As $h$ is a set driven learner there is $n_0$ such that $\forall n \geq n_0 : h(\content(T[n_0])) = h(\content(T[n]))$ and $W_{h(\content(T[n_0]))} = L$. We will show that, for all $T[n]$ with $n \geq n_0$, the first condition in the definition of $h'$ holds. Let $n \geq n_0$ and suppose there are $D$ and $D'$ with 
\begin{align*}
D &\subseteq \content(T[n]), \\
h(D) &= h(\content(T[n])) = h(\content(T[n_0]))
\end{align*}
and 
\begin{align*}
D &\subseteq D' \subseteq \content(T[n]), \\
h(D) &\neq h(D').
\end{align*}
As $W_{h(D)} = L$ and $h$ is conservative, $h$ must not change its hypothesis. Thus, for all $D'$ with $D \subseteq D' \subseteq L$ we get $h(D') = h(D)$, a contradiction. 

Thus we have, for all $n \geq n_0$, 
\begin{align*}
h'(\content(T[n])) &= h'(\content(T[n_0])) \\
&= \pad(h(\content(T[n_0])),0)
\end{align*} 
and $W_{h'(\content(T[n_0]))} = W_{\pad(h(\content(T[n_0])),0)} = L$, i.e.\ $h'$ $\Text\Gold\Ex$-learns $L$.

Second, we will show that $h'$ is conservative. Whenever $h$ makes a mind change, $h'$ will also make a mind change; as, for all $n$, $W_{h(\content(T[n]))} = W_{h'(\content(T[n]))}$, we have that $h'$ is conservative in these cases. Thus, we have to show that $h'$ is conservative whenever it changes its mind because the first condition in the definition does not hold. Let $n$ such that $$h'(\content(T[n])) \neq h'(\content(T[n-1]))$$ because the first condition in the definition of $h'$ is violated. Let $C = \content(T[n])$.
Thus, there are $D$ and $D'$ with $D \subseteq D' \subseteq C$ such that $h(D) = h(C)$ and $h(D') \neq h(C)$. We consider the case that $h(T[n]) = h(T[n-1])$ as otherwise $h'$ is obviously conservative. As $h$ is conservative we can conclude that there is $x \in D'$ such that $x \notin W_{h(D)}$. If not we could construct a text $T'$ with elements of $D$ on which $h$ would not be conservative. Thus there is $x \in D' \subseteq C$ such that 
$$
x \notin W_{h(C)} = W_{h(T[n])} = W_{h(T[n-1])} = W_{h'(T[n-1])}
$$
and therefore $h'$ is still conservative if it changes its mind. 

To show that $h'$ is syntactically decisive let $C \subseteq D \subseteq E$ such that $h'(C) \neq h'(D)$ and $h'(C) = h'(E)$. This implies that $C \subset E$. Thus $0 \neq |C| + 1 \neq |E|+1$ and therefore the second component in $\pad$ is different for $C$ and $E$. This implies that $h'(C) \neq h'(E)$ as $\pad$ is injective.
\end{proof}

\setboolean{useproof}{true}

The following Theorem is the main result of this section, showing that set-driven learning can be done not just conservatively, but also strongly decisively and cautiously \emph{at the same time}.

\begin{thm}\label{thm:SdConvCautSDec}
We have $$[\Text\Sd\Ex] = [\Text\Sd\Conv\SDec\Caut\Ex].$$ 
\end{thm}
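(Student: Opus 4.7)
The plan is to strengthen Lemma~\ref{thm:SdSyntDec} by taking the conservative, syntactically decisive set-driven learner $h$ of $\CalL$ it provides and further modifying it into a learner $h'$ that is additionally cautious and strongly decisive. The strategy mirrors the construction used in the Gold-style Theorem~\ref{thm:ConvInSDec}, adapted from sequences to finite sets.

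Concretely, for each finite set $C$ define
$$M(C) = \{D \subseteq C : \forall x \in D \; \Phi_{h(D)}(x) \leq |C|\},$$
the family of subsets of $C$ on which $h$'s conjecture has been seen to compute on every element of $D$ within $|C|$ steps. Using the coding of finite sets, chosen so that inclusion implies code $\leq$ (analogously to the assumption on sequences in Section~\ref{sec:MathPrelim}), pick a canonical $D^{\ast}(C) \in M(C)$ such that $D^{\ast}$ is monotone, i.e.\ $C_1 \subseteq C_2 \Rightarrow D^{\ast}(C_1) \subseteq D^{\ast}(C_2)$. Define $h'(C) = h(D^{\ast}(C))$, which is total and set-driven.

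I would then verify the following. First, $h'$ $\Ex$-learns $\CalL$: along any text for $L \in \CalL$, $h$ stabilizes to a correct index $e^{\ast}$ with $W_{e^{\ast}} = L$, and for $|C|$ sufficiently large all $\Phi_{e^{\ast}}(x)$ with $x \in C$ halt within $|C|$ steps, forcing $C \in M(C)$ and hence $D^{\ast}(C) = C$, so that $h'(C) = h(C)$. Second, $h'$ is conservative: if $C_1 \subseteq C_2$ and $C_2 \subseteq W_{h'(C_1)} = W_{h(D^{\ast}(C_1))}$, then by monotonicity $D^{\ast}(C_1) \subseteq D^{\ast}(C_2) \subseteq C_2 \subseteq W_{h(D^{\ast}(C_1))}$, and conservativeness of $h$ applied along $D^{\ast}(C_1) \subseteq D^{\ast}(C_2)$ yields $h(D^{\ast}(C_2)) = h(D^{\ast}(C_1))$, i.e.\ $h'(C_2) = h'(C_1)$. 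Third, as in the diagram of Theorem~\ref{thm:ConvInSDec}, each mind change of $h'$ between text positions $i < j$ is witnessed by a fresh target element $x \in L$ that lies in $W_{h'(C_j)}$ (and in all later conjectures) but not in $W_{h'(C_i)}$; this immediately rules out proper-subset returns (giving caution) and rules out semantic returns while the inherited syntactic decisiveness handles the equality of indices in between (giving strong decisiveness).

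The principal obstacle is the second step: defining $D^{\ast}$ so that the monotonicity $C_1 \subseteq C_2 \Rightarrow D^{\ast}(C_1) \subseteq D^{\ast}(C_2)$ actually holds. In Theorem~\ref{thm:ConvInSDec} this comes for free from the prefix order on sequences, but for finite sets the family $M(C)$ is only partially ordered by inclusion and may have several subset-maximal elements, so obtaining the analogous monotonicity requires a careful choice of representative (for instance, choosing $D^{\ast}(C)$ to be the largest element of $M(C)$ under a well-chosen coding that is simultaneously a refinement of inclusion and stable under enlarging $C$). Once this is arranged, the remaining verification is a routine set-driven translation of the Gold-style argument in Theorem~\ref{thm:ConvInSDec}.
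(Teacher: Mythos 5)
There are two genuine gaps here, one of which you flag and one of which you do not. The unflagged one is the more serious: your delaying mechanism uses $|C|$ as the time bound in $M(C)$, but in set-driven learning this bound stops growing. For a \emph{finite} language $L \in \CalL$, the input set along any text eventually stabilizes at $C = L$, so the bound $|C| = |L|$ is fixed forever; there is no reason that $\Phi_{h(L)}(x) \leq |L|$ should hold for all $x \in L$, hence $L \in M(L)$ may fail, $D^{\ast}(L)$ may be a proper subset of $L$, and $h'$ then converges to $h(D^{\ast}(L))$, which need not be an index for $L$. (In Theorem~\ref{thm:ConvInSDec} the analogous trick works only because $|\sigma| \to \infty$ along any text; the missing growing parameter is exactly what distinguishes $\Sd$ from $\Psd$.) The paper's proof avoids this by not delaying at all: it pushes the unbounded enumeration into the \emph{hypothesis} via the S-m-n Theorem, defining $W_{p(D,e)}$ as $D$ united with $\bigcup_{t \in \natnum} W_e^t$ for exactly those $t$ with $h(D \cup W_e^t) = e$, so the conjecture itself decides in the limit whether $h$ would stick with $e$, and the learner outputs $p(\min(N(D)), h(D))$ where $N(D)$ collects the subsets of $D$ on which $h$ already outputs $h(D)$.

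The gap you do flag --- the existence of a monotone selector $D^{\ast}$ with $C_1 \subseteq C_2 \Rightarrow D^{\ast}(C_1) \subseteq D^{\ast}(C_2)$ --- is also real, and your suggested repair does not work: even with a coding refining inclusion, a set with a larger code but incomparable to $\max(M(C_1))$ can enter $M(C_2) \setminus M(C_1)$, so taking maxima breaks monotonicity, and a greedy choice fails similarly because newly available small-coded sets reroute the selection. Since both your conservativeness argument and your ``fresh persistent element'' argument (for caution and strong decisiveness) lean on this monotonicity, the construction does not go through as stated. The paper obtains the persistence of witnesses differently: it first makes $h$ syntactically decisive (Lemma~\ref{thm:SdSyntDec}) and then proves a claim stating that for every $D' \subseteq D$ with $h(D') \neq h(D)$ and every $C \subseteq D$ with $h(C) = h(D)$ one has $C \setminus W_{h'(D')} \neq \emptyset$; this single claim, which exploits syntactic decisiveness rather than any monotone selection, yields conservativeness, caution and strong decisiveness simultaneously.
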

\begin{proof}
Following \cite{Kin-Ste:j:95:mon} we can assume a set-driven learner to be conservative.
Let $h$ and $\mathcal{L}$ be such that $h$ \textbf{TxtSdConvEx}-learns $\mathcal{L}$ and suppose that $h$ is syntactically decisive using Lemma \ref{thm:SdSyntDec}. We define a function $p$ using the S-m-n Theorem such that, for every set $D$ and $e$,
$$W_{p(D,e)} = D \bigcup_{t \in \natnum} \begin{cases} W_e^t, & \text{if } h(D \cup W_e^t) = e; \\
	\emptyset, & \text{otherwise.} \end{cases}$$
We define a function $N$ such that, for any finite set $D$,
\begin{align*}
N(D) = \{ D' \subseteq D\ |\ &h(D) = h(D')\}.\end{align*}
We define $h'$, for all finite sets $D$,  as
$$
h'(D) = p(\min(N(D)), h(D))
$$
Let $L \in \mathcal{L}$ be given and let $T$ be a text for $L$. 
We first show that $h'$ $\Text\Sd\Ex$-learns $L$ from $T$. 
As $h$ \textbf{TxtSdEx}-learns $L$ we know that $h$ is strongly locking on $T$ (this was shown in~\cite{Cas-Koe:c:10:colt}). Thus there is $n_0$ such that $T[n_0]$ is a locking sequence. Let $D' \subseteq \content(T[n_0])$ be minimal with $h(D') = h(\content(T[n_0]))$.
Thus we have, for all $n \geq n_0$, $\min(N(\content(T[n]))) = D'$. From the construction of $p$ and $h$ syntactically decisive we get 
$$W_{p(D',h(D'))} = W_{h(D')}.$$
This shows that $h'$ $\Text\Sd\Ex$-learns $L$.

Next we show the following claim. \begin{claim}\label{claim:SynDecConcl}
$\forall D\ (\forall D' \subseteq D\ |\ D' \notin N(D))\ \forall C \in N(D) : C\backslash W_{h'(D')} \neq \emptyset.$
\end{claim}
\begin{claimProof}
As $h$ is syntactically decisive we have that, for all $D''$ with $D' \subseteq D'' \subseteq D$, $h(D') = h(D'') = h(D).$ Therefore we get 
$$h(D') \neq h(D' \cup C).$$
Suppose, by way of contradiction, $C \subseteq W_{h'(D')}$. This implies that there is $t$ such that $C \subseteq D' \cup W_{h(D')}^t$ with $h(D' \cup W_{h(D')}^t) = h(D')$, according to the definitions of $h'$ and $p$. But, as $D' \subseteq D' \cup C \subseteq D' \cup W_{h(D')}^t$, this is a contradiction to $h$ being syntactically decisive.
\end{claimProof}

Let $i \leq j$ be such that $h'(\content(T[i])) \neq h'(\content(T[j]))$. To increase readability we let $D_0 = \content(T[i])$ and $D_1 = \content(T[j])$.
As $h$ is syntactically decisive, $h'$ only changes its mind if $h$ changed its mind before. Thus we have $h(D_0) \neq h(D_1).$ 
As $D_0 \subseteq D_1$ and $D_0 \notin N(D_1)$ we get from Claim~\ref{claim:SynDecConcl} (with $C= D = D_1$ and $D' = D_0$) that $$D_1 \backslash W_{h'(D_0)} \neq \emptyset.$$ 
This shows that $h'$ is conservative.
We will now show that $$W_{h'(D_1)} \nsubseteq W_{h'(D_0)},$$ as this implies that $h'$ is cautious and strongly decisive.

From the construction of $h'$ we get that there is $B \subseteq D_1$ with $h(B) = h(D_1)$ such that $h'$ is consistent on $B$, i.e.\ $B \subseteq W_{h'(D_1)}.$ Using Claim~\ref{claim:SynDecConcl} again (this time with $C = B$, $D = D_1$ and $D' = D_0$), we see that there is 
$$x \in B \backslash W_{h'(D_0)} \subseteq W_{h'(D_1)} \backslash W_{h'(D_0)},$$
which shows that $W_{h'(D_0)} \not\subseteq W_{h'(D_1)}$.
      \end{proof}

\section{Monotone Learning}
\setboolean{useproof}{true}

\label{sec:Monotone}

In this section we show the hierarchies regarding monotone and strongly monotone learning, simultaneously for the settings of $\Gold$ and $\Sd$ in Theorems~\ref{thm:SMon} and~\ref{thm:WMonNotMon}. With Theorems~\ref{thm:NatnumSDec} and~\ref{thm:MonInSDec} we establish that monotone learnabilty implies strongly decisive learnability.

\setboolean{useproof}{true}

\begin{thm}\label{thm:SMon}
There is a language $\CalL$ that is $\Text\Sd\Mon\WMon\Ex$-learnable but not $\Text\Gold\SMon\Ex$-learnable, i.e.
$$[\Text\Sd\Mon\WMon\Ex] \backslash [\Text\Gold\SMon\Ex] \neq \emptyset.$$
\end{thm}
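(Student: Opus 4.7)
My plan is to exhibit the class
$$
\CalL = \{L_\infty\} \cup \{L_n : n \in \natnum\},
$$
where $L_\infty = \{2k : k \in \natnum\}$ is the set of even numbers and $L_n = \{0, 2, 4, \ldots, 2n\} \cup \{2n+1\}$ is the finite language consisting of the first $n+1$ even numbers together with the odd number $2n+1$. I will verify that $\CalL$ is $\Text\Sd\Mon\WMon\Ex$-learnable and is not $\Text\Gold\SMon\Ex$-learnable.

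For the positive direction, I define the set-driven learner $h$ by: on input set $D$, if $D$ contains an odd number of the form $2n+1$ then output an index for $L_n$, and otherwise output an index for $L_\infty$. On any text for $L_\infty$, no odd number ever appears, so $h$ constantly outputs $L_\infty$, and all conditions hold trivially. On a text for $L_n$, the learner outputs $L_\infty$ until $2n+1$ first appears, then switches to $L_n$ and remains there; convergence is immediate. Mon is verified at the transition by observing that $L_\infty \cap L_n = \{0, 2, \ldots, 2n\}$ is contained in $L_n = L_n \cap L_n$. WMon holds because once $2n+1$ is in the data, $\content(T[j]) \not\subseteq L_\infty$ and the WMon-implication becomes vacuous.

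For the negative direction, I argue by contradiction. Suppose a learner $h$ $\Text\Gold\SMon\Ex$-learns $\CalL$; by Corollary~\ref{cor:SinkLocking} I may assume $h$ is total and strongly locking. On any text for $L_\infty$, there is then a locking sequence $\sigma$ of $h$ on $L_\infty$ with $W_{h(\sigma)} = L_\infty$ and $\content(\sigma) \subseteq L_\infty$, so $\content(\sigma)$ is a finite set of even numbers. I pick $n$ large enough that $2n \geq \max(\content(\sigma))$, giving $\content(\sigma) \subseteq \{0, 2, \ldots, 2n\} \subseteq L_n$, and extend $\sigma$ to a text $T$ for $L_n$ by appending $2n+1$ and the remaining elements of $L_n$. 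On this extended text $h$ must converge to an index for $L_n$, but strong monotonicity forces $L_\infty = W_{h(\sigma)} \subseteq W_{h(T[j])} = L_n$ for sufficiently large $j$, contradicting the fact that $L_\infty$ is infinite while $L_n$ is finite.

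The main obstacle is to verify Mon and WMon simultaneously for the set-driven learner despite the transition from an infinite to a finite hypothesis. The design of $\CalL$ handles this cleanly because the transition is triggered by the odd element $2n+1$, which simultaneously rules out $L_\infty$ as a consistent hypothesis on the continuing text (so the WMon-implication fails at the transition) and ensures that the elements dropped from the conjecture, namely the evens at least $2n+2$, lie outside the target $L_n$ (so Mon is preserved).
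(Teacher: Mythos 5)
Your proposal is correct and follows essentially the same route as the paper: the same class (evens together with the finite languages $\{0,2,\ldots,2n,2n+1\}$), the same set-driven learner switching on the first odd datum, and the same locking-sequence argument for the negative direction. The only cosmetic difference is that the paper's learner picks the \emph{least} odd datum to be well-defined on arbitrary finite sets, a detail you should add.
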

\begin{proof} This is a standard proof which we include for completeness.
Let $L_k = \{0, 2, 4, \dots, 2k, 2k+1\}$ and $\CalL = \{2\natnum\}\cup\{L_k\ |\ k \in \natnum\}$.
Let $e$ such that $W_e = 2\natnum$ and $p$ using the S-m-n Theorem such that, for all $k$,  
$$W_{p(k)} = L_k.$$ 
We first show that $\CalL$ is $\Text\Sd\Mon\WMon\Ex$-learnable. We let a learner $h$ such that, for all $\sigma$, 
$$h(\content(\sigma)) = \begin{cases} e, & \text{if every } x \in \content(\sigma) \text{ is even;} \\ p(y), & \text{if } y \text{ is the least odd datum in } \content(\sigma). \end{cases}$$
Let $L_k \in \CalL$ and $T$ be a text for $L_k$. Thus, there is $n_0$ such that $T(n_0-1) = 2k+1$ and any element in $\content(T[n_0-1])$ is even. Then, we have, for all $n \geq n_0$, $h(\content(T[n_0])) = h(\content(T[n]))$ and $W_{h(t[n_0])} = W_{p(k)} = L_k$. It is easy to see that $h$ makes exactly one mind change on $T$ and this is at $n_0$. We have $W_e \cap \content(T)$ is a subset of $W_{p(k)} \cap \content(T)$ as $\{0,2, \dots, 2k\} \subseteq L_k$. Thus $h$ is monotone. Additionally $h$ is weakly monotone as it change its mind only if the first time a odd element is presented in the text and the previous hypotheses are $2\natnum$. \par 
Now, suppose that there is $h' \in \CalR$ and $h'$ $\Text\Gold\SMon\Ex$-learns $\CalL$. Let $\sigma$ be a locking sequence of $h'$ on $2\natnum$ and $k$ such that, for all $x \in \content(\sigma), x \leq 2k+1$. We let $T$ be a text for $L_k$ starting with $\sigma$. As $2\natnum \nsubseteq L_k$ we have that $h'$ is not strongly monotone on $T$ or $h$ does not $\Text\Gold\Ex$-learns $L_k$ from $T$. 
\end{proof}

\setboolean{useproof}{true}

\begin{thm}\label{thm:WMonNotMon}
There is $\CalL$ such that $\CalL$ is $\Text\Sd\WMon\Ex$-learnable but not $\Text\Gold\Mon\Ex$-learnable.
\end{thm}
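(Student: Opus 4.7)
The plan is to exhibit the class
\[
\CalL = \{\{j\} : j \in \natnum\} \cup \{\natnum \setminus \{k\} : k \in \natnum\},
\]
the collection of all singletons and all co-singletons of $\natnum$.

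For the easy direction, I define a set-driven learner $h$ by $h(D) = $ an index for $D$ when $|D| \leq 1$, and $h(D) = $ an index for $\natnum \setminus \{\min(\natnum \setminus D)\}$ when $|D| \geq 2$. On a text for $\{j\}$ the content is always $\{j\}$ and $h$ converges immediately. On a text for $\natnum \setminus \{k\}$, for all sufficiently large $n$ we have $\{0,\dots,k-1\} \subseteq \content(T[n])$ with $k$ never appearing, so $\min(\natnum \setminus \content(T[n])) = k$ and $h$ converges. Hence $\CalL \in [\Text\Sd\Ex]$, and by \cite{Kin-Ste:j:95:mon} this equals $[\Text\Sd\WMon\Ex]$.

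For the hard direction, suppose for contradiction $h \in \CalR$ is a $\Text\Gold\Mon\Ex$-learner for $\CalL$, and fix any target $L = \natnum \setminus \{k^*\}$. Direct calculation from $\Mon$ yields three structural facts: once $h$ outputs $H$ with $L \subseteq W_H$ (e.g.\ $W_H = \natnum$), every later conjecture $H'$ must satisfy $L \subseteq W_{H'}$, forcing $W_{H'} \in \{\natnum, \natnum \setminus \{k^*\}\}$; a transition $\natnum \to \natnum \setminus \{k\}$ is monotone on $L$ iff $k = k^*$; and a transition $\natnum \setminus \{k_1\} \to \natnum \setminus \{k_2\}$ with $k_1 \neq k_2$ is monotone iff $k_2 = k^*$. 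Consequently every co-singleton conjecture $\natnum \setminus \{k\}$ with $k \neq k^*$ produced by $h$ must lie in an initial ``phase 1'' of its learning sequence (before any hypothesis containing $L$ is output), and all such wrong co-singleton conjectures must share the same $k$.

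Using the recursiveness of $h$, I complete the argument by a priority-style adversarial construction (in the spirit of Theorems~\ref{thm:DecisiveLearning} and~\ref{thm:StronglyDecisiveLearning}) that builds a text $T$ and target $L = \natnum \setminus \{k^*\}$ on which $h$ is forced into two \emph{distinct} wrong co-singleton commits $\natnum \setminus \{k_1\}$ and $\natnum \setminus \{k_2\}$ with $k_1, k_2 \neq k^*$. In outline: in each of two stages the adversary picks a trap value $k_i$ not yet occurring in $T$, extends $T$ with a canonical prefix of a text for $\natnum \setminus \{k_i\}$ long enough that $h$ is forced to output $\natnum \setminus \{k_i\}$ (as it must, in order to converge on the full text for $\natnum \setminus \{k_i\}$), and then reveals $k_i$ in $T$. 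Choosing $k^*$ at the end as any value not in $\content(T) \cup \{k_1, k_2\}$ then violates the structural constraints above. The main obstacle is formalising the adversary's ability to force the second wrong commit; this follows from the computability of $h$ combined with the convergence requirement on every $L \in \CalL$, and parallels the priority constructions already appearing in Section~\ref{sec:Decisiveness}.
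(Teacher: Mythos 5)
Your proposal is correct, but it uses a genuinely different witness class from the paper. The paper takes $\CalL = \{2\natnum\} \cup \{L_k : k \in \natnum\}$ with $L_k = \{x : x \leq 2k+1\}$ and derives the contradiction from three nested locking sequences (for $2\natnum$, $L_k$, $L_{k+1}$), the single datum $2k+2$ witnessing the monotonicity violation; your class of singletons and co-singletons works just as well, and your structural analysis of which co-singleton transitions are $\Mon$-permissible is sound (on a text for $\natnum\setminus\{k^*\}$, any two co-singleton conjectures $\natnum\setminus\{k_1\}$, $\natnum\setminus\{k_2\}$ with $k_1 \neq k_2$ occurring in that order force $k_2 = k^*$). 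One remark: the step you flag as the ``main obstacle'' --- forcing the second wrong commit --- needs no priority argument and no appeal to the computability of $h$. Having forced an index for $\natnum\setminus\{k_1\}$ on a prefix $\sigma_1$ of a text for $\natnum\setminus\{k_1\}$, you pick $k_2 \notin \content(\sigma_1)\cup\{k_1\}$ and note that $\sigma_1 \diamond k_1$ followed by an enumeration of $\natnum\setminus\{k_2\}$ is a text for $\natnum\setminus\{k_2\}$; convergence of $h$ on that text immediately yields a finite prefix $\sigma_2$ with $W_{h(\sigma_2)} = \natnum\setminus\{k_2\}$, and any $k^*$ outside $\content(\sigma_2)\cup\{k_1,k_2\}$ finishes the argument. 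This is exactly the same text-extension/locking-sequence mechanism the paper uses, so the two proofs are of comparable difficulty; yours additionally requires the small case analysis of permissible transitions, while the paper's nested $L_k \subset L_{k+1}$ structure makes the violating datum visible at a glance. Your positive direction (explicit set-driven learner, then conservativeness of set-driven learning via Kinber--Stephan, hence weak monotonicity) is also fine and parallels the paper's, which instead verifies $\WMon$ for its concrete learner directly.
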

\begin{proof} This is a standard proof which we include for completeness.
Let $L_k = \{x\ |\ x \leq 2k+1\}$ and $\CalL = \{2\natnum\} \cup \{L_k\ |\ k \in \natnum\}$.
Let $e$ such that $W_e = 2\natnum$ and $p$ using the S-m-n Theorem such that, for all $k$,  $$W_{p(k)} = L_k.$$ We define, for all $\sigma$, a learner $h$ such that
$$h(\content(\sigma)) = \begin{cases} e, & \text{if every element in } \content(\sigma) \text{ is even;} \\ p(y), &\text{else, } y \text{ is the maximal odd element in } \content(\sigma). \end{cases}$$
Let $L_k \in \CalL$ and a $T$ be a text for $L_k$. Then, there is $n_0$ such that $2k+1 \in \content(T[n_0])$ for the first time. Thus we have that for all $n \geq n_0, h(\content(T[n_0])) = h(\content(T[n]))$ and $W_{h(\content(T[n_0]))} = W_{p(k)} = L_k$. Obviously $h$ learns $L_k$ weakly mononote as the learner only change its mind if a greater odd element appears in the text. \par 
Suppose now there is a learner $h' \in \CalR$ such that $h'$ $\Text\Gold\Mon\Ex$-learns $\CalL$. Let $\sigma$ be a locking sequence of $h'$ on $2\natnum$ and $k$ such that, for all $x \in \content(\sigma)$, $x \leq 2k+1$. Let $\sigma' \supseteq \sigma$ a locking sequence of $h'$ on $L_{k}$ and $T$ be a text for $L_{k+1}$ starting with $\sigma'$. Let $\sigma'' \supseteq \sigma'$ be a locking sequence of $h'$ on $L_{k+1}$. Then, we have
\begin{align*}
W_{h'(\sigma)} &= 2\natnum; \\
W_{h'(\sigma')} &= L_k ; \\
W_{h'(\sigma'')} &= L_{k+1}.
\end{align*}
As the datum $2k+2$ is in $2\natnum$ and in $L_{k+1}$ but not in $L_k$, $h'$ is not monotone on the text $T$ for $L_{k+1}$. 
\end{proof}

\setboolean{useproof}{true}

The following theorem is an extension of a theorem from~\cite{Bal-Cas-Mer-Ste-Wie:j:08}, where the theorem has been shown for decisive learning instead of strongly decisive learning.

\begin{thm}\label{thm:NatnumSDec}
Let $\natnum \in \CalL$ and $\CalL$ be $\Text\Gold\Ex$-learnable. Then, we have $\CalL$ is $\Text\Gold\SDec\Ex$-learnable.
\end{thm}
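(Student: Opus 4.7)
The plan is to employ the \emph{poisoning} technique announced in the introduction for this theorem. First, by Corollary~\ref{cor:SinkLocking}, we may replace the given $\Text\Gold\Ex$-learner of $\CalL$ by a total, syntactically decisive, strongly locking one, which we continue to call $h$. We will build $h'$ so that at every mind change of $h$ the new conjecture is either equivalent to the current hypothesis of $h$ (when that hypothesis has been ``verified locking'') or is poisoned to become an index for $\natnum$; since $\natnum \in \CalL$ the poisoned fallback is itself in $\CalL$, which is what makes the construction safe.

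Concretely, using the S-m-n Theorem I would define an injective $p \in \CalR$ such that $W_{p(\sigma)}$ is enumerated stagewise: at stage $t$, include the elements of $W_{h(\sigma)}^t$ as long as we have not yet observed any $\rho$ with $\content(\rho) \subseteq W_{h(\sigma)}^t$, $|\rho| \leq t$, and $h(\sigma \diamond \rho) \neq h(\sigma)$; once such a $\rho$ appears, switch permanently to enumerating all of $\natnum$. Then $W_{p(\sigma)} = W_{h(\sigma)}$ exactly when $\sigma$ is a locking sequence for $h$, and $W_{p(\sigma)} = \natnum$ otherwise. Finally define
$$
h'(\sigma) = \begin{cases} p(\sigma), & \text{if } \sigma = \emptyset \text{ or } h(\sigma) \neq h(\sigma^-); \\ h'(\sigma^-), & \text{otherwise,} \end{cases}
$$
so that $h'$ updates syntactically only when $h$ makes a mind change.

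The explanatory correctness and syntactic decisiveness are the routine parts. Strong locking of $h$ supplies $n_0$ such that $T[n_0]$ is a locking sequence on $L$; thus $W_{p(T[n_0])} = L$ and $h'(T[n]) = p(T[n_0])$ for all $n \geq n_0$, giving $\Ex$-convergence. Syntactic decisiveness follows from the injectivity of $p$ together with $h$'s syntactic decisiveness, since the prefixes at which $h'$ updates are all distinct. The main obstacle is strong (semantic) decisiveness: the trajectory of $h'$ on $T$, restricted to $h$'s mind-change points $\sigma_0 \subsetneq \cdots \subsetneq \sigma_m$, consists of values each of whose language is either $\natnum$ or $W_{h(\sigma_k)}$, and one must rule out forbidden patterns like $\natnum, W, \natnum$ or $W, W', W$ with the same semantic endpoint. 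Getting the poisoning condition strong enough to force every non-terminal $\sigma_k$ to be poisoned to $\natnum$ (using the fact that, on a text for $L$ with $L \neq W_{h(\sigma_k)}$, the text itself eventually witnesses an extension triggering a mind change that the enumeration can detect) while leaving the terminal locking $\sigma_m$ untouched is the technical heart of the argument, and is exactly where the hypothesis $\natnum \in \CalL$ plays its essential role.
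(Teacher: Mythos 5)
Your overall strategy (poison the non-locking conjectures, keep the locking ones) is indeed the paper's, but the choice of $\natnum$ as the poison value is precisely what cannot work for \emph{strong} decisiveness, and the difficulty you flag at the end is not a patchable technicality --- it is where the proposal breaks. Taking $j=k$ in the definition of $\SDec$ shows that any two semantically equal conjectures occurring anywhere in the learning sequence must already be syntactically identical. Hence if two distinct prefixes ever get poisoned (your $p$ is injective, so these are distinct indices for $\natnum$), or if even a single prefix gets poisoned on a text for $\natnum$ itself (and $\natnum \in \CalL$ is part of the hypothesis, so the final, unpoisoned conjecture there is also an index for $\natnum$), strong decisiveness is already violated; collapsing all poison to one fixed index for $\natnum$ repairs the first situation but not the second. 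Moreover, your plan to ``force every non-terminal $\sigma_k$ to be poisoned'' is not realizable: the enumeration of $W_{p(\sigma)}$ can only test extensions of $\sigma$ by elements of $W_{h(\sigma)}$ (it cannot consult the target $L$), so an intermediate wrong conjecture on which $h$ never changes its mind with data from $W_{h(\sigma_k)}$, but which is later abandoned because of target data outside $W_{h(\sigma_k)}$, remains unpoisoned and non-terminal.

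The paper uses the hypothesis $\natnum \in \CalL$ in a different place. From a locking sequence of $h$ on $\natnum$ it extracts a uniformly enumerable family $(L_i)_{i \in \natnum}$ of sets such that the relevant finite variants of distinct $L_i$ are pairwise distinct and none of them is $\Text\Gold\Ex$-learned by $h$. The conjecture at stage $\sigma$ is then poisoned to $L_{|\sigma|} \cup \content(\sigma)$ rather than to $\natnum$ (and the learner only moves to a new conjecture once it has been verified consistent with the data that triggered it). This makes distinct poisoned conjectures semantically distinct from one another and from every unpoisoned conjecture --- the latter being languages $h$ learns --- which is exactly the pairwise semantic disjointness that $\SDec$ requires and that poisoning with the single set $\natnum$ cannot provide. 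You would need to introduce such a family of pairwise distinct, unlearnable poison sets for your argument to go through.
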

\begin{proof}
Let $h$ be a learner in Fulk normal form such that $h$ $\Text\Gold\Ex$-learns $\CalL$ with $\natnum \in \CalL$. As $h$ is strongly locking on $\CalL$ there is a locking sequence of $h$ on $\natnum$. Using this locking sequence we get an uniformly enumerable sequence $(L_i)_{i\in \natnum}$ of languages such that,
\begin{enumerate}
\item for $i \neq j$ and $L \supseteq L_i$, $L' \supseteq L_j$ with $L_i =^* L$, $L_j =^* L'$, $L \neq L'$;
\item for all $L \supseteq L_i$ with $L_i =^* L$, $L \notin \CalL$.
\end{enumerate}
We define a set $N(\sigma)$ such that, for every $\sigma$,
$$N(\sigma) = L_{|\sigma|} \cup \content(\sigma).$$

We define, for all $\sigma$, a set $M(\sigma)$ such that
$$M(\sigma) = \{\lambda
\} \cup \{\tau\ |\ \tau \subseteq \sigma\ \land\ h(\tau) \neq h(\tau^-)\ \land\ \forall x \in \content(\tau) : \Phi_{h(\tau)}(x) \leq \left|\sigma\right| \}.$$ 
Using the S-m-n Theorem we get a function $p \in \CalR$ such that, for all $\sigma$,
$$
W_{p(\sigma)} = \bigcup_{t \in \mathbb{N}} 
\begin{cases} 
W_{h(\sigma)}^t, 			& \text{if } \forall \rho \in W_{h(\sigma)}^t : h(\sigma) = h(\sigma \diamond \rho); \\
N(\sigma), 					& \text{otherwise.} 
\end{cases}
$$

We will use the $p(\sigma)$ as hypotheses. Note that any hypothesis $p(\sigma)$ is either semantically equivalent to $h(\sigma)$ or, if $\sigma$ is not a locking sequence of $h$ for any language, $p(\sigma)$ is an index for a finite superset of $L_{\sigma}$. In the latter case we call the hypothesis $p(\sigma)$ \emph{poisoned}.

We define a learner $h'$ such that, for all $\sigma$, 
$$h'(\sigma) = p(\max(M(\sigma))).$$

Let $L \in \CalL$ and $T$ be a text for $L$. As $h$ is strongly locking and $h$ $\Text\Gold\Ex$-learns $\CalL$ there is $n_0$ such that, for all $\sigma \in \seq(L)$, $h(T[n_0]) = h(T[n_0] \diamond \sigma)$ and $W_{h(T[n_0])} = L$. Thus, there is $n_1 > n_0$ such that, for all $x \in \content(T[n_0])$, $\Phi_{h(T[n_0])}(x) \leq n_1$. This implies that, for all $n \geq n_1$, $h'(T[n_1]) = h'(T[n])$ and 
$$W_{h'(T[n_1])} = W_{p(\max(M(T[n_1])))} = \bigcup_{t \in \mathbb{N}} W_{h(T[n_0])}^t = L.$$

Next, we will show that $h'$ is strongly decisive. Suppose there are $i \leq j \leq k$ such that $W_{h'(T[i])} = W_{h'(T[k])}$ and $h'(T[i]) \neq h'(T[j])$. From the construction of the learner $h'$ we get $h(T[i]) \neq h(T[j])$.

\textit{Case 1:} $h'(T[i])$ is \emph{not} a poisoned hypothesis.
Independently of whether $h'(T[k])$ is poisoned or not, there is $\sigma \subseteq T[k]$ such that $\content(\sigma) \subseteq W_{h'(T[k])}$. ($T[k]$ if the hypothesis is poisend, $\max(M(T[k]))$ otherwise.) As $h'(T[i])$ is not poisened and $h(T[i]) \neq h(T[k])$ we get through the  construction of $p$ that $\content(\sigma) \nsubseteq W_{h'(T[i])}$. 
Thus, we have $W_{h'(T[i])} \neq W_{h'(T[k])}$, a contradiction.

\textit{Case 2:} $h'(T[i])$ \emph{is} poisoned. Thus, we have $T[i] \subseteq W_{h'(T[i])}$.

\textit{Case 2.1:} $h'(T[k])$ is \emph{not} poisoned. Thus, $T[k]$ is a locking sequence on $h$ for a language $L \in \Text\Gold\Ex(h)$ and $W_{h'(T[k])} \in \Text\Gold\Ex(h)$. As $h'(T[i])$ is poisoned we have $W_{h'(T[i])} \notin \Text\Gold\Ex(h)$. Thus, we get $W_{h'(T[i])} \neq W_{h'(T[k])}$, a contradiction.

\textit{Case 2.2:} $h'(T[k])$ \emph{is} poisoned. As $T[i] \subset T[k]$ and $N(T[i]) =^* W_{h'(T[i])}$ and $N(T[k]) =^* W_{h'(T[k])}$ we have $W_{h'(T[i])} \neq W_{h'(T[k])}$.

\end{proof}

\setboolean{useproof}{true}

\begin{thm}\label{thm:MonInSDec}
We have that any monotone $\Text\Gold\Ex$-learnable class of languages is strongly decisive learnable, while the converse does not hold, i.e.
$$[\Text\Gold\Mon\Ex] \subset [\Text\Gold\SDec\Ex].$$
\end{thm}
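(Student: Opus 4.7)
The plan is to address the inclusion $[\Text\Gold\Mon\Ex] \subseteq [\Text\Gold\SDec\Ex]$ and its strictness separately.

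For strictness, I would invoke Theorem~\ref{thm:WMonNotMon}, which provides a class $\CalL$ that is $\Text\Sd\WMon\Ex$-learnable but not $\Text\Gold\Mon\Ex$-learnable. Since set-driven learners are in particular Gold-style learners (reading only the content of the current prefix), $\CalL$ lies in $[\Text\Gold\WMon\Ex]$, and Corollary~\ref{cor:WMonInSDec} then places it in $[\Text\Gold\SDec\Ex]$. This yields a class in $[\Text\Gold\SDec\Ex] \setminus [\Text\Gold\Mon\Ex]$, establishing strictness.

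For the inclusion, I would case-split on whether $\natnum \in \CalL$. In the easy sub-case $\natnum \in \CalL$, monotone learnability implies $\Text\Gold\Ex$-learnability, and Theorem~\ref{thm:NatnumSDec} applies directly to yield $\Text\Gold\SDec\Ex$-learnability; notably, monotonicity is not used in this sub-case. In the other sub-case $\natnum \notin \CalL$, I would adapt the poisoning construction from the proof of Theorem~\ref{thm:NatnumSDec} by using the single language $\natnum$ itself as the poison---safe because $\natnum \notin \CalL$ guarantees that a hypothesis for $\natnum$ cannot coincide with any correct target hypothesis. Concretely, I would take a monotone learner $h$ in Fulk normal form (Theorem~\ref{thm:FulkNormalForm}), define via the S-m-n Theorem a program $p$ such that $W_{p(\sigma)}$ enumerates $W_{h(\sigma)}$ stage by stage and permanently switches to enumerating all of $\natnum$ as soon as a witness $\rho$ with $\content(\rho) \subseteq W_{h(\sigma)}^t$ and $|\sigma \diamond \rho| \leq t$ satisfying $h(\sigma \diamond \rho) \neq h(\sigma)$ is discovered. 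With $M(\sigma)$ defined as in the proof of Theorem~\ref{thm:NatnumSDec}, the new learner $h'(\sigma)$ would return a fixed canonical index $e_\natnum$ whenever such a poison-triggering witness has already been detected at $\max(M(\sigma))$ by time $|\sigma|$, and $p(\max(M(\sigma)))$ otherwise. Correctness of $\Text\Gold\Ex$-learning on each $L \in \CalL$ then follows the pattern from Theorem~\ref{thm:NatnumSDec} once $h$ has settled on its locking hypothesis and $\max(M(T[n]))$ has stabilized.

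The main obstacle will be verifying the strong decisiveness of $h'$ on texts for each $L \in \CalL$. The delicate situation is when two outputs $h'(T[i])$ and $h'(T[k])$ are both semantically equal to the poison $\natnum$: by construction both are syntactically the fixed index $e_\natnum$, but I must then check that every intermediate $h'(T[j])$ also syntactically equals $e_\natnum$. The hard part will be ruling out that the poison ``unfires'' at some intermediate chosen prefix $\max(M(T[j]))$; for this I would combine the monotonicity of the operator $M(\sigma)$ in $\sigma$ with the persistence of a detected mind-change witness under growth of the time bound, and use the monotonicity of $h$ itself to control how new witnesses propagate as more data is seen. The remaining case, in which both endpoints correspond to an unpoisoned output at a common locking-like prefix, is handled exactly as in Theorem~\ref{thm:NatnumSDec}.
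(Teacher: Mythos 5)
Your strictness argument is correct and, if anything, more explicit than the paper's: Theorem~\ref{thm:WMonNotMon} gives a class in $[\Text\Gold\WMon\Ex]\setminus[\Text\Gold\Mon\Ex]$ (a set-driven learner induces a Gold learner with the same learning sequences), and Corollary~\ref{cor:WMonInSDec} places it in $[\Text\Gold\SDec\Ex]$. The sub-case $\natnum\in\CalL$ of the inclusion is also fine.

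The gap is in the sub-case $\natnum\notin\CalL$, and it is a genuine one. Using the \emph{single} set $\natnum$ as the poison for every abandoned conjecture defeats the purpose of poisoning: all poisoned conjectures become semantically equal, so $\SDec$ (taking $j=k$ in its definition) forces them to be \emph{syntactically} equal and forbids anything in between. Your construction cannot guarantee this, because whether $\max(M(\sigma))$ is poisoned is only semi-decidable: generically $h'$ first outputs $p(\tau)$ while the mind-change witness is still undiscovered, and only later switches to the canonical index $e_\natnum$ once the witness appears. At that moment the learning sequence contains two syntactically distinct indices $p(\tau)\neq e_\natnum$ with $W_{p(\tau)}=W_{e_\natnum}=\natnum$, which already violates $\SDec$ on a text for a language of $\CalL$ (this happens whenever the first conjecture of $h$ gets revised on data from its own conjectured set). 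This is exactly why Theorem~\ref{thm:NatnumSDec} poisons with a family $(L_i)$ of pairwise semantically distinct sets, none of which is almost contained in a member of $\CalL$; that family is built from a locking sequence on $\natnum$ and is not available when $\natnum\notin\CalL$. Note also that your argument never actually uses monotonicity, which should be a warning sign, since $[\Text\Gold\Ex]\not\subseteq[\Text\Gold\SDec\Ex]$ by Theorem~\ref{thm:DecisiveLearning}.

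The paper's dichotomy is different and is the one that works: split on whether $\CalL$ is \emph{dense} (contains a superset of every finite set). If $\CalL$ is dense, monotonicity is upgraded to strong monotonicity --- any failure $x\in W_{h(T[i])}\setminus W_{h(T[j])}$ can be embedded into a text for some $L'\in\CalL$ with $x\in L'$, contradicting $\Mon$ --- and then Corollary~\ref{cor:WMonInSDec} applies. If $\CalL$ is not dense, some finite $D$ has no superset in $\CalL$, so $\CalL\cup\{\natnum\}$ is still $\Text\Gold\Ex$-learnable (switch to an index for $\natnum$ upon seeing $D$), and Theorem~\ref{thm:NatnumSDec} finishes. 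You would need to replace your second sub-case by one of these ideas; as written, the construction fails.
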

\begin{proof}
Let $h \in \CalR$ be a learner and $\CalL = \Text\Gold\Mon\Ex(h)$. We distinguish the following two cases. We call $\CalL$ \emph{dense} iff it contains a superset of every finite set.

\textit{Case 1:} $\CalL$ is dense. We will show now that $h$ $\Text\Gold\SMon\Ex$-learns the class $\CalL$. Let $L \in \CalL$ and $T$ be a text for $L$. Suppose there are $i$ and $j$ with $i < j$ such that $W_{h(T[i])} \nsubseteq W_{h(T[j])}$. Thus, we have $W_{h(T[i])}\backslash W_{h(T[j])} \neq \emptyset$. Let $x \in W_{h(T[i])}\backslash W_{h(T[j])}$. As $\CalL$ is dense there is a language $L' \in \CalL$ such that $\content(T[j]) \cup \{x\} \in L'$. Let $T'$ be a text for $L'$ and $T''$ be such that $T'' = T[j] \diamond T'$. Obviously, $T''$ is a text for $L'$. 
We have that $x \in W_{h(T''[i])}$ but $x \notin W_{h(T''[j])}$ which is a contradiction as $h$ is monotone. 
Thus, $h$ $\Text\Gold\SMon\Ex$-learns $\CalL$, which implies that $h$ $\Text\Gold\WMon\Ex$-learns $\CalL$. Using Corollary~\ref{cor:WMonInSDec} we get that $\CalL$ is $\Text\Gold\SDec\Ex$-learnable.

\textit{Case 2:} $\CalL$ is not dense. Thus, $\CalL' = \CalL \cup \natnum$ is $\Text\Gold\Ex$-learnable. Using Theorem~\ref{thm:NatnumSDec} $\CalL'$ is $\Text\Gold\SDec\Ex$-learnable and therefore so is $\CalL$.

Note that $[\Text\Gold\SDec\Ex] \subseteq [\Text\Gold\Mon\Ex]$ does not hold as in \textit{Case 1} with Corollary~\ref{cor:WMonInSDec} a proper subset relation is used.

\end{proof}

\bibliographystyle{alpha}

\newcommand{\etalchar}[1]{$^{#1}$}

\end{document}